\documentclass{article}

\usepackage{microtype}
\usepackage{graphicx}
\usepackage{subfigure}
\usepackage{booktabs} 
\usepackage[inline]{enumitem}

\usepackage{amsmath}
\usepackage{amsthm}
\usepackage[normalem]{ulem}
\usepackage{amssymb}
\usepackage{bbm}
\usepackage{mathtools}
\usepackage{color}
\usepackage{adjustbox}
\usepackage{multirow}
\usepackage{wrapfig}
\usepackage{xcolor}

\definecolor{mydarkblue}{rgb}{0,0.08,0.45}
\usepackage[colorlinks=true,
            linkcolor=mydarkblue,
            citecolor=mydarkblue,
            filecolor=mydarkblue,
            urlcolor=mydarkblue]{hyperref}
\usepackage{cleveref}
\usepackage{url}

\crefname{theorem}{\text{Theorem}}{\text{Theorems}}
\crefname{corollary}{\text{Corollary}}{\text{Corollary}}

\DeclareMathOperator*{\argmax}{arg\,max}

\usepackage[final]{neurips_2019}

\usepackage[utf8]{inputenc} 
\usepackage[T1]{fontenc}    
\usepackage{hyperref}       
\usepackage{url}            
\usepackage{booktabs}       
\usepackage{amsfonts}       
\usepackage{nicefrac}       
\usepackage{microtype}      

\renewcommand{\cite}{\citep}

\setlength\floatsep{1\baselineskip plus 3pt minus 2pt}
\setlength\textfloatsep{1\baselineskip plus 3pt minus 2pt}
\setlength\intextsep{1\baselineskip plus 3pt minus 2 pt}

\title{A Convex Relaxation Barrier to Tight Robustness Verification of Neural Networks
}

\author{%
  Hadi Salman\thanks{Work done as part of the \href{https://www.microsoft.com/en-us/research/academic-program/microsoft-ai-residency-program/}{Microsoft AI Residency Program}.}\\
  Microsoft Research AI\\
  \texttt{hadi.salman@microsoft.com} 
  \And
  Greg Yang \\
  Microsoft Research AI \\
  \texttt{gregyang@microsoft.com}
  \And
  Huan Zhang \\
  UCLA \\
  \texttt{huan@huan-zhang.com}
  \And
  Cho-Jui Hsieh \\
  UCLA \\
  \texttt{chohsieh@cs.ucla.edu}
  \And
  Pengchuan Zhang \\
  Microsoft Research AI \\
  \texttt{penzhan@microsoft.com}
}

\begin{document}

\maketitle

\newcommand{\tail}[2]{#1_{\overline{[#2]}}}
\newcommand{\abs}[1]{|#1|}
\newcommand{\tabs}[1]{\left|#1\right|}
\newcommand{\wh}{\widehat}
\newcommand{\wt}{\widetilde}
\newcommand{\ov}{\overline}
\newcommand{\eps}{\epsilon}
\newcommand{\N}{\mathcal{N}}
\newcommand{\R}{\mathbb{R}}
\newcommand{\volume}{\mathrm{volume}}
\newcommand{\RHS}{\mathrm{RHS}}
\newcommand{\LHS}{\mathrm{LHS}}
\newcommand{\bone}{\mathbf{1}}
\renewcommand{\i}{\mathbf{i}}
\newcommand{\norm}[1]{\left\lVert#1\right\rVert}
\renewcommand{\varepsilon}{\epsilon}
\renewcommand{\tilde}{\wt}
\renewcommand{\hat}{\wh}
\renewcommand{\R}{\mathbb{R}}
\renewcommand{\N}{\mathcal{N}}
\newcommand*{\rom}[1]{\expandafter\@slowromancap\romannumeral #1@}

\newcommand{\x}[1]{{x^{#1}}}
\newcommand{\tidx}[1]{{\tilde{x}^{#1}}}
\newcommand{\X}[1]{{X^{#1}}}
\newcommand{\tidX}[1]{{\tilde{X}^{#1}}}
\newcommand{\upx}[1]{{\overline{x}^{#1}}}
\newcommand{\lwx}[1]{{\underline{x}^{#1}}}
\newcommand{\z}[1]{z^{#1}}
\newcommand{\tidz}[1]{{\tilde{z}^{#1}}}
\newcommand{\Z}[1]{{Z^{#1}}}
\newcommand{\tidZ}[1]{{\tilde{Z}^{#1}}}
\newcommand{\upz}[1]{\overline{z}^{#1}}
\newcommand{\lwz}[1]{\underline{z}^{#1}}
\newcommand{\lwL}[1]{{\underline{L}^{#1}}}
\newcommand{\upL}[1]{{\overline{L}^{#1}}}
\newcommand{\xo}{{x_0}}
\newcommand{\W}[1]{\mathbf{W}^{#1}}
\newcommand{\DD}[1]{\mathbf{D}^{#1}}
\newcommand{\Lam}[1]{\bm{\Lambda}^{#1}}
\newcommand{\upbias}[1]{\mathbf{T}^{#1}}
\newcommand{\lwbias}[1]{\mathbf{H}^{#1}}
\newcommand{\upbnd}[1]{\bm{u}^{#1}}
\newcommand{\lwbnd}[1]{\bm{l}^{#1}}
\newcommand{\y}{\bm{y}}
\newcommand{\bias}[1]{{b}^{#1}}
\newcommand{\setA}{\mathcal{A}}
\newcommand{\setIpos}[1]{\mathcal{I}^{+}_{#1}}
\newcommand{\setIneg}[1]{\mathcal{I}^{-}_{#1}}
\newcommand{\setIuns}[1]{\mathcal{I}_{#1}}
\newcommand{\set}[1]{\mathcal{#1}}
\newcommand{\Lipsloc}{L_{q,x_0}^j}

\newcommand{\gradl}[1]{\tilde{\bm{l}}^{#1}}
\newcommand{\gradu}[1]{\tilde{\bm{u}}^{#1}}
\newcommand{\gradC}[1]{\mathbf{C}^{#1}}
\newcommand{\gradL}[1]{\mathbf{L}^{#1}}
\newcommand{\gradU}[1]{\mathbf{U}^{#1}}
\newcommand{\gradLp}[1]{\mathbf{L'}^{#1}}
\newcommand{\gradUp}[1]{\mathbf{U'}^{#1}}
\newcommand{\Y}[1]{\mathbf{Y}^{#1}}
\newcommand{\Yp}[1]{\mathbf{Y'}^{#1}}

\newcommand{\Au}[1]{\mathbf{\Lambda}^{#1}}
\newcommand{\Al}[1]{\mathbf{\Omega}^{#1}}
\newcommand{\Du}[1]{\mathbf{\lambda}^{#1}}
\newcommand{\Dl}[1]{\mathbf{\omega}^{#1}}
\newcommand{\Ball}{\mathbb{B}_{p}(\xo,\epsilon)}
\newcommand{\Ph}[1]{\Phi_{#1}}
\newcommand{\upslp}[2]{\mathbf{\alpha}^{#1}_{U,{#2}}}
\newcommand{\lwslp}[2]{\mathbf{\alpha}^{#1}_{L,{#2}}}
\newcommand{\upicp}[2]{\mathbf{\beta}^{#1}_{U,{#2}}}
\newcommand{\lwicp}[2]{\mathbf{\beta}^{#1}_{L,{#2}}}

\newcommand{\gradupbnd}[1]{\bm{u'}^{#1}}
\newcommand{\gradlwbnd}[1]{\bm{l'}^{#1}}
\newcommand{\M}[1][]{\mathbf{M}^{#1}}
\newcommand{\setYp}[1]{\mathcal{T}^{+}_{#1}}
\newcommand{\setYn}[1]{\mathcal{T}^{-}_{#1}}
\newcommand{\setYo}[1]{\mathcal{T}_{#1}}
\newcommand{\A}{\mathbf{A}}
\newcommand{\B}{\mathbf{B}}

\newcommand{\hatW}[1]{\widehat{\mathbf{W}}^{#1}}
\newcommand{\checkW}[1]{\widecheck{\mathbf{W}}^{#1}}

\newcommand{\setS}{\mathcal{S}}
\newcommand{\CalC}{\mathcal{C}}
\newcommand{\CalD}{\mathcal{D}}
\newcommand{\CalA}{\mathcal{A}}
\newcommand{\CalB}{\mathcal{B}}

\newcommand{\dist}{\mathrm{dist}}

\newtheorem{lemma}{Lemma}[section]
\newtheorem{theorem}{Theorem}[section]
\newtheorem{proposition}[theorem]{Proposition}
\newtheorem{corollary}[theorem]{Corollary}
\newtheorem{remark}{Remark}[section]
\newtheorem{definition}[theorem]{Definition}
\newtheorem{assumption}[theorem]{Assumption}
\newtheorem{problem}[theorem]{Problem}

\newcommand\gy[1]{\textcolor{red}{[GY: #1]}}
\newcommand\pz[1]{\textcolor{blue}{[PZ: #1]}}
\newcommand\hz[1]{\textcolor{green}{[HZ: #1]}}


\newcommand\lpd{\textsc{LP-greedy}}
\newcommand\lpp{\textsc{LP-last}}
\newcommand\lpo{\textsc{LP-all}}

\begin{abstract}
Verification of neural networks enables us to gauge their robustness against adversarial attacks.
Verification algorithms fall into two categories: \textit{exact} verifiers that run in exponential time and \textit{relaxed} verifiers that are efficient but incomplete.
In this paper, we unify all existing LP-relaxed verifiers, to the best of our knowledge, under a general convex relaxation framework. This framework works for neural networks with diverse architectures and nonlinearities and covers both primal and dual views of neural network verification.
Next, we perform large-scale experiments, amounting to more than 22 CPU-years, to obtain exact solution to the convex-relaxed problem that is optimal within our framework for ReLU networks.
We find the exact solution does not significantly improve upon the gap between PGD and existing relaxed verifiers for various networks trained normally or robustly on MNIST and CIFAR datasets.
Our results suggest there is an inherent \emph{barrier} to tight verification for the large class of methods captured by our framework.
We discuss possible causes of this barrier and potential future directions for bypassing~it. Our code and trained models are available at \url{http://github.com/Hadisalman/robust-verify-benchmark}.
\end{abstract}

\section{Introduction}
\label{sec:intoroduction}

\newcommand{\myopt}{\mathcal{O}}

A classification neural network $f: \R^n \to \R^K$ (where $f_i(x)$ should be thought of as the $i$th logit) is considered \emph{adversarially robust} with respect to an input $x$ and its neighborhood $\setS_{in}(x)$ if 
\begin{equation}\label{eq:rv}
    \min_{x' \in \setS_{in}(x), i \ne i^*} f_{i^*} (x) - f_i(x') > 0, \quad
    \text{where} \quad i^* = \argmax_j f_j(x).
\end{equation}

\vspace{-1em}
Many recent works have proposed robustness verification methods by lower-bounding  \cref{eq:rv}; the positivity of this lower bound proves the robustness w.r.t. $\setS_{in}(x)$.
A dominant approach thus far has tried to relax \cref{eq:rv} into a convex optimization problem, from either the primal view~\cite{zhang2018crown,gehr2018ai,singh2018fast,weng2018towards} or the dual view~\cite{wong2018provable,dvijotham2018dual,wang2018efficient}.
In \textbf{our first main contribution}, we propose a layer-wise convex relaxation framework that unifies these works and reveals the relationships between them (Fig.~\ref{fig:main}). We further show that the performance of methods within this framework is subject to a theoretical limit: the performance of the optimal layer-wise convex relaxation.

This then begs the question: is the road to fast and accurate robustness verification paved by just faster and more accurate layer-wise convex relaxation that approaches the theoretical limit?
In our {\bf second main contribution}, we answer this question in the \emph{negative}.
We perform extensive experiments with deep ReLU networks to compute the optimal layer-wise convex relaxation and compare with the LP-relaxed dual formulation from \citet{wong2018provable}, the PGD attack from \citet{madry2017towards}, and the mixed integer linear programming  (MILP) exact verifier from \citet{tjeng2018evaluating}.

Over different models, sizes, training methods, and datasets (MNIST and CIFAR-10), we find that (i) in terms of lower bounding the \textit{minimum $l_\infty$ adversarial distortion}\footnote{The radius of the largest $l_\infty$ ball in which no adversarial examples can be found.}, the optimal layer-wise convex relaxation only slightly improves the lower bound found by \citet{wong2018provable}, especially when compared with the upper bound provided by the PGD attack, which is consistently 1.5 to 5 times larger; (ii) in terms of upper bounding the \textit{robust error}, the optimal layer-wise convex relaxation does not significantly close the gap between the PGD lower bound (or MILP exact answer) and the upper bound from \citet{wong2018provable}.
Therefore, there seems to be an inherent barrier blocking our progress on this road of layer-wise convex relaxation, and we hope this work provokes much thought in the community on how to bypass it.

\begin{figure}
    \centering
  \includegraphics[width=0.65\textwidth]{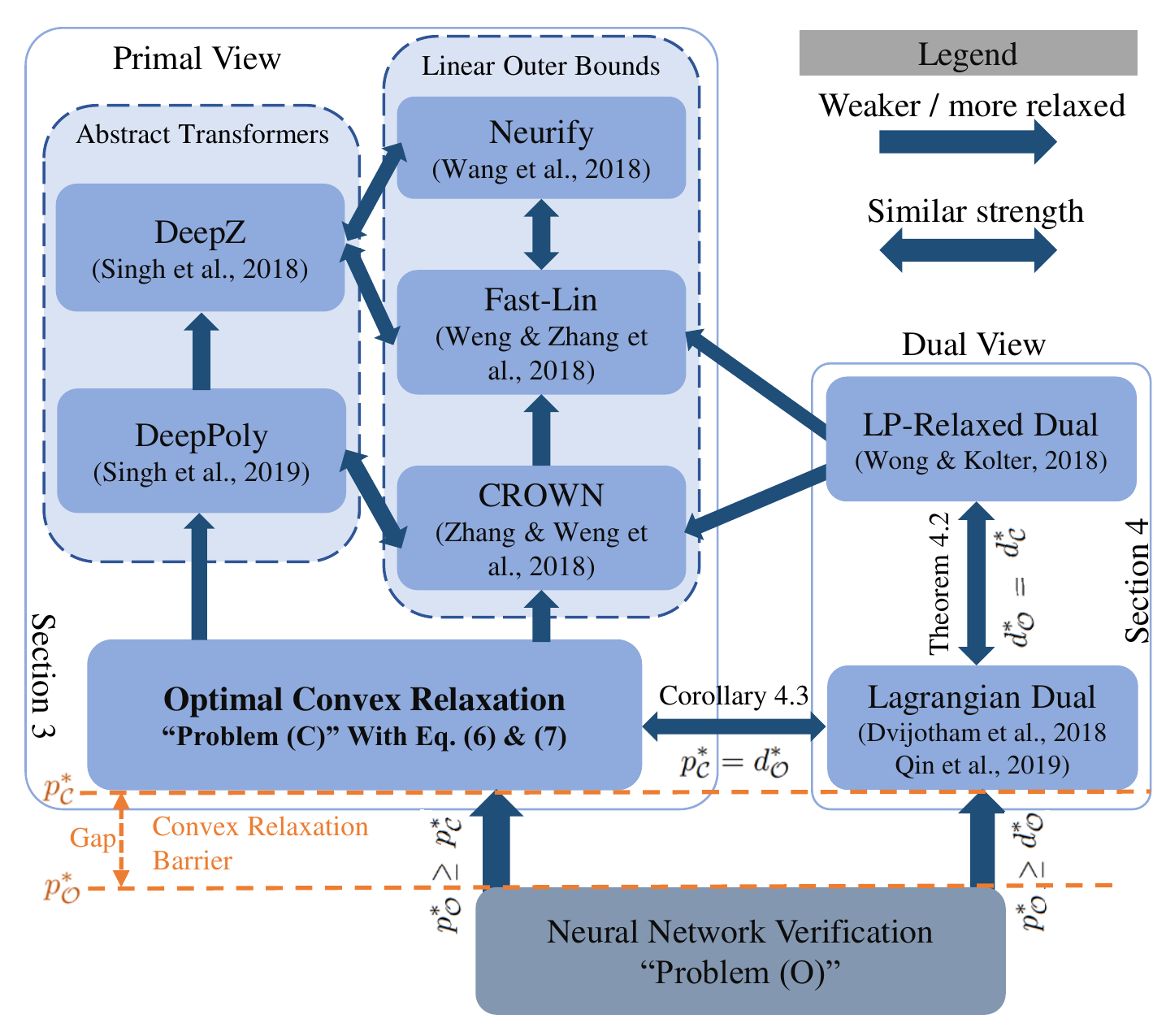}
  \label{fig:main}
  \caption{Relationship between existing relaxed algorithms and our framework. See Appendix~\ref{sec:greedy} for detailed discussions of each unlabeled arrow from the ``Primal view'' side.}
\end{figure}

\section{Preliminaries and Related Work}

\paragraph{Exact verifiers and NP-completeness.} For ReLU networks (piece-wise linear networks in general), exact verifiers solve the robustness verification problem~\eqref{eq:rv} by typically employing MILP solvers \cite{cheng2017maximum,lomuscio2017approach,dutta2018output,fischetti2017deep,tjeng2018evaluating,xiao2018training} or Satisfiability Modulo Theories (SMT) solvers \cite{scheibler2015towards,katz2017reluplex,carlini2017ground,ehlers2017formal}. However, due to the NP-completeness for solving such a problem \cite{katz2017reluplex,weng2018towards}, it can be really challenging to scale these to large networks.
It can take Reluplex~\cite{katz2017reluplex} several hours to find the minimum distortion of an example for a ReLU network with 5 inputs, 5 outputs, and 300 neurons. A recent work by \citet{tjeng2018evaluating} uses MILP to exactly verify medium-size networks, but the verification time is very sensitive to how a network is trained;  for example, it is fast for networks trained using the LP-relaxed dual formulation of \citet{wong2018provable}, but much slower for normally trained networks. A concurrent work by \citet{xiao2018training} trains networks with the objective of speeding up the MILP verification problem, but this compromises on the performance of the network.

\paragraph{Relaxed and efficient verifiers.} These verifiers solve a relaxed, but more computationally efficient, version of \eqref{eq:rv}, and have been proposed from different perspectives.
From the primal view, one can relax the nonlinearity in \eqref{eq:rv} into linear inequality constraints. This perspective has been previously explored as in the framework of ``abstract transformers''~ \citep{singh2018fast,singh2019abstract,Singh2019robustness,gehr2018ai,mirman2018differentiable}, via linear outer bounds of activation functions~\citep{zhang2018crown,weng2018towards,wang2018mixtrain,wang2018efficient}, or via interval bound propagation~\citep{gowal2018effectiveness,mirman2018differentiable}. From the dual view, one can study the dual of the relaxed problem \citep{wong2018provable,wong2018scaling} or study the dual of the original nonconvex verification problem \citep{dvijotham2018dual,dvijotham2018training,qin2018verification}.
In this paper, we unify both views in a common convex relaxation framework for NN verification, clarifying their relationships (as summarized in Fig.~\ref{fig:main}).

\citet{raghunathan2018semidefinite} formulates the verification of ReLU networks as a quadratic programming problem and then relaxes and solves this problem with a semidefinite programming (SDP) solver. 
While our framework does not cover this SDP relaxation, it is not clear to us how to extend the SDP relaxed verifier to general nonlinearities, for example max-pooling, which can be done in our framework on the other hand.
Other verifiers have been proposed to certify via an intermediary step of bounding the local Lipschitz constant \citep{hein2017formal,weng2018towards,raghunathan2018certified,zhang2018recurjac}, and others have used \textit{randomized smoothing} to certify with high-probability \citep{lecuyer2018certified, li2018second, cohen2019certified, salman2019provably}.
These are outside the scope of our framework.

Combining exact and relaxed verifiers, hybrid methods have shown some effectiveness \cite{bunel2018unified, Singh2019robustness}.
In fact, many exact verifiers also use relaxation as a subroutine to speed things up, and hence can be viewed as hybrid methods as well.
In this paper, we are not concerned with such techniques but only focus on \textit{relaxed verifiers}.

\section{Convex Relaxation from the Primal View}
\paragraph{Problem setting.}
In this paper, we assume that the neighborhood $\setS_{in}(x^{\text{nom}})$ is a convex set. An example of this is $\setS_{in}(x^{\text{nom}}) = \{x: \|x - x^{\text{nom}}\|_{\infty} \le \epsilon\}$, which is the constraint on $x$ in the $\ell_\infty$ adversarial attack model. We also assume that $f(x)$ is an $L$-layer feedforward NN. For notational simplicity, we denote $\{0, 1, \dots, L-1\}$ by $[L]$ and $\{\x{(0)}, \x{(1)}, \dots, \x{(L-1)}\}$ by $\x{[L]}$. We define $f(x)$ as,
\begin{equation}\label{eq:nn}
\begin{aligned}
    \x{(l+1)} =\sigma^{(l)}(\W{(l)} \x{(l)} + \bias{(l)}) \quad \forall l \in [L],\quad
    \text{and} \quad f(x) \coloneqq \z{(L)} =  \W{(L)} \x{(L)} + \bias{(L)},
\end{aligned}
\end{equation}
where $\x{(l)}\in \R^{n^{(l)}}$, $\z{(l)}\in \R^{n_z^{(l)}} $ , $\x{(0)}\coloneqq x \in \R^{n^{(0)}}$ is the input, $\W{(l)} \in \R^{n_z^{(l)} \times n^{(l)}}$ and $\bias{(l)} \in \R^{n_z^{(l)}}$ are the weight matrix and bias vector of the $l^{\text{th}}$ linear layer, and $\sigma^{(l)}: \R^{n_z^{(l)}} \to \R^{n^{(l+1)}}$ is a (nonlinear) activation function like (leaky-)ReLU, the sigmoid family (including sigmoid, arctan, hyperbolic tangent, etc), and the pooling family (MaxPool, AvgPool, etc). Our results can be easily extended to networks with convolutional layers and skip connections as well, similar to what is done in \citet{wong2018scaling}, as these can be seen as special forms of~\eqref{eq:nn}.

Consider the following optimization problem $\myopt(c, c_0, L, \lwz{[L]}, \upz{[L]})$:
\begin{equation}\label{eq:rvbnds}
\begin{aligned}
    \min_{(\x{[L+1]}, \z{[L]}) \in \CalD} \quad & c^{\top} \x{(L)} + c_0 \\
    \text{s.t.}\quad & \z{(l)} = \W{(l)} \x{(l)} + \bias{(l)}, l \in [L], \\
    & \x{(l+1)}=\sigma^{(l)}(\z{(l)}), l \in [L],
\end{aligned}
    \tag{$\myopt$}
\end{equation}
where the optimization domain $\CalD$ is
the set of activations and preactivations $\{\x{(0)}, \x{(1)}, \dots, \x{(L)}, \z{(0)}, \z{(1)}, \dots, \z{(L-1)}\}$   satisfying the bounds $\underline{z}^{(l)} \le z^{(l)} \le \overline z^{(l)}~ \forall l \in [L]$, i.e.,

\begin{equation}\label{def:CalD}
\begin{aligned}
    \CalD = \big\{(\x{[L+1]}, \z{[L]}) :{} &\x{(0)} \in \setS_{in}(x^{\text{nom}}),
    &\lwz{(l)} \le \z{(l)} \le \upz{(l)}, l \in [L] \big\}.
\end{aligned}
\end{equation}

If $c^{\top} = \W{(L)}_{i^{\text{nom}},:} - \W{(L)}_{i,:}$, $c_0 = \bias{(L)}_{i^{\text{nom}}} - \bias{(L)}_{i}$,  $\lwz{[L]} = - \infty$, and $\upz{[L]} = \infty$, then \eqref{eq:rvbnds} is equivalent to problem~\eqref{eq:rv}.
However, when we have better information about valid bounds $\lwz{[l]}$ and $\upz{[l]}$ of $\z{[l]}$, we can significantly narrow down the optimization domain and, as will be detailed shortly, achieve tighter solutions when we relax the nonlinearities.
We denote the minimal value of $\myopt(c, c_0, L, \lwz{[L]}, \upz{[L]})$ by $p^*(c, c_0, L, \lwz{[L]}, \upz{[L]})$, or just $p_{\myopt}^*$ when no confusion arises.

\paragraph{Obtaining lower and upper bounds $(\lwz{[L]}, \upz{[L]})$ by solving sub-problems.} 
This can be done by \textit{recursively} solving \eqref{eq:rvbnds} with specific choices of $c$ and $c_0$, which is a common technique used in many works~\cite{wong2018provable,dvijotham2018dual}. For example, one can obtain $\underline{z}_j^{(\ell)}$, a lower bound of $\z{(\ell)}_j$, by solving $\myopt(\W{(\ell)}_{j,:}{}^\top, \bias{(\ell)}_{j}, \ell, \lwz{[\ell]}, \upz{[\ell]})$; this shows that one can estimate $\lwz{(l)}$ and $\upz{(l)}$ inductively in $l$.
However, we may have millions of sub-problems to solve because practical networks can have millions of neurons. Therefore, it is crucial to have \textit{efficient} algorithms to solve \eqref{eq:rvbnds}.

\paragraph{Convex relaxation in the primal space.} Due to the nonlinear activation functions $\sigma^{(l)}$, the feasible set of  \eqref{eq:rvbnds} is nonconvex, which leads to the NP-completeness of the neural network verification problem \cite{katz2017reluplex,weng2018towards}. One natural idea is to do convex relaxation of its feasible set. Specifically, one can relax the nonconvex equality constraint $\x{(l+1)}=\sigma^{(l)}(\z{(l)})$ to convex inequality constraints, i.e.,
\begin{equation}\label{eq:rv_cmu}
\small
\begin{aligned}
    \min_{(\x{[L+1]}, \z{[L]}) \in \CalD} c^{\top} \x{(L)} + c_0 \quad
    \text{s.t.} \quad \z{(l)} = \W{(l)} \x{(l)} + \bias{(l)}, 
    \underline{\sigma}^{(l)}(\z{(l)}) \le \x{(l+1)} \le \overline{\sigma}^{(l)}(\z{(l)}), \forall l \in [L],
\end{aligned}
    \tag{$\CalC$}
\end{equation}
where $\underline{\sigma}^{(l)}(z)$ ( $\overline{\sigma}^{(l)}(z)$) is convex (concave) and satisfies  $\underline{\sigma}^{(l)}(z) \le \sigma^{(l)}(z) \le \overline{\sigma}^{(l)}(z)$ for $\lwz{(l)} \le z \le \upz{(l)}$.
We denote the feasible set of \eqref{eq:rv_cmu} by $\setS_{\mathcal{C}}$ and its minimum by $p^*_{\mathcal{C}}$. Naturally, we have that $\setS_{\mathcal{C}}$ is convex and $p^*_{\mathcal{C}} \le p_{\myopt}^*$.
For example, \citet{ehlers2017formal} proposed the following relaxations for the ReLU function $\sigma_{ReLU}(z) = \max(0,z)$ and MaxPool $\sigma_{MP}(z) = \max_k z_k$:
\begin{gather}
    \underline{\sigma}_{ReLU}(\z{}) = \max(0, \z{}), \quad \overline{\sigma}_{ReLU}(\z{}) = \tfrac{\upz{}}{\upz{} - \lwz{}} \left(\z{} - \lwz{}\right),
        \label{eq:relu_ehlers}
        \\
    \underline{\sigma}_{MP}(\z{}) = \max_k \z{}_k \ge \sum_k (z_k-\upz{}_k) + \max_k \upz{}_k, \quad  \overline{\sigma}_{MP}(\z{}) = \sum_k (z_k + \lwz{}_k) - \max_k \lwz{}_k.
        \label{eq:maxpool_ehlers}
\end{gather}

\paragraph{The optimal layer-wise convex relaxation.}
As a special case, we consider the optimal layer-wise convex relaxation, where 
\begin{equation}\label{def:underoversigma}
\begin{aligned}
    \text{$\underline{\sigma}_{\text{opt}}(z)$ is the greatest convex function majored by $\sigma$},
        \\
    \text{$\overline{\sigma}_{\text{opt}}(z)$ is the smallest concave function majoring $\sigma$}.
\end{aligned}
\end{equation}
A precise definition can be found in \eqref{def:underoversigma_app} in Appendix~\ref{sec:optimal_convex}. In Fig.~\ref{fig:relax}, we show the optimal convex relaxation for several common activation functions. It is easy to see that \eqref{eq:relu_ehlers} is the optimal convex relaxation for ReLU, but \eqref{eq:maxpool_ehlers} is not optimal for the MaxPool function.
Under mild assumptions (non-interactivity as defined in \cref{defn:nonInteractivity}), the optimal convex relaxation of a nonlinear layer $\x{}=\sigma(\z{})$, i.e., its convex hull, is simply $\underline{\sigma}_{\text{opt}}(z) \le x \le \overline{\sigma}_{\text{opt}}(z)$ (see \cref{prop:hull_layer}). We denote the corresponding optimal relaxed problem as $\CalC_{\text{opt}}$, with its objective $p^*_{\CalC_{\text{opt}}}$.

 \enlargethispage{\baselineskip}

We emphasize that by \textit{optimal}, we mean the optimal convex relaxation of the \textit{single} nonlinear constraint $\x{(l+1)}=\sigma^{(l)}(\z{(l)})$ (see Proposition~\eqref{prop:hull_layer}) instead of the optimal convex relaxation of the nonconvex feasible set of the original problem~\eqref{eq:rvbnds}.
As such, techniques as in \cite{anderson2018strong,raghunathan2018semidefinite} are outside our framework; see \cref{sec:outOfScopeRelaxations} for more discussions.

\begin{figure}[t]
  \centering
  \includegraphics[width=0.6\textwidth]{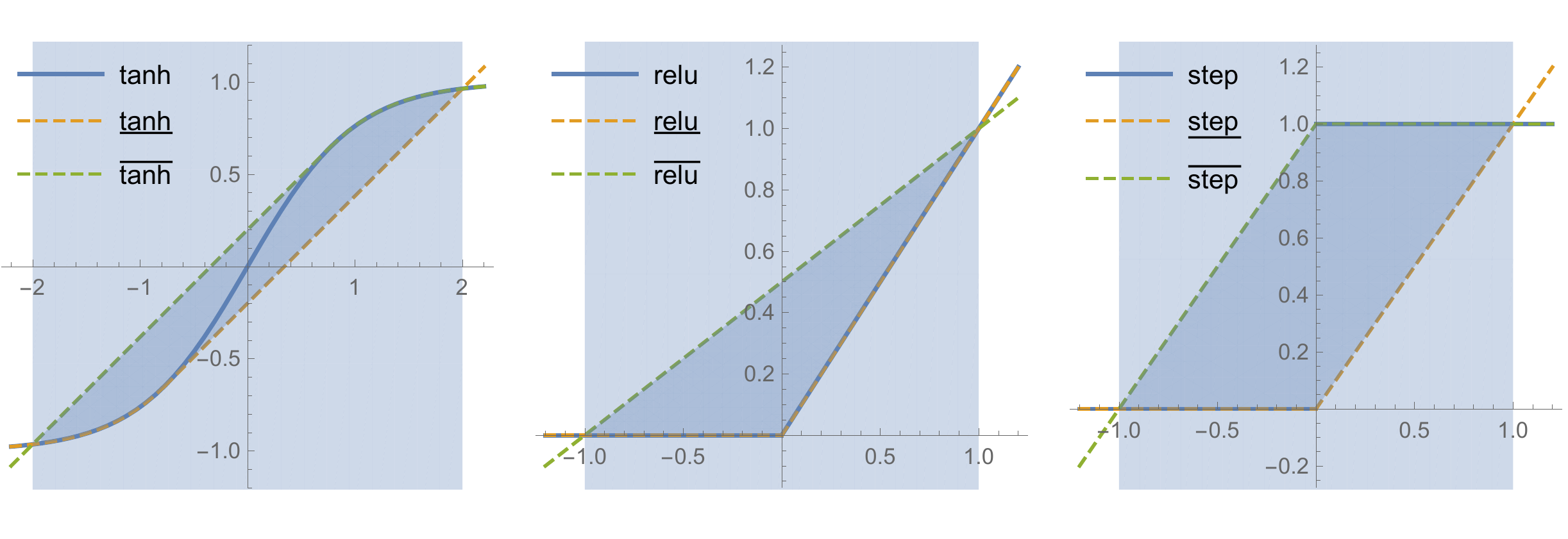}
  \includegraphics[width=0.35\textwidth]{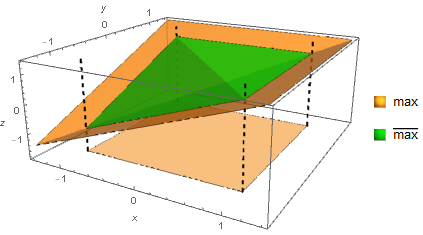}
  \caption{Optimal convex relaxations for common nonlinearities. For $\tanh$, the relaxation contains two linear segments and parts of the $\tanh$ function. For ReLU and the step function, the optimal relaxations are written as 3 and 4 linear constraints, respectively. For $z=\max(x,y)$, the light orange shadow indicates the pre-activation bounds for $x$ and $y$, and the optimal convex relaxation is lower bounded by the $\max$ function itself.}
  \label{fig:relax}
\end{figure}

\paragraph{Greedily solving the primal with linear bounds.}
As another special case, when there are \textit{exactly one} linear upper bound  and one linear lower bound for each nonlinear layer in~\eqref{eq:rv_cmu} as follows:
\begin{equation}\label{eq:linearbounds}
\overline{\sigma}^{(l)}(z^{(l)})  \coloneqq \overline{a}^{(l)} z^{(l)} + \overline{b}^{(l)}, \qquad \underline{\sigma}^{(l)}(z^{(l)}) \coloneqq \underline{a}^{(l)} z^{(l)} + \underline{b}^{(l)}.
\end{equation}
the objective $p^*_{\CalC}$ can be \emph{greedily} bounded in a layer-by-layer manner. We can derive one linear upper and one linear lower bound of $\z{L}:=c^T \x{L} + c_0$ with respect to $z^{(L-1)}$, using the fact that  $z^{(L)}=c^T \sigma^{(L-1)}(\z{(L-1)}) + c_0$ and that $\sigma^{(L-1)}(\z{(L-1)})$ is linearly upper and lower bounded by $\overline{\sigma}^{(L-1)}(\z{(L-1)})$ and $\underline{\sigma}^{(L-1)}(\z{(L-1)})$. Because a linear combination of linear bounds (coefficients are related to the entries in~$c$) can be relaxed to a single linear bound, we can apply this technique again and replace $z^{(L-1)}$ with its upper and lower bounds with respect to $z^{(L-2)}$, obtaining the bound for $z^{(L)}$ with respect to ${z}^{(L-2)}$. Applying this repeatedly eventually leads to linear lower and upper bounds of $z^{(L)}$ with respect to the input $x^{(0)} \in \setS_{in}(x^{\text{nom}})$.

This perspective covers  Fast-Lin~\cite{weng2018towards}, DeepZ~\cite{singh2018fast} and Neurify~\cite{wang2018efficient}, where the proposed linear lower bound has the same slope as the upper bound, i.e.,  $\underline{a}^{(l)} = \overline{a}^{(l)}$. The resulting shape is referred to as a \textit{zonotope} in \citet{gehr2018ai} and \citet{singh2018fast}. In CROWN~\cite{zhang2018crown} and DeepPoly~\cite{singh2019abstract}, this restriction is lifted and they can achieve better verification results than Fast-Lin and DeepZ. Fig.~\ref{fig:main} summarizes the relationships between these algorithms. Importantly, each of these works has its own merits on solving the verification problem; our focus here is to give a unified view on how they perform convex relaxation of the original verification problem \eqref{eq:rvbnds} in our framework. See Appendix~\ref{sec:greedy} for more discussions and other related algorithms.

\section{Convex Relaxation from the Dual View}
We now tackle the verification problem from the dual view and connect it to the primal view.

\textbf{Strong duality for the convex relaxed problem.\ }
As in \citet{wong2018provable}, we introduce the dual variables for \eqref{eq:rv_cmu} and write its Lagrangian dual as
\begin{equation}\label{eq:rv_cmu_dual}
\small
\begin{aligned}
     g_{\mathcal{C}}(\mu^{[L]}, \underline\lambda^{[L]}, \overline\lambda^{[L]}) \coloneqq &\min_{(\x{[L+1]}, \z{[L]}) \in \CalD} \quad  c^{\top} \x{(L)} + c_0  + \sum_{l=0}^{L-1} \mu^{(l)}{}^\top ( \z{(l)} - \W{(l)} \x{(l)} - \bias{(l)} ) \\
    &- \sum_{l=0}^{L-1} \underline\lambda^{(l)}{}^\top ( \x{(l+1)} -\underline{\sigma}^{(l)}(\z{(l)}) ) + \sum_{l=0}^{L-1} \overline \lambda^{(l)}{}^\top ( \x{(l+1)}-\overline{\sigma}^{(l)}(\z{(l)}) ).
\end{aligned}
\end{equation}
By weak duality \cite{boyd2004convex},
\begin{equation}\label{eq:rv_cmu_dual_weak}
\begin{aligned}
    d^*_{\mathcal{C}} \coloneqq \max_{\mu^{[L]},\underline \lambda^{[L]} \ge 0, \overline \lambda^{[L]} \ge 0} g_{\mathcal{C}}(\mu^{[L]}, \underline \lambda^{[L]}, \overline \lambda^{[L]})
        \le p^*_{\mathcal{C}},
\end{aligned}
\end{equation}
but in fact we can show strong duality under mild conditions as well (note that the following result cannot be obtained by trivially applying Slater's condition; see \cref{app:strongdual} and \cref{fig:strongduality}).
\begin{theorem}[$p^*_{\mathcal{C}} = d^*_{\mathcal{C}}$]\label{thm:strongdual}
Assume that both $\underline{\sigma}^{(l)}$ and $\overline{\sigma}^{(l)}$ have a finite Lipschitz constant in the domain $[\lwz{(l)}, \upz{(l)}]$ for each $l \in [L]$. Then strong duality holds between \eqref{eq:rv_cmu} and \eqref{eq:rv_cmu_dual_weak}.
\end{theorem}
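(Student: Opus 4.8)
The plan is to prove strong duality through the perturbation (optimal-value) function rather than through Slater's condition, which genuinely fails here: whenever $\underline{\sigma}^{(l)}_j \equiv \overline{\sigma}^{(l)}_j$ on $[\lwz{(l)},\upz{(l)}]$ — as happens, for instance, for a ReLU neuron fixed in its linear phase, or when a crude bound $\lwz{(l)}_j$ is attained on all of $\setS_{in}$ — the constraint $\underline{\sigma}^{(l)}_j(\z{(l)})\le\x{(l+1)}_j\le\overline{\sigma}^{(l)}_j(\z{(l)})$ admits no strictly feasible point. Instead I would use the standard convex-analytic picture: packaging the constraints that are dualized in \eqref{eq:rv_cmu_dual} as $h(w)=0$ (the affine layer equalities $\z{(l)}=\W{(l)}\x{(l)}+\bias{(l)}$) and $\phi(w)\le 0$ (the two envelope inequalities), and keeping $w=(\x{[L+1]},\z{[L]})\in\CalD$ as the domain, the value function $v(s,t)\coloneqq\inf\{\,c^\top\x{(L)}+c_0 : w\in\CalD,\ \phi(w)\le s,\ h(w)=t\,\}$ is convex and nonincreasing in $s$, with $v(0,0)=p^*_{\mathcal{C}}$ and $v^{**}(0,0)=d^*_{\mathcal{C}}$. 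Since $v$ is convex, $v^{**}=\operatorname{cl}v$, so it suffices to show $v$ is lower semicontinuous at the origin — and this is exactly where the finite-Lipschitz hypothesis enters.

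The workhorse is a uniform boundedness lemma: there exist $r>0$ and a compact set $K$ such that every $w$ feasible for the $(s,t)$-perturbed problem with $\norm{(s,t)}\le r$ lies in $K$ (here I also use that $\setS_{in}(x^{\text{nom}})$ is bounded, as for $\ell_\infty$ balls). I would prove it by sweeping up the layers. The box constraints defining $\CalD$ are \emph{not} among the perturbed constraints, so each $\z{(l)}$ stays confined to the compact box $[\lwz{(l)},\upz{(l)}]$ regardless of $(s,t)$, and $\x{(0)}$ is bounded; under the hypothesis each $\underline{\sigma}^{(l)},\overline{\sigma}^{(l)}$ is real-valued and continuous (indeed Lipschitz) on its compact box, hence bounded there, so the perturbed envelope inequality forces $\x{(l+1)}$ into a fixed bounded interval widened componentwise by at most $\norm{s}$; chaining these bounds up the network produces $K$. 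Without the continuity guarantee this step collapses, since $\underline{\sigma}^{(l)}$ or $\overline{\sigma}^{(l)}$ could be unbounded on a closed box and let the activations run off to infinity — so the hypothesis is used precisely here.

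Granting the lemma, lower semicontinuity of $v$ at $0$ is routine: take $(s_k,t_k)\to0$ realizing $\liminf v(s_k,t_k)$, discard the indices with $v(s_k,t_k)=+\infty$, and for the rest observe that the perturbed feasible set is closed (continuity of $\underline{\sigma}^{(l)},\overline{\sigma}^{(l)}$ and closedness of $\CalD$) and bounded (the lemma), hence compact, so the infimum is attained at some $w_k\in K$. Pass to a convergent subsequence $w_k\to w_\star$; by continuity $w_\star\in\CalD$, $h(w_\star)=0$, $\phi(w_\star)\le0$, so $w_\star$ is feasible for \eqref{eq:rv_cmu} with objective value $\liminf v(s_k,t_k)$, whence $p^*_{\mathcal{C}}=v(0,0)\le\liminf v(s_k,t_k)$. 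Combined with the fact that $v$ is bounded below near the origin (again the lemma), this gives $v(0,0)=(\operatorname{cl}v)(0,0)=v^{**}(0,0)$, i.e.\ $p^*_{\mathcal{C}}=d^*_{\mathcal{C}}$; it suffices to treat feasible instances, since if \eqref{eq:rv_cmu} is infeasible or $p^*_{\mathcal{C}}=-\infty$ the claim is immediate from weak duality. I expect the main obstacle to be the boundedness lemma — in particular, making explicit that the perturbation touches only the dualized constraints (so the $\z{(l)}$ stay boxed) and that a small relaxation of the envelope inequalities cannot unbound the activations. A lesser bookkeeping point is the identification $\sup_{\mu^{[L]},\underline\lambda^{[L]}\ge0,\overline\lambda^{[L]}\ge0}g_{\mathcal{C}}=v^{**}(0)$, i.e.\ that $g_{\mathcal{C}}=-\infty$ off the nonnegative orthant because $v$ is nonincreasing in $s$, which is standard but should be spelled out in the layered notation used here.
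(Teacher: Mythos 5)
Your proposal is correct in outline, and it reaches $p^*_{\mathcal{C}}=d^*_{\mathcal{C}}$ by a genuinely different route than the paper, even though both arguments revolve around exactly the same perturbed problem \eqref{eq:rv_cmu_perturb} and both correctly observe that Slater's condition can fail here. The paper's key step, Lemma~\ref{lem:perturb0}, is \emph{quantitative}: a layer-by-layer induction on the reachable sets, using the Lipschitz constants of $\underline{\sigma}^{(l)},\overline{\sigma}^{(l)}$ and the norms $\|\W{(l)}\|$, shows $\tilde{p}^*_{\CalC}\ge p^*_{\CalC}-C_{\CalC}\|(\underline{u}^{[L]},\overline{u}^{[L]},v^{[L]})\|_2$, i.e.\ a linear minorant of the value function at the origin; strong duality is then extracted by re-running the Boyd--Vandenberghe separating-hyperplane argument with the enlarged set $\CalB=\{(\underline{u},\overline{u},v,t):t<p^*_{\CalC}-C_{\CalC}\|(\underline{u},\overline{u},v)\|_2\}$ in place of a Slater point (\cref{fig:strongduality}), which forces the hyperplane to be nonvertical ($\nu>0$) and, as a bonus, yields attainment of the dual optimum. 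You instead prove only the \emph{qualitative} fact that the value function $v$ is lower semicontinuous at the origin, via uniform compactness of the perturbed feasible sets (the box constraints in $\CalD$ are not perturbed, the envelopes are bounded on the compact boxes, and $\setS_{in}(x^{\text{nom}})$ is compact), and conclude through $d^*_{\mathcal{C}}=v^{**}(0)=\operatorname{cl}v(0)=v(0)$. What each buys: your argument is softer --- no induction with explicit constants, and it really only uses continuity (hence boundedness on the boxes) of $\underline{\sigma}^{(l)},\overline{\sigma}^{(l)}$, so the finite-Lipschitz hypothesis is stronger than you need --- whereas the paper's argument genuinely exploits the Lipschitz constants, never needs $\setS_{in}(x^{\text{nom}})$ to be bounded or closed (you should state closedness explicitly, not just boundedness, so that the limit $w_\star$ remains feasible), and additionally delivers dual attainment, which your compactness route does not claim. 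Two minor bookkeeping remarks: the identification $d^*_{\mathcal{C}}=v^{**}(0)$ together with the sign/orthant argument ($g_{\mathcal{C}}=-\infty$ off $\underline\lambda^{[L]},\overline\lambda^{[L]}\ge 0$ because $v$ is nonincreasing in the inequality perturbation) is exactly the right thing to spell out; and dismissing the infeasible case ``by weak duality'' is not literally valid when $p^*_{\mathcal{C}}=+\infty$ (weak duality only gives $d^*_{\mathcal{C}}\le+\infty$), though the paper's own Lemma~\ref{lem:perturb0} likewise implicitly assumes \eqref{eq:rv_cmu} is feasible, so on that edge case you are at par with the published proof.
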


\paragraph{The optimal layer-wise dual relaxation.}
\cref{thm:strongdual} shows that taking the dual of the layer-wise convex relaxed problem \eqref{eq:rv_cmu} cannot do better than the original relaxation.
To obtain a tighter dual problem, one could directly study the Lagrangian dual of the original \eqref{eq:rvbnds},
\begin{equation}\label{eq:rv_dm}
\small
\begin{aligned}
     g_{{\myopt}}(\mu^{[L]}, \lambda^{[L]}) \coloneqq 
     \min_{\CalD} c^{\top} \x{(L)} + c_0  + \sum_{l=0}^{L-1} \mu^{(l)}{}^\top ( \z{(l)} - \W{(l)} \x{(l)} - \bias{(l)} )  + \sum_{l=0}^{L-1} \lambda^{(l)}{}^\top ( \x{(l+1)}-\sigma^{(l)}(\z{(l)}) ),
\end{aligned}
\end{equation}
where the min is taken over $\{(\x{[L+1]}, \z{[L]}) \in \CalD\}$.
This was first proposed in \citet{dvijotham2018dual}.
Note, again, by weak duality,
\begin{equation}\label{eq:rv_dm_dual_weak}
\begin{aligned}
    d^*_{{\myopt}} &\coloneqq \max_{\mu^{[L]},\lambda^{[L]}} g_{{\myopt}}(\mu^{[L]}, \lambda^{[L]}) \le p_{\myopt}^*,
\end{aligned}
\end{equation}
and $d^*_{\myopt}$ would seem to be strictly better than $d^*_{\mathcal C}$.
Unfortunately, they turn out to be equivalent:
\begin{theorem}[$d^*_{{\myopt}} = d^*_{\mathcal{C}_{\text{opt}}}$]\label{thm:cmudm}
    Assume that the nonlinear layer $\sigma^{(l)}$ is non-interactive (\cref{defn:nonInteractivity}) and the optimal layer-wise relaxation $\underline{\sigma}^{(l)}_{\text{opt}}$ and $\overline{\sigma}^{(l)}_{\text{opt}}$ are defined in \eqref{def:underoversigma}. Then the lower bound $d^*_{\mathcal{C}_{\text{opt}}}$ provided by the dual of the optimal layer-wise convex-relaxed problem \eqref{eq:rv_cmu_dual_weak} and $d^*_{{\myopt}}$ provided by the dual of the original problem \eqref{eq:rv_dm_dual_weak} are the same. 
\end{theorem}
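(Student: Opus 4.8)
The plan is to evaluate both dual optima in closed form by carrying out the inner minimization explicitly; this reduces the claimed identity to a single per-layer statement about convex envelopes, which is then settled by \cref{prop:hull_layer} together with a one-line sign argument.

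First I would simplify $d^*_{\myopt}$. In the Lagrangian \eqref{eq:rv_dm} the variables $\x{(1)},\dots,\x{(L)}$ are unconstrained in $\CalD$ and appear only linearly: the coefficient of $\x{(l+1)}$ is $\lambda^{(l)}-\W{(l+1)}{}^\top\mu^{(l+1)}$ for $l\le L-2$, and $\lambda^{(L-1)}+c$ for $l=L-1$. Hence $g_{\myopt}=-\infty$ unless the linear conditions $\lambda^{(l)}=\W{(l+1)}{}^\top\mu^{(l+1)}$ $(l\le L-2)$ and $\lambda^{(L-1)}=-c$ hold, in which case these variables drop out. The remaining minimization, over $\x{(0)}\in\setS_{in}(x^{\text{nom}})$ and $\z{(l)}\in[\lwz{(l)},\upz{(l)}]$, decouples completely, giving
\begin{equation*}
d^*_{\myopt}=\max_{\mu,\lambda\text{ lin.}}\Big[A(\mu)+\sum_{l=0}^{L-1}h^{(l)}\big(\mu^{(l)},\lambda^{(l)}\big)\Big],\qquad h^{(l)}(m,\ell):=\min_{z\in[\lwz{(l)},\upz{(l)}]}\big(m^\top z-\ell^\top\sigma^{(l)}(z)\big),
\end{equation*}
where $A(\mu):=c_0-\sum_l\mu^{(l)}{}^\top\bias{(l)}-\sup_{\x{(0)}\in\setS_{in}(x^{\text{nom}})}\mu^{(0)}{}^\top\W{(0)}\x{(0)}$ depends only on $\mu$. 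Running the identical elimination on \eqref{eq:rv_cmu_dual} for $\mathcal{C}_{\text{opt}}$ — where the $\x{(l+1)}$-coefficient is now $\overline\lambda^{(l)}-\underline\lambda^{(l)}-\W{(l+1)}{}^\top\mu^{(l+1)}$ (resp.\ $+c$) — produces the same $A(\mu)$ and the same linear conditions with $\lambda^{(l)}:=\overline\lambda^{(l)}-\underline\lambda^{(l)}$, but with each $h^{(l)}$ replaced by $\tilde h^{(l)}(m,\underline\ell,\overline\ell):=\min_{z\in[\lwz{(l)},\upz{(l)}]}\big(m^\top z+\underline\ell^\top\underline\sigma^{(l)}_{\text{opt}}(z)-\overline\ell^\top\overline\sigma^{(l)}_{\text{opt}}(z)\big)$ and an additional inner maximization over $\underline\lambda^{(l)},\overline\lambda^{(l)}\ge 0$. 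Since the affine parts and the linear conditions coincide and everything is separable over layers, it suffices to prove, for every $l$ and all $m,\ell$,
\begin{equation*}
h^{(l)}(m,\ell)=\max_{\substack{\underline\ell,\overline\ell\ge 0\\ \overline\ell-\underline\ell=\ell}}\tilde h^{(l)}(m,\underline\ell,\overline\ell).
\end{equation*}

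Next I would establish this per-layer identity (dropping the superscript $l$). The function $z\mapsto m^\top z-\ell^\top\sigma(z)$ is the restriction of the affine map $(z,w)\mapsto m^\top z-\ell^\top w$ to the compact graph $G=\{(z,\sigma(z)):z\in[\lwz{}, \upz{}]\}$, and minimizing an affine function over a compact set equals minimizing it over the convex hull, so $h(m,\ell)=\min_{(z,w)\in\mathrm{conv}(G)}(m^\top z-\ell^\top w)$. By non-interactivity (\cref{defn:nonInteractivity}) and \cref{prop:hull_layer}, $\mathrm{conv}(G)=\{(z,w):z\in[\lwz{}, \upz{}],\ \underline\sigma_{\text{opt}}(z)\le w\le\overline\sigma_{\text{opt}}(z)\}$; the inner minimization over $w$ in this sandwich decouples across output coordinates and picks $w_i=\overline\sigma_{\text{opt},i}(z)$ when $\ell_i\ge 0$ and $w_i=\underline\sigma_{\text{opt},i}(z)$ when $\ell_i<0$. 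For the right-hand side, replacing $(\underline\ell_i,\overline\ell_i)$ by $(\underline\ell_i+t,\overline\ell_i+t)$ with $t\ge 0$ changes the minimand by $t\,(\underline\sigma_{\text{opt},i}(z)-\overline\sigma_{\text{opt},i}(z))\le 0$ at every $z$ (using $\underline\sigma_{\text{opt}}\le\overline\sigma_{\text{opt}}$), hence cannot raise the minimum; so the optimal feasible multipliers drive $\min(\underline\ell_i,\overline\ell_i)$ down to $0$, i.e.\ $(\underline\ell_i,\overline\ell_i)=(0,\ell_i)$ when $\ell_i\ge 0$ and $(-\ell_i,0)$ when $\ell_i<0$. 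Plugging these in, $\tilde h$ collapses to precisely the expression already obtained for $h(m,\ell)$. Summing over output coordinates and layers and then taking $\max_\mu$ yields $d^*_{\myopt}=d^*_{\mathcal{C}_{\text{opt}}}$.

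The first step is routine but bookkeeping-intensive — the care lies in tracking every coefficient and sign across the two Lagrangians so that the affine terms $A(\mu)$ and the linear conditions on the multipliers genuinely match after $\x{(1)},\dots,\x{(L)}$ are eliminated. The substantive step is the second, and its only non-elementary ingredient is \cref{prop:hull_layer}: that non-interactivity makes the convex hull of the activation graph coincide with the region between the optimal convex and concave envelopes; granting that, the sign argument is immediate. (As a consistency check one can alternatively combine $d^*_{\mathcal{C}_{\text{opt}}}=p^*_{\mathcal{C}_{\text{opt}}}$ from \cref{thm:strongdual} with the fact that $\mathcal{C}_{\text{opt}}$ is exactly the problem \eqref{eq:rvbnds} with every activation graph replaced by its convex hull — but this route still hinges on the same hull identification.)
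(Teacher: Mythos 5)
Your proposal is correct, and it shares the paper's overall architecture: eliminate the unconstrained activations $\x{(1)},\dots,\x{(L)}$ from both Lagrangians so that finiteness forces the linear consistency conditions $\lambda^{(l)}=\W{(l+1)}{}^\top\mu^{(l+1)}$ and $\lambda^{(L-1)}=-c$ (with $\lambda^{(l)}=\overline\lambda^{(l)}-\underline\lambda^{(l)}$ on the relaxed side), note that everything else separates over layers and neurons, reduce the two multipliers per neuron to one by a sign/monotonicity argument, and finish with a per-neuron identity between the two inner infima. Where you genuinely differ is in how that per-neuron identity is justified. The paper's Lemma~\ref{lem:fenchel} proves it by conjugate calculus: via the Fenchel--Moreau theorem, $\underline{\sigma}_{\text{opt}}$ and $-\overline{\sigma}_{\text{opt}}$ have the same convex conjugates as $\sigma$ and $-\sigma$, so $\inf_z\,(\mu z-\lambda\sigma(z))$ is unchanged when $\sigma$ is replaced by the envelope dictated by the sign of $\lambda$. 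You instead argue geometrically: a linear functional has the same infimum over the activation graph as over its closed convex hull, non-interactivity and \cref{prop:hull_layer} identify that hull with the sandwich $\underline{\sigma}_{\text{opt}}(z)\le w\le\overline{\sigma}_{\text{opt}}(z)$, and the inner minimization over $w$ then selects the correct envelope coordinatewise. The two routes are equivalent in content (your hull argument is the geometric face of biconjugation), but yours trades the conjugacy lemma for \cref{prop:hull_layer}, which arguably makes the role of non-interactivity more transparent; the paper's conjugate formulation is what lets it state the auxiliary scalar lemma independently of the hull result. Your multiplier-reduction step (shifting $(\underline{\ell}_i,\overline{\ell}_i)$ by a common $t\ge 0$) is exactly the first half of Lemma~\ref{lem:fenchel}, proved by the same observation that $t\,(\overline{\sigma}_{\text{opt}}-\underline{\sigma}_{\text{opt}})\ge 0$.

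Two small touch-ups. First, invoke the \emph{closed} convex hull, as in \cref{prop:hull_layer}: for a bounded but discontinuous $\sigma$ (e.g.\ the step function, covered by \cref{cor:cmuledm}) the graph need not be closed or compact, so write the per-layer quantities with $\inf$ rather than $\min$; since the infimum of a linear functional over a set, its convex hull, and the closure thereof all coincide, nothing in your argument breaks. Second, phrase the reduction of $(\underline{\ell},\overline{\ell})$ as the supremum over the fixed-difference slice being attained at zero common part (your monotonicity computation gives this directly), rather than multipliers being ``driven down,'' which suggests a limiting argument that is not needed.
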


The complete proof is in Appendix~\ref{app:cmudm} \footnote{Theorem 2 in \citet{dvijotham2018dual} is a special case of our Theorem~\ref{thm:cmudm}, when applied to ReLU networks.
Our proof makes use of the Fenchel-Moreau theorem to deal with general nonlinearities, which is different from that in \citet{dvijotham2018dual}. 
}.
Theorem~\ref{thm:cmudm} combined with the strong duality result of Theorem~\ref{thm:strongdual} implies that the primal relaxation~\eqref{eq:rv_cmu} and the two kinds of dual relaxations, \eqref{eq:rv_cmu_dual_weak} and \eqref{eq:rv_dm_dual_weak}, are all blocked by the same \emph{barrier}.
As concrete examples:
\begin{corollary}[$p^*_{\mathcal{C}_{\text{opt}}} = d^*_{{\myopt}}$]\label{cor:cmuledm}
Suppose that the nonlinear activation functions $\sigma^{(l)}$  for all $l \in [L]$ are (for example) among the following: ReLU, step, ELU, sigmoid, tanh, polynomials and max pooling with disjoint windows. Assume that $\underline{\sigma}_{\text{opt}}^{(l)}$ and $\overline{\sigma}_{\text{opt}}^{(l)}$ are defined in \eqref{def:underoversigma}, respectively. Then we have that the lower bound $p^*_{\mathcal{C}_{\text{opt}}}$ provided by the primal optimal layer-wise relaxation \eqref{eq:rv_cmu} and $d^*_{{\myopt}}$ provided by the dual relaxation \eqref{eq:rv_dm_dual_weak} are the same.
\end{corollary}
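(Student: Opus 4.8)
The plan is to obtain the corollary by concatenating the two preceding theorems, so the real work is only to verify that their hypotheses hold for every activation in the stated list. Theorem~\ref{thm:cmudm} applies directly (its hypotheses already concern the optimal relaxations $\underline{\sigma}^{(l)}_{\text{opt}},\overline{\sigma}^{(l)}_{\text{opt}}$) and gives $d^*_{\myopt}=d^*_{\mathcal{C}_{\text{opt}}}$, while Theorem~\ref{thm:strongdual}, specialized to $\mathcal{C}_{\text{opt}}$ (i.e.\ with $\underline{\sigma}^{(l)}=\underline{\sigma}^{(l)}_{\text{opt}}$ and $\overline{\sigma}^{(l)}=\overline{\sigma}^{(l)}_{\text{opt}}$), gives $p^*_{\mathcal{C}_{\text{opt}}}=d^*_{\mathcal{C}_{\text{opt}}}$; chaining these yields $p^*_{\mathcal{C}_{\text{opt}}}=d^*_{\myopt}$. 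Throughout I will use that, since $\setS_{in}(x^{\text{nom}})$ is bounded in the settings of interest, the recursively computed pre-activation boxes $[\lwz{(l)},\upz{(l)}]$ are finite; a neuron with an infinite bound is effectively linear, so no relaxation is involved there.

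For Theorem~\ref{thm:cmudm} I would check non-interactivity (\cref{defn:nonInteractivity}). ReLU, step, ELU, sigmoid, tanh, and polynomials act coordinate-wise, so each output coordinate depends on a single input coordinate and non-interactivity is immediate; for max pooling with disjoint windows, distinct output coordinates depend on disjoint blocks of input coordinates, which is exactly the structure non-interactivity describes, so \cref{prop:hull_layer} applies and $\mathcal{C}_{\text{opt}}$ is genuinely the convex-hull relaxation of the layer.

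For Theorem~\ref{thm:strongdual} I would check that $\underline{\sigma}^{(l)}_{\text{opt}}$ and $\overline{\sigma}^{(l)}_{\text{opt}}$ are Lipschitz on the finite box $[\lwz{(l)},\upz{(l)}]$. For ReLU, the step function, and disjoint-window max pooling the optimal envelopes are piecewise affine with finitely many pieces (Fig.~\ref{fig:relax}), hence Lipschitz with constant the largest slope of a piece. For ELU, sigmoid, tanh, and polynomials, $\sigma$ is $C^1$, hence $M$-Lipschitz on the compact box with $M=\sup_{[\lwz{(l)},\upz{(l)}]}|\sigma'|<\infty$; here I would invoke the fact that the concave (resp.\ convex) envelope of an $M$-Lipschitz function on an interval is again $M$-Lipschitz. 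I would prove this fact from the characterization ``$g$ is $M$-Lipschitz on $[a,b]$ iff $t\mapsto g(t)-Mt$ is non-increasing and $t\mapsto g(t)+Mt$ is non-decreasing'': subtracting an affine function commutes with taking the concave envelope, so $\overline{\sigma}_{\text{opt}}(t)-Mt$ is the concave envelope of the non-increasing function $t\mapsto\sigma(t)-Mt$, and a short ``rescale the convex-combination points toward the left endpoint'' argument shows the concave envelope of a non-increasing function is non-increasing (symmetrically, rescaling toward the right endpoint handles the $+Mt$ direction). Either way the optimal envelopes are finitely Lipschitz, so Theorem~\ref{thm:strongdual} applies and the corollary follows.

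The only real wrinkle is the step function: being discontinuous it is not Lipschitz, so the envelope-of-a-Lipschitz-function argument does not cover it, and one instead computes its optimal envelopes explicitly --- they are piecewise affine (the ``4 linear constraints'' of Fig.~\ref{fig:relax}) --- and reads off Lipschitzness. A minor secondary point for the polynomial and sigmoid-family cases is ruling out an unbounded envelope slope at an endpoint of the box; this is automatic from the tilted-monotonicity argument above (or, directly: on each affine piece of the envelope the slope equals $\sigma'$ at a point of the box, and $|\sigma'|$ is bounded there).
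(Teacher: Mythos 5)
Your proof is correct and follows essentially the same route as the paper: the corollary is obtained precisely by chaining \cref{thm:cmudm} with \cref{thm:strongdual} specialized to $\mathcal{C}_{\text{opt}}$, after checking non-interactivity of the listed layers and finite Lipschitz constants of $\underline{\sigma}^{(l)}_{\text{opt}}$, $\overline{\sigma}^{(l)}_{\text{opt}}$ on the bounded boxes $[\lwz{(l)},\upz{(l)}]$. Your explicit verifications --- that the envelopes are piecewise affine for ReLU, step, and disjoint-window max pooling, and that the convex/concave envelope of an $M$-Lipschitz function on an interval is again $M$-Lipschitz (handling ELU, sigmoid, tanh, polynomials) --- are sound and merely fill in details the paper leaves implicit.
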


\paragraph{Greedily solving the dual with linear bounds.} 
When the relaxed bounds $\underline{\sigma}$ and $\overline{\sigma}$ are linear as defined in \eqref{eq:linearbounds},
the dual objective~\eqref{eq:rv_cmu_dual_weak} can be lower bounded as below: 
\begin{equation*}
\begin{aligned}
    p^*_{\mathcal{C}} = d^*_{\mathcal{C}} \ge  \sum_{l=0}^{L-1} \left( \overline{b}^{(l)\top} \left(\lambda^{(l)}\right)_{+} - \underline{b}^{(l)\top} \left(\lambda^{(l)}\right)_{-} - \bias{(l)}{}^\top \mu^{(l)} \right) + c_0 - \sup_{\x{} \in \setS_{in}(x^{\text{nom}})} \left( \W{(0){\top}} \mu^{(0)} \right)^{\top} \x{}, 
\end{aligned}
\end{equation*}
where the dual variables $(\mu^{[L]}, \lambda^{[L]})$ are determined by a backward propagation
\begin{equation*}
    \lambda^{(L-1)} = -c, \quad \mu^{(l)} = \overline{a}^{(l)} \left(\lambda^{(l)}\right)_{+} + \underline{a}^{(l)} \left(\lambda^{(l)}\right)_{-}, \quad \lambda^{(l-1)} =  \W{(l)}{}^\top \mu^{(l)} \quad \forall l\in [L-1],
\end{equation*}
We provide the derivation of this algorithm in Appendix~\ref{app:greedydual}.
It turns out that this algorithm can exactly recover the algorithm proposed in \citet{wong2018provable}, where 
\begin{align*}
    \underline{\sigma}^{(l)}(\z{(l)}) \coloneqq \alpha^{(l)} \z{(l)}, \quad \overline{\sigma}^{(l)}(\z{(l)})  \coloneqq \tfrac{\overline{z}^{(l)}}{\overline{z}^{(l)} - \underline{z}^{(l)}} (\z{(l)} - \underline{z}^{(l)}),
\end{align*}
and $0 \leq \alpha^{(l)} \leq 1$ represents the slope of the lower bound.
When $\alpha^{(l)} = \frac{\overline{z}^{(l)}}{\overline{z}^{(l)} - \underline{z}^{(l)}}$, the greedy algorithm also recovers Fast-Lin~\cite{weng2018towards}, which explains the arrow from \citet{wong2018provable} to \citet{weng2018towards} in Fig.~\ref{fig:main}.  When $\alpha^{(l)}$ is chosen adaptively as in CROWN~\cite{zhang2018crown}, the greedy algorithm then recovers CROWN, which explains the arrow from \citet{wong2018provable} to \citet{zhang2018crown} in Fig.~\ref{fig:main}.
See Appendix~\ref{sec:greedy} for more discussions on the relationship between the primal and dual greedy solvers.

\section{Optimal LP-relaxed Verification}

In the previous sections, we presented a framework that subsumes all existing layer-wise convex-relaxed verification algorithms except that of \citet{raghunathan2018semidefinite}.
For ReLU networks, being piece-wise linear, these correspond exactly to the set of all existing LP-relaxed algorithms, as discussed above.
We showed the existence of a barrier, $p^*_{\mathcal C}$, that limits all such algorithms.
Is this just theoretical babbling or is this barrier actually problematic in practice? 

In the next section, we perform extensive experiments on deep ReLU networks, evaluating the tightest convex relaxation afforded by our framework (denoted \textbf{\lpo{}}) against a greedy dual algorithm (Algorithm 1 of \citet{wong2018provable}, denoted \textbf{\lpd{}}) as well as another algorithm \textbf{\lpp{}}, intermediate in speed and accuracy between them.
Both \lpd{} and \lpp{} solve the bounds $\lwz{[L]}, \upz{[L]}$ by setting the dual variables heuristically (see previous section), but \lpd{} solves the adversarial loss in the same manner while  \lpp{} solves this final LP exactly.
We also compare them with the opposite bounds provided by PGD attack \citep{madry2017towards}, as well as exact results from MILP \citep{tjeng2018evaluating} \footnote{Note that in practice (as in \cite{tjeng2018evaluating}), MILP has a time budget, and usually not every sample can be verified within that budget, so that in the end we still obtain only lower and upper bounds given by samples verified to be robust or nonrobust}.

For the rest of the main text, we are only concerned with ReLU networks, so (\ref{eq:rv_cmu}) subject to \eqref{eq:relu_ehlers} is in fact an LP.

\subsection{$\lpo$ Implementation Details}

In order to exactly solve the tightest LP-relaxed verification problem of a ReLU network, two steps are required: (A)  obtaining the tightest pre-activation upper and lower bounds of all the neurons in the NN, excluding those in the last layer, then (B) solving the LP-relaxed verification problem exactly for the last layer of the NN. 

\paragraph{Step A: Obtaining Pre-activation Bounds.}
This can be done by solving sub-problems of the orginial relaxed problem \eqref{eq:rv_cmu} subject to \eqref{eq:relu_ehlers}.
Given a NN with $L_0$ layers, for each layer $l_0 \in [L_0]$, we obtain a lower (resp. upper) bound $\lwz{(l_0)}_j$ (resp. $\upz{(l_0)}_j$) of $\z{(l_0)}_j$, for all neurons $j \in [n^{(l_0)}]$.
We do this by setting
\begin{align*}
    &L \gets l_0,\quad
    c^{\top} \gets \W{(l_0)}_{j,:} \text{ (resp. }c^{\top} \gets -\W{(l_0)}_{j,:}\text{)},\quad
    c_0 \gets \bias{(l_0)}_{j} \text{ (resp. }c_0 \gets -\bias{(l_0)}_{j}\text{)}
\end{align*}
in \eqref{eq:rv_cmu} and computing the exact optimum.
However, we need to solve an LP for each neuron, and practical networks can have millions of them.
We utilize the fact that in each layer $l_0$, computing the bounds $\upz{(l_0)}_j$ and $\lwz{(l_0)}_j$ for each $j \in [n^{(l_0)}]$ can proceed independently in parallel.
Indeed, we design a scheduler to do so on a cluster with 1000 CPU-nodes.
See Appendix~\ref{sec:cluster} for details.

\paragraph{Step B: Solving the LP-relaxed Problem for the Last Layer.}
After obtaining the pre-activation bounds on all neurons in the network using step (A), we solve the LP in \eqref{eq:rv_cmu} subject to \eqref{eq:relu_ehlers} for all $j \in [n^{(L_0)}]\backslash\{j^{\text{nom}}\} $ obtained by setting 
\begin{align*}
    &L \gets L_0,\quad
    c^{\top} \gets \W{(L_0)}_{j^{\text{nom}},:} - \W{(L_0)}_{j,:},\quad
    c_0 \gets \bias{(L_0)}_{j^{\text{nom}}} - \bias{(L_0)}_{j}
\end{align*}
again in \eqref{eq:rv_cmu} and computing the exact minimum.
Here,  $j^{\text{nom}}$ is the true label of the data point $x^{\text{nom}}$ at which we are verifying the network.
\textit{We can certify the network is robust around $x^{\text{nom}}$ iff the solutions of all such LPs are positive, i.e. we cannot make the true class logit lower
than any other logits.} 
Again, note that these LPs are also independent of each other, so we can solve them in parallel.

Given any $x^{\text{nom}}$,  \lpo{}  follows steps (A) then (B) to produce a certificate whether the network is robust around a given datapoint or not. \lpp{} on the other hand solves only step (B), and instead of doing (A), it finds the preactivation bounds greedily as in Algorithm 1 of \citet{wong2018provable}.

\section{Experiments}

We conduct two experiments to assess the tightness of \lpo{}: 1) finding certified upper bounds on the robust error of several NN classifiers, 2) finding certified lower bounds on the minimum adversarial distortion $\epsilon$ using different algorithms.
All experiments are conducted on MNIST and/or CIFAR-10 datasets.

\paragraph{Architectures.} We conduct experiments on a range of ReLU-activated feedforward networks.
\textsc{MLP-A} and \textsc{MLP-B} refer to multilayer perceptrons: \textsc{MLP-A} has 1
hidden layer with 500 neurons, and \textsc{MLP-B} has 2 hidden layers with 100 neurons each. \textsc{CNN-small}, \textsc{CNN-wide-k}, and \textsc{CNN-deep-k} are the ConvNet architectures used in \citet{wong2018scaling}. Full details are in Appendix~\ref{sec:appendix-architectures}.

\paragraph{Training Modes.} We conduct experiments on networks trained with a regular cross-entropy (CE) loss function and
networks trained to be robust. These networks  are identified by a prefix corresponding to the method used to train them: \textbf{\textsc{LPd}} when the LP-relaxed dual formulation of \citet{wong2018provable} is used for robust training, \textbf{\textsc{Adv}} when adversarial examples generated using PGD are used for robust training, as in \citet{madry2017towards}, and \textbf{\textsc{Nor}} when the network is normally trained using the CE loss function. Training details are in Appendix~\ref{sec:appendix-training-modes}.

\paragraph{Experimental Setup.} We run experiments on a cluster with 1000 CPU-nodes.
The total run time amounts to more than 22 CPU-years.
Appendix \ref{sec:cluster} provides additional details about the computational resources and the scheduling scheme used, and Appendix \ref{sec:computation-time} provides statistics of the verification time in these experiments.

\begin{table*}[tbp]
\centering
\caption{Certified bounds on the robust error on the test set of MNIST for normally and robustly trained networks. The prefix of each network corresponds to the training method used:
\textbf{\textsc{Adv}} for PGD training \cite{madry2017towards}, \textbf{\textsc{Nor}} for normal CE loss training, and \textbf{\textsc{LPd}} when the LP-relaxed dual formulation of \citet{wong2018provable} is used for robust training.
}
\vskip 0.15in
\label{table:adv_error_bounds}
\begin{sc}
\begin{adjustbox}{max width=0.85\textwidth}
\begin{tabular}{rrrrrrrc}
\toprule

\multicolumn{1}{c}{\multirow{2}[2]{*}{Network}}
& \multicolumn{1}{c}{\multirow{2}[1]{*}{$\epsilon$}}
& \multicolumn{1}{c}{\multirow{2}[1]{*}{\begin{tabular}[c]{@{}c@{}}Test\\ Error\end{tabular}}}
& \multicolumn{2}{c}{Lower Bound}
& \multicolumn{3}{c}{Upper Bound}
\\\cmidrule(lr){4-5} \cmidrule(lr){6-8}

& & & \multicolumn{1}{c}{PGD} & \multicolumn{1}{c}{MILP} & \multicolumn{1}{c}{MILP} & \multicolumn{1}{c}{\lpo{}} & \multicolumn{1}{c}{\lpd{}}\\

\midrule
Adv-MLP-B
& 0.03 & 1.53\% & 4.17\%  & 4.18\%  & 5.78\%  & 10.04\% & 13.40\% \\

Adv-MLP-B
& 0.05   & 1.62\% & 6.06\%  & 6.11\%  & 11.38\%         &  23.29\% & 33.09\% \\

Adv-MLP-B
 & 0.1 & 3.33\% & 15.86\%  & 16.25\%  & 34.37\%       & 61.59\% & 71.34\%   \\

Adv-MLP-A
 & 0.1 & 4.18\% & 11.51\% & 14.36\% & 30.81\% & 60.14\% & 67.50\%   \\

\midrule
Nor-MLP-B
  & 0.02 & 2.05\% & 10.06\%  & 10.16\%  & 13.48\%       & 26.41\% & 35.11\%  \\
Nor-MLP-B
  & 0.03 & 2.05\% & 20.37\%  & 20.43\%  & 48.67\%        & 65.70\% & 75.85\%  \\
Nor-MLP-B
  & 0.05 & 2.05\% & 53.37\%  & 53.37\%  & 94.04\%        & 97.95\% & 99.39\%  \\

\midrule

LPd-MLP-B
  & 0.1 & 4.09\% & 13.39\%  & 14.45\%  & 14.45\%        & 
  17.24\% & 18.32\%  \\
LPd-MLP-B
  & 0.2 & 15.72\% & 33.85\%  & 36.33\%  & 36.33\%        & 
  37.50\% & 41.67\% \\
LPd-MLP-B
  & 0.3 & 39.22\% & 57.29\%  & 59.85\%  & 59.85\%        & 
  60.17\% & 66.85\%\\
LPd-MLP-B
  & 0.4 & 67.97\% & 81.85\%  & 83.17\%  & 83.17\%        & 
  83.62\%  & 87.89\%\\

\bottomrule

\end{tabular}
\end{adjustbox}
\end{sc}
\end{table*}

\subsection{Certified Bounds on the Robust Error}

Table~\ref{table:adv_error_bounds} presents the clean test errors and (upper and lower) bounds on the true robust errors for a range of classifiers trained with different procedures on MNIST. 
For both \textsc{Adv-} and \textsc{LPd-}trained networks, the $\epsilon$ in Table~\ref{table:adv_error_bounds} denotes the $l_\infty$-norm bound used for training \emph{and} robust testing;
for \textsc{Nor}mally-trained networks, $\epsilon$ is only used for the latter.

Lower bounds on the robust error are calculated by finding adversarial examples for inputs that are not robust. This is done by using PGD, a strong first-order attack, or using MILP \cite{tjeng2018evaluating}.
Upper bounds on the robust error are calculated by providing certificates of robustness for input that
is robust. This is done using MILP, the dual formulation (\lpd{}) presented by \citet{wong2018provable}, or our \lpo{} algorithm.

For the MILP results, we use the code accompanying the paper by \citet{tjeng2018evaluating}. We run the code in parallel on a cluster with 1000 CPU-nodes, and set the MILP solver's time limit to 3600 seconds. Note that this time limit is reached for \textsc{Adv} and \textsc{Nor}, and therefore the upper and lower bounds are separated by a gap that is especially large for some of the \textsc{Nor}mally trained networks. On the other hand, for \textsc{LPd}-trained networks, the MILP solver finishes within the time limit, and thus the upper and lower bounds match.

\textbf{Results.\ }
For all \textsc{Nor}mally and \textsc{Adv}-trained networks, we see that the certified upper bounds using \lpd{} and \lpo{} are very loose when we compare the gap between them to the lower bounds found by PGD and MILP.
As a sanity check, note that  \lpo{} gives a tighter bound than \lpd{} in each case, as one would expect.
Yet this improvement is not significant enough to close the gap with the lower bounds.

This sanity check also passes for \textsc{LPd}-trained networks, where the \lpd{}-certified robust error upper bound is, as expected, much closer to the true error (given by MILP here) than for other networks.
For $\epsilon = 0.1$, the improvement of $\lpo{}$-certified upper bound over $\lpd{}$ is at most modest, and the PGD lower bound is tighter to the true error.
For large $\epsilon$, the improvement is much more significant in relative terms, but the absolute improvement is only $4-7\%$.
In this large $\epsilon$ regime, however, both the clean and robust errors are quite large, so the tightness of $\lpo{}$ is less useful.

\begin{figure}
    \centering
    \includegraphics[width=0.60\textwidth]{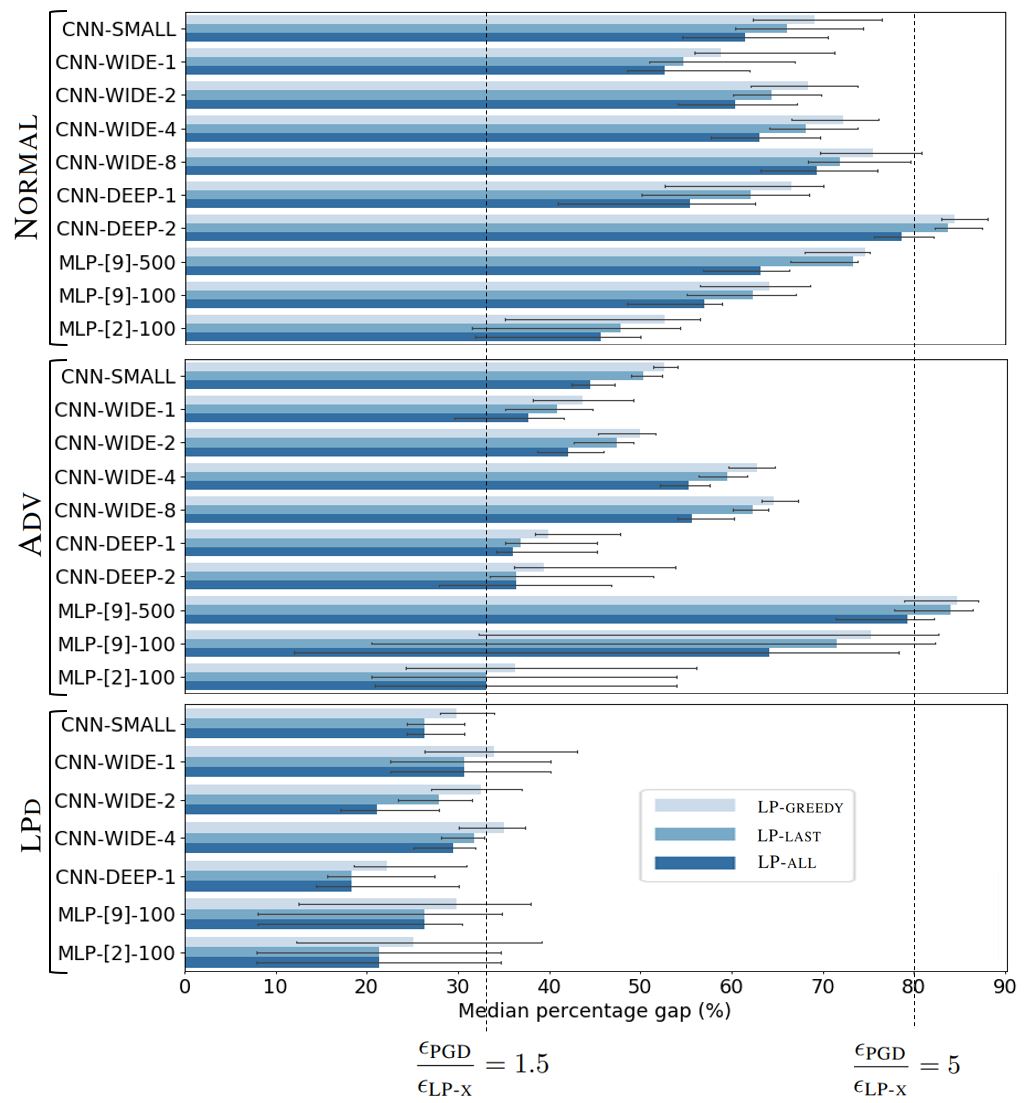}
    \caption{The median percentage gap between the convex-relaxed algorithms (\lpo{}, \lpp{}, and \lpd{}) and PGD estimates of the minimum adversarial distortion $\epsilon$ on ten samples of MNIST. The error bars correspond to 95\% confidence intervals. We highlight the $1.5\times$ and $5\times$ gaps between the $\epsilon$ value estimated by PGD, and those estimated by the LP-relaxed algorithms. For more details, please refer to Table~\ref{table:epsilon_bounds} in Appendix~\ref{sec:exp2results_appendix}.}
    \label{fig:gap_mnist}
\end{figure}

\subsection{Certified  Bounds on the Minimum Adversarial Distortion $\epsilon$}\label{sec:exp2}
We are interested in searching for the minimum adversarial distortion $\epsilon$, which is the radius of the largest $l_\infty$ ball in which no adversarial examples can be crafted. An upper bound on $\epsilon$ is calculated using PGD, and lower bounds are calculated using \lpd{}, \lpp{}, or our \lpo{}, all via binary search.
Since solving \lpo{} is expensive, we find the $\epsilon$-bounds only for ten samples of the MNIST and CIFAR-10 datasets. In this experiment, both \textsc{Adv-} and \textsc{LPd-}networks are trained with an $l_\infty$ maximum allowed perturbation of 0.1 and $8/255$ on MNIST and CIFAR-10, respectively.
See Appendix~\ref{sec:exp2_implementation} for details. Fig.~\ref{fig:gap_mnist} and \ref{fig:gap_cifar} in the Appendix show the median percentage gap (defined in Appendix~\ref{sec:exp2results_appendix}) between the convex-relaxed algorithms and PGD bounds of $\epsilon$ for MNIST and CIFAR, respectively. Details are reported in Tables~\ref{table:epsilon_bounds} and \ref{table:epsilon_bounds-cifar} in Appendix~\ref{sec:exp2results_appendix}.

On MNIST, the results show that for all networks trained \textsc{Nor}mally or via \textsc{Adv}, the certified lower bounds on $\epsilon$ are 1.5 to 5 times smaller than the upper bound found by PGD;
for \textsc{LPd} trained networks, below 1.5 times smaller. 
On CIFAR-10, the bounds are between 1.5 and 2 times smaller across all models.
The smaller gap for \textsc{LPd} is of course as expected following similar observations in prior work \cite{wong2018provable, tjeng2018evaluating}.
Furthermore, the improvement of \lpo{} and \lpp{} over \lpd{} is not significant enough to close the gap with the PGD upper bound. Note that similar results hold as well for randomly initialized networks (no training). To avoid clutter, we report these in Appendix~\ref{sec:appendix-random-networks}.

\section{Conclusions and Discussions}
In this work, we first presented a layer-wise convex relaxation framework that unifies all previous LP-relaxed verifiers, in both primal and dual spaces.
Then we performed extensive experiments to show that even the optimal convex relaxation for ReLU networks in this framework cannot obtain tight bounds on the robust error in all cases we consider here.
Thus any method will face a \emph{convex relaxation barrier} as soon as it can be described by our framework.
We look at how to bypass this barrier in Appendix~\ref{sec:howtobypass}.

Note that different applications have different requirements for the tightness of the verification, so our barrier could be a problem for some but not for others.
In so far as the ultimate goal of robustness verification is to construct a training method to lower certified error, this barrier is not necessarily problematic --- some such method could still produce networks for which convex relaxation as described by our framework produces accurate robust error bounds.
An example is the recent work of \citet{gowal2018effectiveness} which shows that interval bound propagation, which often leads to loose certification bounds, can still be used for verified training, and is able to achieve state-of-the-art verified accuracy when carefully tuned.
However, without a doubt, in all cases, tighter estimates should lead to better results, and we reveal a definitive ceiling on most current methods.

\clearpage
\bibliography{robust_verification}
\bibliographystyle{plainnat}

\clearpage
\appendix

\section{How to bypass the barrier?}
\label{sec:howtobypass}
The primal problem in our framework \eqref{eq:rv_cmu} has several possible sources of looseness:
\begin{enumerate}[label={(\arabic*)}]
    \item 
        We relax the nonlinearity $\sigma^{(l)}$ on a box domain $\{\lwz{(l)} \le z \le \upz{(l)}\}$.
        This relaxation is simple to perform, but might come at the cost of losing some correlations between the coordinates of $z$ and of obtaining a looser relaxation. Note that our framework does consider the correlations between coordinates of $z^{(l)}$ to get bounds for all later layers, however it relies on $\lwz{(l)}$ and $\upz{(l)}$ which are considered individually, without interactions within the same layer.
    \item
        We solve for the bounds $\lwz{[l]}, \upz{[l]}$ recursively, and we incur some gap for every recursion; a loose bound in earlier layers will make bounds for later layers even looser. This can be problematic for very deep networks or recurrent networks.
    \item
        In the specific case of ReLU, we lose a bit every time we relax over an unstable neuron;
        one possible future direction is to combine branch-and-bound with convex relaxation to strategically split the domains of unstable neurons.
\end{enumerate}
Any method that improves on any of the above issues can possibly bypass the barrier; see, e.g., SDP-based verifiers \citep{raghunathan2018semidefinite} can consider the interaction between each neuron within one layer; \cite{anderson2018strong} can relax the combination of one ReLU layer and one affine layer.
On the other hand, exact verifiers \citep{katz2017reluplex,ehlers2017formal}, local Lipschitz-constant-based verifiers \citep{zhang2018recurjac,  raghunathan2018certified}, and hybrid approaches \citep{bunel2018unified,Singh2019robustness} do not fall under the purview of our framework.
In general, none of them are strictly better than the convex relaxation approach and they make trading-offs between speed and accuracy.
However, it would be fruitful to consider combinations of these methods in the future, as done in \citet{Singh2019robustness}.
We hope our work will foster much thought in the community toward new relaxation paradigms for tight neural network verification.

\section{The optimal layer-wise convex relaxation}
\label{sec:optimal_convex}
\subsection{The optimal convex relaxation of a single nonlinear neuron}
In this section, we give the optimal convex relaxation of a single nonlinear neuron $\x{} = \sigma(z)$, which is the convex hull of its graph. Although the proof is elementary, we provide it for completeness. 
\begin{proposition}\label{prop:hull}
Suppose the activation function $\sigma:[\lwz{}, \upz{} ] \subset \R^{n_z} \to \R$ is bounded from above and below. Let $\underline{\sigma}_{\text{opt}}$ and $-\overline{\sigma}_{\text{opt}}$ be the greatest closed convex functions majored by $\sigma$ and $-\sigma$, respectively, i.e.,
\begin{equation}\label{def:underoversigma_app}
\begin{aligned}
    \underline{\sigma}_{\text{opt}}(z) &\coloneqq \sup_{(\alpha, \gamma) \in \mathcal{A}} \alpha^{\top} z + \gamma, \quad
        \text{where } \mathcal A = \{(\alpha, \gamma): \alpha^\top z' + \gamma \le \sigma(z'), \forall z' \in [\lwz{}, \upz{}]\},
        \\
    \overline{\sigma}_{\text{opt}}(z) &\coloneqq \inf_{(\alpha, \gamma) \in \mathcal A'} \alpha^{\top} z + \gamma,
    \quad
        \text{where } \mathcal A' = \{(\alpha, \gamma): \alpha^\top z' + \gamma \ge \sigma(z'), \forall z' \in [\lwz{}, \upz{}]\}
\end{aligned}
\end{equation}
 Then we have,
 \begin{enumerate}
     \item Both $\underline{\sigma}_{\text{opt}}$ and $\overline{\sigma}_{\text{opt}}$ are continuous in $[\lwz{}, \upz{}]$.
     \item
\begin{equation*}
\begin{aligned}
    &\big\{(\z{}, \x{}): \underline{\sigma}_{\text{opt}}(\z{}) \le \x{} \le \overline{\sigma}_{\text{opt}}(\z{}), \lwz{} \le \z{} \le \upz{} \big\} =
     \overline{\text{conv}}\big(\{(\z{}, \x{}): \x{}=\sigma(\z{}), \lwz{} \le \z{} \le \upz{}\}\big),
\end{aligned}
\end{equation*}
where $\overline{\text{conv}}$ denotes the closed convex hull.
 \end{enumerate}
\end{proposition}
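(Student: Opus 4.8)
The plan is to prove the two claims of Proposition~\ref{prop:hull} in order, using standard convex-analytic facts about the Legendre--Fenchel conjugate and the closed convex hull. First I would recall that for a function $\sigma$ bounded above and below on the compact box $[\lwz{}, \upz{}]$, the \emph{convex envelope} (greatest closed convex function majored by $\sigma$) coincides with the biconjugate, i.e. $\underline{\sigma}_{\text{opt}} = \sigma^{**}$, where we extend $\sigma$ to all of $\R^{n_z}$ by $+\infty$ outside the box; symmetrically, $-\overline{\sigma}_{\text{opt}} = (-\sigma)^{**}$ after extending $-\sigma$ by $+\infty$ outside the box. The supremum/infimum formulas in \eqref{def:underoversigma_app} are exactly the affine-minorant (resp. affine-majorant) characterization of these biconjugates, so the two descriptions agree by definition. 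Both $\underline{\sigma}_{\text{opt}}$ and $\overline{\sigma}_{\text{opt}}$ are finite on $[\lwz{}, \upz{}]$ because $\sigma$ is bounded (any constant function below $\inf \sigma$ is a valid affine minorant, and any constant above $\sup\sigma$ a valid affine majorant), so they are proper closed convex/concave functions whose effective domain is exactly the box.

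For part~1 (continuity on $[\lwz{}, \upz{}]$, including the boundary), I would invoke the fact that a proper closed convex function is continuous relative to any polytope contained in its effective domain --- more precisely, a convex function finite on a neighborhood of a polytope is continuous on that polytope, and a closed convex function whose domain is a polytope is upper semicontinuous, hence continuous, on it. Since $\mathrm{dom}\,\underline{\sigma}_{\text{opt}} = [\lwz{}, \upz{}]$ is a box (a polytope) and $\underline{\sigma}_{\text{opt}}$ is finite there and closed, it is continuous on all of $[\lwz{}, \upz{}]$; the same argument applied to $-\overline{\sigma}_{\text{opt}}$ gives continuity of $\overline{\sigma}_{\text{opt}}$. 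One should be slightly careful to cite the right version of the Gale--Klee--Rockafellar-type result that handles the boundary of the polytope, since generic convex functions can be discontinuous at boundary points of their domain --- the polytopal structure of the box is what saves us.

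For part~2, I would argue by double inclusion. Write $G = \{(\z{},\x{}) : \x{} = \sigma(\z{}),\ \lwz{} \le \z{} \le \upz{}\}$ for the graph. ($\subseteq$, i.e. the set with the envelopes is contained in $\overline{\mathrm{conv}}(G)$, is the less trivial direction.) For $(\supseteq)$: the set $S \coloneqq \{(\z{},\x{}) : \underline{\sigma}_{\text{opt}}(\z{}) \le \x{} \le \overline{\sigma}_{\text{opt}}(\z{}),\ \lwz{}\le\z{}\le\upz{}\}$ is closed (by continuity/closedness of the envelopes and compactness of the box) and convex (it is the region between a convex graph from below and a concave graph from above over a convex domain), and it clearly contains $G$ since $\underline{\sigma}_{\text{opt}} \le \sigma \le \overline{\sigma}_{\text{opt}}$; hence $\overline{\mathrm{conv}}(G) \subseteq S$. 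For $(\subseteq)$: I would use a supporting-hyperplane / separation argument. Suppose $(\z{}_0, \x{}_0) \in S$ but $(\z{}_0,\x{}_0) \notin \overline{\mathrm{conv}}(G)$; separate strictly by a hyperplane $\langle (a,t),(\z{},\x{})\rangle = c$, so $a^\top \z{}_0 + t\x{}_0 < c \le a^\top \z{} + t\x{}$ for all $(\z{},\x{}) \in G$, i.e. $a^\top z + t\sigma(z) \ge c$ for all $z$ in the box. I then split on the sign of $t$. If $t>0$: dividing gives $\sigma(z) \ge (c - a^\top z)/t$, an affine minorant of $\sigma$, so $\underline{\sigma}_{\text{opt}}(\z{}_0) \ge (c - a^\top \z{}_0)/t > \x{}_0$, contradicting $(\z{}_0,\x{}_0)\in S$. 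If $t<0$: similarly we get an affine majorant and $\overline{\sigma}_{\text{opt}}(\z{}_0) < \x{}_0$, again a contradiction. If $t=0$: then $a^\top z \ge c$ on the box but $a^\top \z{}_0 < c$, contradicting $\lwz{}\le\z{}_0\le\upz{}$. This exhausts all cases and proves $S \subseteq \overline{\mathrm{conv}}(G)$.

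The main obstacle I anticipate is twofold. The first subtlety is the boundary-continuity claim in part~1: a general closed proper convex function need not be continuous at relative-boundary points of its domain, so the proof genuinely needs the polytopal (box) structure of $[\lwz{}, \upz{}]$, and one must cite the correct theorem (convex functions finite on a polytope are continuous on it). The second, smaller subtlety is making sure the biconjugate/envelope identification and the affine-minorant formula in \eqref{def:underoversigma_app} are literally the same object --- that $\mathcal A$ being possibly empty never happens here (it is nonempty because $\sigma$ is bounded below), and that $\overline{\mathrm{conv}}$ of the graph, rather than $\mathrm{conv}$ without closure, is what matches the closed envelopes; the closure is needed precisely because $\sigma$ itself may be neither upper nor lower semicontinuous, so the graph need not be closed and the finite convex combinations need not already form a closed set. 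Once these points are pinned down, the separation argument for part~2 is routine.
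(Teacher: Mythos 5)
Your proposal is correct and takes essentially the same route as the paper: part 1 is the identical argument (the envelopes are closed convex functions whose effective domain is exactly the box, and continuity follows from the Gale--Klee--Rockafellar-type result, Theorem 10.2 of Rockafellar, on polytopal domains), and part 2 rests on the same identification of closed halfspaces containing the graph of $\sigma$ with affine minorants, affine majorants, or pure-$z$ constraints according to the sign of the coefficient on $x$ (your cases $t>0$, $t<0$, $t=0$ mirror the paper's $\beta<0$, $\beta>0$, $\beta=0$). The only difference is packaging: the paper writes both sets directly as the intersection of all such halfspaces and invokes the definition of the closed convex hull, whereas you prove the nontrivial inclusion by the strict separation theorem --- two formulations of the same fact.
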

\begin{proof}$ $\newline

1. By the boundedness of $\sigma$ on $[\lwz{}, \upz{}]$, we know that the effective domain of $\underline{\sigma}_{\text{opt}}$ and $-\overline{\sigma}_{\text{opt}}$ is $[\lwz{}, \upz{}]$. By definition \eqref{def:underoversigma_app}, $\underline{\sigma}_{\text{opt}}$ and $-\overline{\sigma}_{\text{opt}}$ are closed convex functions. By Theorem 10.2 in \citet{rockafellar2015convex}, we know that both $\underline{\sigma}_{\text{opt}}$ and $-\overline{\sigma}_{\text{opt}}$ are continuous in $[\lwz{}, \upz{}]$, so is $\overline{\sigma}_{\text{opt}}$.

2. We first decompose the left-hand-side into 3 terms:
\begin{equation*}
\begin{aligned}
    &\big\{(\z{}, \x{}): \underline{\sigma}_{\text{opt}}(\z{}) \le \x{} \le \overline{\sigma}_{\text{opt}}(\z{}), \lwz{} \le \z{} \le \upz{} \big\} = 
     \big\{\underline{\sigma}_{\text{opt}}(\z{}) \le \x{} \big\} \cap \big\{\x{} \le \overline{\sigma}_{\text{opt}}(\z{}) \big\} \cap \big\{ \lwz{} \le \z{} \le \upz{} \big\}.
\end{aligned}
\end{equation*}
Let $\underline{\mathcal{F}}= \{(\alpha, \gamma): \alpha^T z'+\gamma \le \sigma(z'), \forall z'\in [\lwz{}, \upz{}]\}$ and $\overline{\mathcal{F}}= \{(\alpha, \gamma): \alpha^T z'+\gamma \ge \sigma(z'), \forall z'\in [\lwz{}, \upz{}]\}$. For the first term, by definition \eqref{def:underoversigma_app} we have
\begin{equation*}
\begin{aligned}
    \big\{\underline{\sigma}_{\text{opt}}(\z{}) \le \x{} \big\} &=\cap_{\{(\alpha, \gamma): \alpha^T z'+\gamma \le \sigma(z'), \forall z'\in [\lwz{}, \upz{}]\}} \{ \alpha^T z + \gamma \le x\} \\
    & = \cap_{\{(\alpha, \beta, \gamma): \beta < 0, \alpha^T z'+\beta\sigma(z') + \gamma \le 0, \forall z'\in [\lwz{}, \upz{}]\}} \{ \alpha^T z + \beta x + \gamma \le 0\}.
\end{aligned}
\end{equation*}
For the second term, by definition \eqref{def:underoversigma_app} we have
\begin{equation*}
\begin{aligned}
    \big\{\x{} \le \overline{\sigma}_{\text{opt}}(\z{}) \big\} &=\cap_{\{(\alpha, \gamma): \alpha^T z'+\gamma \le -\sigma(z'), \forall z'\in [\lwz{}, \upz{}]\}} \{ \alpha^T z + \gamma \le -x\} \\
    & = \cap_{\{(\alpha, \beta, \gamma): \beta > 0, \alpha^T z'+\beta\sigma(z') + \gamma \le 0, \forall z'\in [\lwz{}, \upz{}]\}} \{ \alpha^T z + \beta x + \gamma \le 0\}.
\end{aligned}
\end{equation*}
For the third term, we have
\begin{equation*}
\begin{aligned}
    \big\{ \lwz{} \le \z{} \le \upz{} \big\} &=\cap_{\{(\alpha, \gamma): \alpha^T z'+\gamma \le 0, \forall z'\in [\lwz{}, \upz{}]\}} \{ \alpha^T z + \gamma \le 0\} \\
    & = \cap_{\{(\alpha, \beta, \gamma): \beta = 0, \alpha^T z'+\beta\sigma(z') + \gamma \le 0, \forall z'\in [\lwz{}, \upz{}]\}} \{ \alpha^T z + \beta x + \gamma \le 0\}.
\end{aligned}
\end{equation*}
Combining the three terms, we conclude the proof by
\begin{equation*}
\begin{aligned}
    & \big\{(\z{}, \x{}): \underline{\sigma}_{\text{opt}}(\z{}) \le \x{} \le \overline{\sigma}_{\text{opt}}(\z{}), \lwz{} \le \z{} \le \upz{} \big\} \\
    = &\cap_{\{(\alpha, \beta, \gamma): \alpha^T z'+\beta\sigma(z') + \gamma \le 0, \forall z'\in [\lwz{}, \upz{}]\}} \{ \alpha^T z + \beta x + \gamma \le 0\} \\
    = &\overline{\text{conv}}\big(\{(\z{}, \x{}): \x{}=\sigma(\z{}), \lwz{} \le \z{} \le \upz{}\}\big),
\end{aligned}
\end{equation*}
where we use the definition of closed convex hull in the last identity.

\end{proof}

\subsection{The optimal convex relation of a nonlinear layer}
\label{sec:nonInteractivity}
When $\x{(l+1)} = \sigma^{(l)}(\z{(l)})$ is a nonlinear layer that has a vector output $\x{(l+1)} \in \R^{n^{(l+1)}}$, the optimal convex relaxation may not have a simple analytic form as $\underline{\sigma}_{\text{opt}}^{(l)}(\z{(l)}) \le \x{(l+1)} \le \overline{\sigma}_{\text{opt}}^{(l)}(\z{(l)})$. Fortunately, if there is no interaction (as defined below) among the output neurons, the optimal convex relaxation can be given as a simple analytic form.
\begin{definition}[non-interactive layer]\label{defn:nonInteractivity}
Let $\sigma: \R^m \to \R^n$ and $x = \sigma(z)$ be a nonlinear layer with input $z \in [\lwz{}, \upz{} ] \subset \R^m$ and output $x \in \R^n$. For each output $x_j$, let $I_j \subset [m]$ be the minimal set of $z$'s entries that affect $x_j$, where $x_j = \sigma(z_{I_j})$.
We call the layer $x = \sigma(z)$ \emph{non-interactive} if the sets $I_j$ ($j \in [n]$) are mutually disjoint. 
\end{definition}

Commonly used nonlinear activation layers are all non-interactive. It is obvious that all entry-wise nonlinear layers, such as (leaky-)ReLU and sigmoid, are non-interactive. A MaxPool layer with non-overlapping regions (stride no smaller than kernel size) is also non-interactive. Finally, any layer with scalar-valued output is non-interactive. When we treat a general nonlinear specification (as proposed in~\citet{qin2018verification}) as an additional nonlinear layer $\x{(L+1)} = F(\x{(0)}, \x{(L)})$, this layer is automatically non-interactive. This nice property ensures that our framework can deal with very general specifications. 

The optimal convex relaxation of a non-interactive layer has a simple analytic form as below.
\begin{proposition}\label{prop:hull_layer}
If the layer $\sigma^{(l)}:[\lwz{(l)}, \upz{(l)} ] \to \R^{n^{(l+1)}}$ is non-interactive, we have 
\footnotesize{
\begin{equation*}
\begin{aligned}
    \big\{(\z{(l)}, \x{(l+1)}):{}& \underline{\sigma}_{\text{opt}}^{(l)}(\z{(l)}) \le \x{(l+1)} \le \overline{\sigma}_{\text{opt}}^{(l)}(\z{(l)}) \big\} =\\
    &\overline{\text{conv}}\big(\{(\z{(l)}, \x{(l+1)}): \x{(l+1)}=\sigma^{(l)}(\z{(l)}),{}&\lwz{(l)} \le \z{(l)} \le \upz{(l)}\}\big),
\end{aligned}
\end{equation*}
}
where $\overline{\text{conv}}$ denotes the closed convex hull, and vector-valued functions $\underline{\sigma}_{\text{opt}}^{(l)}(\z{})$ and $\overline{\sigma}_{\text{opt}}^{(l)}(\z{})$ are defined in \eqref{def:underoversigma} for each output entry.
\end{proposition}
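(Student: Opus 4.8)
\textbf{Proof plan for Proposition~\ref{prop:hull_layer}.}

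The plan is to reduce the vector-valued, non-interactive case to the scalar case already established in Proposition~\ref{prop:hull}, exploiting the fact that the disjoint dependency sets $I_j$ let us treat the relaxation coordinate-block by coordinate-block. First I would fix notation: since $\sigma^{(l)}$ is non-interactive, partition $[m] = [n_z^{(l)}]$ into the disjoint sets $I_1, \dots, I_{n^{(l+1)}}$ (padding with a leftover block of coordinates on which no output depends, if necessary), so that $\x{(l+1)}_j = \sigma^{(l)}_j(\z{(l)}_{I_j})$ depends only on the subvector $\z{(l)}_{I_j}$. Correspondingly the box $[\lwz{(l)}, \upz{(l)}]$ factors as a product of boxes $\prod_j [\lwz{(l)}_{I_j}, \upz{(l)}_{I_j}]$. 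For each $j$, Proposition~\ref{prop:hull} applied to the scalar-output map $\sigma^{(l)}_j$ on the box $[\lwz{(l)}_{I_j}, \upz{(l)}_{I_j}]$ tells us that $\{(\z{(l)}_{I_j}, \x{(l+1)}_j) : \underline{\sigma}^{(l)}_{\text{opt},j}(\z{(l)}_{I_j}) \le \x{(l+1)}_j \le \overline{\sigma}^{(l)}_{\text{opt},j}(\z{(l)}_{I_j}),\ \lwz{(l)}_{I_j} \le \z{(l)}_{I_j} \le \upz{(l)}_{I_j}\}$ is exactly the closed convex hull of the graph of $\sigma^{(l)}_j$.

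The core of the argument is then a general fact about closed convex hulls of product-structured sets: if $G_j \subset \R^{d_j} \times \R$ are bounded sets (here $G_j$ is the graph of $\sigma^{(l)}_j$ over its box) and $G \subset \prod_j \R^{d_j} \times \R^{n^{(l+1)}}$ is obtained by gluing along a \emph{common} base that splits as a product, then $\overline{\text{conv}}(G)$ is the corresponding glued product of the $\overline{\text{conv}}(G_j)$. Concretely, the set on the left-hand side of the proposition is, by definition, $\bigcap_j \{(\z{(l)}, \x{(l+1)}) : \underline{\sigma}^{(l)}_{\text{opt},j}(\z{(l)}_{I_j}) \le \x{(l+1)}_j \le \overline{\sigma}^{(l)}_{\text{opt},j}(\z{(l)}_{I_j})\}$ (together with the box constraint), and each of these blocks is a closed convex set by part~1 of Proposition~\ref{prop:hull}; so the left-hand side is closed and convex and contains the graph of $\sigma^{(l)}$, giving the inclusion $\supseteq$. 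For the reverse inclusion $\subseteq$, I would take a point $(\z{(l)}, \x{(l+1)})$ in the left-hand side; by the scalar Proposition~\ref{prop:hull}, for each $j$ the pair $(\z{(l)}_{I_j}, \x{(l+1)}_j)$ lies in $\overline{\text{conv}}(G_j)$, hence (by Carathéodory, or by a limiting argument since we only need the \emph{closed} hull) can be written as a convex combination $\sum_k \theta^{(j)}_k (z^{(j,k)}, \sigma^{(l)}_j(z^{(j,k)}))$ with $z^{(j,k)} \in [\lwz{(l)}_{I_j}, \upz{(l)}_{I_j}]$, up to an $\epsilon$-error. The subtlety is that these representations use different numbers of atoms and different weights for different $j$; I would merge them by taking the common refinement of the $n^{(l+1)}$ probability distributions $\{\theta^{(j)}_\cdot\}_j$ into a single distribution over tuples $(k_1, \dots, k_{n^{(l+1)}})$ with product weights, so that each atom becomes a full input vector $z$ (assembled blockwise from $z^{(1,k_1)}, \dots$) lying in the full box, and the corresponding output is exactly $\sigma^{(l)}(z)$ because of non-interactivity — dependency disjointness is precisely what makes the blockwise assembly consistent. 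Averaging these atoms recovers $(\z{(l)}, \x{(l+1)})$ up to $\epsilon$, and letting $\epsilon \to 0$ with closedness of the convex hull finishes it.

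The main obstacle, and the step deserving the most care, is the block-merging / common-refinement argument: one must verify that gluing per-block convex representations into a joint one does not introduce spurious points — this is exactly where non-interactivity (Definition~\ref{defn:nonInteractivity}) is essential, since it guarantees $\sigma^{(l)}(z) = (\sigma^{(l)}_1(z_{I_1}), \dots, \sigma^{(l)}_{n^{(l+1)}}(z_{I_{n^{(l+1)}}}))$ so that an assembled input maps to the assembled output with no cross-terms. A clean way to package this is to phrase it measure-theoretically: a point is in $\overline{\text{conv}}$ of a compact set $G$ iff it is the barycenter of some Borel probability measure supported on $G$ (since $G$ here is compact, the convex hull is already closed, so we can drop the closure and avoid $\epsilon$'s); then the product of the per-block barycentric measures, pushed through the assembly map, is a measure on the graph of $\sigma^{(l)}$ whose barycenter is $(\z{(l)}, \x{(l+1)})$. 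I would also note two easy degenerate sub-cases handled automatically: entrywise activations ($|I_j| = 1$ for all $j$) and scalar-valued layers ($n^{(l+1)} = 1$, where the statement is literally Proposition~\ref{prop:hull}), and remark that boundedness of $\sigma^{(l)}$ on the box — which holds for all the activations in the paper's list — is what lets us invoke Proposition~\ref{prop:hull} in each block and guarantees the graph is compact.
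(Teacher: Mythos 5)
Your proposal is correct and follows essentially the same route as the paper, which simply observes that the non-interactive (blockwise) structure makes the statement a direct consequence of item~2 of Proposition~\ref{prop:hull}; your blockwise reduction plus the product-weight merging of per-block convex combinations is exactly the detail the paper leaves implicit (namely that the closed convex hull of a product-structured graph is the product of the per-block closed convex hulls). One small caveat: your optional measure-theoretic packaging assumes the graph is compact, which fails for discontinuous activations such as the step function, but your primary $\epsilon$-approximation argument with closedness does not need compactness, so the proof stands.
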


Thanks to its non-interaction, Proposition~\ref{prop:hull_layer} is a direct consequence of item 2 in Proposition~\ref{prop:hull}.

\section{Convex relaxations not included in Problem~\eqref{eq:rv_cmu}.}
\label{sec:outOfScopeRelaxations}
We emphasize that by \textit{optimal}, we mean the optimal convex relaxation of the \textit{single} nonlinear constraint $\x{(l+1)}=\sigma^{(l)}(\z{(l)})$ (see Proposition~\eqref{prop:hull_layer}) instead of the optimal convex relaxation of the nonconvex feasible set of the original problem~\eqref{eq:rvbnds}. In fact, for neural networks with more than two hidden layers ($L \ge 2$), the optimal convex relaxation of the nonconvex feasible set of problem~\eqref{eq:rvbnds} is a strict subset of the feasible set of problem~\eqref{eq:rv_cmu}, even with the tightest bounds $(\lwz{[L]}, \upz{[L]})$ and the optimal choice of $\underline{\sigma}_{\text{opt}}^{(l)}(\z{})$ and $\overline{\sigma}_{\text{opt}}^{(l)}(\z{})$ in \eqref{def:underoversigma}. It is possible to obtain other (maybe tighter) convex relaxations \cite{anderson2018strong}, but it comes with more assumptions on the nonlinear layers and more complex convex constraints. 

For example, \citet{raghunathan2018semidefinite} rewrites the ReLU nonlinearity as a quadratic constraint, and then proposes a semidefinite programming (SDP) relaxation for the resulting quadratic optimization problem. Problem~\eqref{eq:rv_cmu} does not cover this SDP-relaxation. Sometimes Problem~\eqref{eq:rv_cmu} provides tighter relaxation than the SDP-relaxation, e.g., the case when there is only one neuron in a layer, while sometimes the SDP-relaxation provides tighter relaxation than Problem~\eqref{eq:rv_cmu}, e.g., the examples provided in \citet{raghunathan2018semidefinite}. The SDP-relaxation currently only works for ReLU nonlinearity. It is not clear to us how to extend the SDP-relaxed verifier to general nonlinearities. On the other hand,  Problem~\eqref{eq:rv_cmu} can handle any non-interactive nonlinear layer and any nonlinear specification. 
\section{Greedily solving the primal with linear bounds.}
\label{sec:greedy}
In this section, we show how to greedily solve~\eqref{eq:rv_cmu} by over-relaxing the problem to give a lower bound directly, and discuss the relationships between algorithms in Figure~\ref{fig:main}, especially for the algorithms in primal view.

\paragraph{Relaxing the ReLU neurons.} We start with giving exactly one linear upper bound and exactly one linear lower bound for each activation function in~\eqref{eq:rv_cmu}:
\begin{equation}\label{eq:rv_cmu_approx}
\begin{aligned}
    \min_{(\x{[L+1]}, \z{[L]}) \in \CalD} \quad  &c^{\top} \x{(L)} + c_0 \\
    \text{s.t.}\quad &\z{(l)} = \W{(l)} \x{(l)} + \bias{(l)}, \quad l \in \{0, \cdots, L-1\}, \\
    \underline{a}^{(l)} z^{(l)} + \underline{b}^{(l)} \le &x^{(l+1)} \le \overline{a}^{(l)} z^{(l)} + \overline{b}^{(l)}, \quad l \in \{0, \cdots, L-1\},
\end{aligned}
\end{equation}
Typically, the selection of $\underline{a}^{(l)}$, $\overline{a}^{(l)}$,  $\overline{b}^{(l)}$, $\underline{b}^{(l)}$ can depend on $\overline{z}^{(l)}$ and $\underline{z}^{(l)}$ to minimize the error between the upper/lower bound and the activation function. For element-wise activation functions, the linear upper and lower bounds are usually also element-wise.
For example, for an unstable ReLU neuron with $\overline{z}_i^{(l)} > 0$ and $\underline{z}_i^{(l)} < 0$, one upper bound is $x_i^{(l+1)} \le \frac{\overline{z}_i^{(l)}}{\overline{z}_i^{(l)} - \underline{z}_i^{(l)}} z_i^{(l)} - \frac{\overline{z}_i^{(l)}\underline{z}_i^{(l)}}{\overline{z}_i^{(l)} - \underline{z}_i^{(l)}}$. According to Proposition~\ref{prop:hull}, this is the optimal convex relaxation for the upper bound. For the lower bound, the optimal convex relaxation $(x_i^{(l+1)} \geq z_i^{(l)}) \cap (x_i^{(l+1)} \geq 0)$ is not achievable as one linear function; we use any over-relaxed bounds $x_i^{(l+1)} \geq \underline{a}_i^{(l)} z_i^{(l)}$ with $0 \leq \underline{a}_i^{(l)} \leq 1$ as the lower bound. This perspective covers Fast-Lin~\cite{weng2018towards}, DeepZ~\cite{singh2018fast} and Neurify~\cite{wang2018efficient}, where the lower bound is fixed as $\underline{a}_i^{(l)} = \overline{a}_i^{(l)} = \frac{\overline{z}_i^{(l)}}{\overline{z}_i^{(l)} - \underline{z}_i^{(l)}}$; this is referred as a ``zonotope'' relaxation in AI$^2$~\cite{gehr2018ai} and DeepZ. AI$^2$ is a general technique of using ``abstract transformers'' (sound relaxations of neural network elements) to verify neural networks, but it uses suboptimal relaxations for ReLU non-linearity; DeepZ further refines the transformers for ReLU and significantly outperforms AI$^2$~\cite{singh2018fast}.
Other activation functions can be linearly bounded as discussed in CROWN \cite{zhang2018crown}, DeepZ and DeepPoly \cite{singh2019abstract}; CROWN and DeepPoly are also more general and do not require $\underline{a}_i^{(l)} = \overline{a}_i^{(l)}$ to allow a more flexible selection of bounds.

\paragraph{Deriving the Greedy Primal Method.}
Assuming we have obtained the linear upper and lower bounds for $x^{(l+1)}$ with respect to $z^{(l)}$, $\underline{z}_i^{(l+1)}$ can be formed greedily as a linear combination of these linear bounds: we greedily select the upper bound 
$x_i^{(l+1)} \leq \underline{a}_i^{(l)} z_i^{(l)} + \underline{b}_i^{(l)}$ 
when $\W{(l+1)}_{i,k}$ is negative, and select the lower bound  $x_i^{(l+1)} \geq \overline{a}_i^{(l)} z_i^{(l)} + \overline{b}_i^{(l)}$ 
otherwise. This bound reflects the worst case scenario without considering any other neurons:
\begin{align}
z_i^{(l+1)} \geq \underline{z}_i^{(l+1)} \coloneqq \underline{A}_{i,:}^{(l)} z^{(l)} + \underline{b}_i^{\prime(l)}
\end{align}
where matrix $
\underline{A}_{i,k}^{(l)} = \begin{cases}
    \W{(l+1)}_{i,k} \overline{a}_k^{(l)}, \W{(l+1)}_{i,k} < 0 \\
    \W{(l+1)}_{i,k} \underline{a}_k^{(l)}, \W{(l+1)}_{i,k} \geq 0
    \end{cases}$
reflects the chosen upper or lower bound based on the sign of $\W{(l+1)}_{i,k}$,
and vector $\underline{b}_i^{\prime(l)} = \sum_{k, \W{(l+1)}_{i,k} \geq 0} \W{(l+1)}_{i,k} \underline{b}_k^{(l)} + \sum_{k, \W{(l+1)}_{i,k} < 0} \W{(l+1)}_{i,k} \overline{b}_k^{(l)} + b_i^{(l)}$.
The lower bound $\overline{z}_i^{(l+1)}$ can also be formed similarly. Eventually, we get one linear upper bound and one linear lower bound for ${z}_i^{(l+1)}$, written as:
\begin{align}
\label{eq:layer_l_greedy}
\underline{A}_{i,:}^{(l)} z^{(l)} + \underline{b}_i^{\prime(l)} \leq z_i^{(l+1)} \leq \overline{A}_{i,:}^{(l)} z^{(l)} + \overline{b}_i^{\prime(l)}
\end{align}
A sharp-eyed reader can notice that it is possible to also get a similar bound for each component of $z^{(l)}$ and plug it into~\eqref{eq:layer_l_greedy}, thus obtaining a linear upper bound and a linear lower bound for $z_i^{(l+1)}$ with respect to $z^{(l-1)}$. To do this, we first substitute $\z{(l)} = \W{(l)} \x{(l)} + \bias{(l)}$ into Eq.~\eqref{eq:layer_l_greedy}, obtaining 
\begin{align*}
\underline{A}_{i,:}^{(l)} (\W{(l)} \x{(l)} + \bias{(l)}) + \underline{b}_i^{\prime(l)} \leq z_i^{(l+1)} \leq \overline{A}_{i,:}^{(l)} (\W{(l)} \x{(l)} + \bias{(l)}) + \overline{b}_i^{\prime(l)}
\end{align*}
Applying the bounds on $\x{(l)}$ with respect to $z^{(l-1)}$, and using a similar technique as we did above to obtain~\eqref{eq:layer_l_greedy}, we get linear upper and lower bounds for $z_i^{(l+1)}$ with respect to $z^{(l-1)}$ in the following form:
\begin{align}
\label{eq:layer_lminus1_greedy}
\underline{A}_{i,:}^{(l-1)} z^{(l-1)} + \underline{b}_i^{\prime(l-1)} \leq z_i^{(l+1)} \leq \overline{A}_{i,:}^{(l-1)} z^{(l-1)} + \overline{b}_i^{\prime(l-1)}
\end{align}
where $\underline{b}_i^{\prime(l-1)}$ and $\overline{b}_i^{\prime(l-1)}$ collect all bias terms in the substitution process. Caution has to be taken when forming $\underline{A}_{i,:}^{(l-1)}$ and $\overline{A}_{i,:}^{(l-1)}$, as we need to choose $\underline{a}_k^{(l-1)}$ or $\overline{a}_k^{(l-1)}$ based on the sign of $\underline{A}^{(l)}_{i,:} \W{(l)}_{:,k}$, since the coefficients before each inequality now become $\underline{A}^{(l)}_{i,:} \W{(l)}$ rather than just $\W{(l)}$:
\[
\underline{A}_{i,k}^{(l-1)} = \begin{cases}
    \underline{A}^{(l)}_{i,:} \W{(l)}_{:,k} \overline{a}_k^{(l-1)}, \quad \underline{A}^{(l)}_{i,:} \W{(l)}_{:,k} < 0 \\
    \underline{A}^{(l)}_{i,:} \W{(l)}_{:,k} \underline{a}_k^{(l-1)}, \quad \underline{A}^{(l)}_{i,:} \W{(l)}_{:,k} \geq 0
    \end{cases}
\]

An eagle-eyed reader can notice that we can continue this process until we have reached $z^{(0)}$, and obtain the following linear bounds:
\begin{equation}
\label{eq:layer_0_greedy0}
\underline{A}_{i,:}^{(0)} z^{(0)} + \underline{b}_i^{\prime(0)} \leq z_i^{(l+1)} \leq \overline{A}_{i,:}^{(0)} z^{(0)} + \overline{b}_i^{\prime(0)}
\end{equation}
where $\underline{A}_{i,:}^{(0)}$, $\overline{A}_{i,:}^{(0)}$, $\underline{b}_i^{\prime(0)}$ and $\overline{b}_i^{\prime(0)}$ can be formed similarly as above. Substituting $\z{(0)} = \W{(0)} \x{(0)} + \bias{(0)}$ ($\x{(0)}=x$ is the input of the neural network) simply yields:
\begin{equation}
\label{eq:layer_0_greedy}
\underline{A}_{i,:} x + \underline{b_i^\prime} \leq z_i^{(l+1)} \leq \overline{A}_{i,:} x + \overline{b_i^\prime}
\end{equation}
where $\underline{A}_{i,:}=\underline{A}_{i,:}^{(0)} \W{(0)}$, $\overline{A}_{i,:}=\overline{A}_{i,:}^{(0)} \W{(0)}$ captures the products of $\W{}$ of all layers and the chosen $\underline{a}_k^{(l)}$ or $\overline{a}_k^{(l)}$ for each layer; $\underline{b^\prime}, \overline{b^\prime}$ collects all bias terms (we refer the readers to Theorem 3.2 in~\citet{zhang2018crown} for the exact form of $\underline{A}, \overline{A}, \underline{b^\prime}, \overline{b^\prime}$). This procedure beautifully works  as the linear combination of linear bounds are still linear bounds. Eq. \eqref{eq:layer_0_greedy} is a remarkable result, as the output of a non-linear function (neural network) has been directly bounded linearly for all $x$ close to $x^{\text{nom}}$. This allows us to immediately give upper and lower bounds of $z_i^{(l+1)}$ by considering the worst case $x \in \setS_{in}(x^{\text{nom}})$. When the set is an $L_\infty$ normed ball, this is obvious,
\begin{equation}
\label{eq:layer_greedy_dualnorm}
-\epsilon \| \underline{A}_{i,:} \|_1 + \underline{A}_{i,:} x^{\text{nom}} + \underline{b_i^\prime} \leq z_i^{(l+1)} \leq \epsilon \| \overline{A}_{i,:} \|_1 + \overline{A}_{i,:} x^{\text{nom}} + \overline{b_i^\prime},
\end{equation}

The entire bound propagation process does not involve any LP solver, so it is efficient and can scale to quite large networks. The final objective $c^{\top} \x{(L)} + c_0$ can be treated as an additional linear layer after $z^{(L-1)}$. Because to form the bounds for $z^{(L-1)}$ we need to compute bounds for all $z^{(l)}, l \in [L-1]$ beforehand, each in $O(l)$ time, the time complexity of this method is quadratic in $L$.

\paragraph{Connections Between Existing Methods.} 
For each neuron, the selection of linear bounds are completely independent; this allows further improvements in this greedy algorithm. For example, the selection of $\underline{a}_k^{(l)}$ can depend on $\overline{z}_k^{(l)}$ and $\underline{z}_k^{(l)}$ to adaptively minimize the error between the lower bound and ReLU function. CROWN~\cite{zhang2018crown} and DeepPoly~\cite{singh2019abstract} used this strategy to achieve tighter verification results than Fast-Lin~\cite{wang2018efficient}, DeepZ~\cite{singh2018fast} and Neurify~\cite{wang2018efficient}. Note that although the bound propagation techniques used in these works can be viewed as using different linear relaxations and solve the primal problem greedily in our framework, each of the works has some unique features. For example, DeepPoly~\cite{singh2019abstract} and DeepZ~\cite{singh2018fast} carefully consider floating-point rounding during the computation; \citet{weng2018towards} gives a theoretical hardness proof based on a reduction from the set-cover problem; Neurify~\cite{wang2018efficient} combines the relaxed bound with a branch-and-bound search to give concrete instances of adversarial example if they exist, and also uses the bound for training~\cite{wang2018mixtrain}.

One the other hand, instead of propagating the bounds of $z^{(l+1)}$ to $z^{(l-k)}$ as shown above, we can decouple layer $z^{(l-(k-1))}$ and $z^{(l-k)}$ entirely: suppose we have obtained concrete upper and lower bounds for $z^{(l-(k-1))}$, we can treat $z^{(l-(k-1))}$ as the input layer and only consider a $k$-layer network to compute the bounds of $z^{(l+1)}$. This leads to interval bounds propagation (IBP)~\citep{gowal2018effectiveness} ($k=1$) and ``Box Domain'' ~\citep{mirman2018differentiable} which gives even looser bounds, but its computation cost is also greatly reduced.

The greedy algorithm in primal space is also closely connected to the greedy algorithm in dual space; the dual of~\eqref{eq:rv_cmu_approx} will recover a dual formulation with solution~\eqref{eq:dual_suboptimal_mu}, and the closed from solution are related to the chosen slopes $\overline{a}_i^{(l)}$ and $\underline{a}_i^{(l)}$. This explains the equivalence of Fast-Lin and the greedy algorithm to solve the dual problem presented in Algorithm 1 of ~\cite{wong2018provable}.

\paragraph{The Relationships Between Algorithms in Figure \ref{fig:main}.} Based on the above discussions, we now revisit Figure \ref{fig:main}, and discuss each arrow in this figure on the ``primal view'' side. 

First of all, the arrow from ``Optimal layer-wise convex relaxation'' to CROWN~\cite{zhang2018crown} trivially holds since CROWN is a greedy algorithm to solve LP relaxations (problem $\mathcal{C}$ plus Eq.~\eqref{eq:linearbounds}), which can be included in the convex relaxation framework. Additionally, CROWN is proposed as a more general variant of Fast-Lin~\cite{weng2018towards}. In Fast-Lin, the linear relaxation uses the same slope for the upper and lower bounds; in CROWN, the slopes can be different. In other words, in Eq.~\eqref{eq:linearbounds}, $\overline{a}^{(l)} = \underline{a}^{(l)}$ for Fast-Lin but this is not a requirement for CROWN.

Despite originating from different perspectives,  DeepZ~\citep{singh2018fast} and Fast-Lin~\citep{weng2018towards} share the same relaxations and give numerically identical bounds; so do DeepPoly~\citep{singh2019abstract} and CROWN. This can be observed by translating between the different notations of these papers. Particularly, \citet{Singh2019robustness} commented ``DeepZ has \textit{the same} precision as Fast-Lin and DeepPoly has \textit{the same} precision as CROWN'', although they have several implementation differences.

The arrows from ``LP-Relaxed Dual'' to CROWN and ``LP-Relaxed Dual'' to Fast-Lin come from equation~\eqref{eq:linearbounds}, where CROWN and Fast-Lin use one linear upper bound and one linear lower bound as constraints instead of the general convex constraints in~\eqref{eq:rv_cmu}, so the problem $\mathcal{C}$ becomes a special case of an LP-relaxed problem.

Fast-Lin and Neurify~\cite{wang2018efficient} use the same relaxation for ReLU neurons (and unlike other works, these two only deal with ReLU activation functions). This can be observed by comparing Figure 3 in \citet{wang2018efficient} and Figure 1 in \citet{weng2018towards}: the choice of the slopes $\underline{a}^{(l)}$ and $\overline{a}^{(l)}$ are the same. Numerically, both algorithms also produce the same results, but Neurify additionally implements a branch-and-bound search for solving the exact verification problem with the relaxation based bounds.
\clearpage
\section{Strong duality for Problem \eqref{eq:rv_cmu}: $p^*_{\CalC} = d^*_{\CalC}$}
\label{app:strongdual}
Consider the following perturbed version of problem~\eqref{eq:rv_cmu}:
\begin{equation}\label{eq:rv_cmu_perturb}
\begin{aligned}
    \tilde{p}^*_{\CalC}:= \min_{(\x{[L+1]}, \z{[L]}) \in \CalD} \quad  &c^{\top} \x{(L)} + c_0 \\
    \text{s.t.}\quad  &\z{(l)} = \W{(l)} \x{(l)} + \bias{(l)} + v^{(l)}, l \in [L], \\
    & \underline{\sigma}^{(l)}(\z{(l)}) - \underline{u}^{(l)} \le \x{(l+1)} \le \overline{\sigma}^{(l)}(\z{(l)}) + \overline{u}^{(l)}, l \in [L].
\end{aligned}
\end{equation}
 
\begin{lemma}\label{lem:perturb0}
We assume that for each $l \in [L]$, both $\underline{\sigma}^{(l)}$ and $\overline{\sigma}^{(l)}$ have a finite Lipschitz constant in the domain $[\lwz{(l)}, \upz{(l)}]$. 
There exists a positive constant $C_{\CalC}>0$ such that for any perturbations $(\underline{u}^{[L]}, \overline{u}^{[L]}, v^{[L]})$, we have
\begin{equation}\label{eq:pperturb0}
     \tilde{p}^*_{\CalC} \ge {p}^*_{\CalC} - C_{\CalC} \|(\underline{u}^{[L]}, \overline{u}^{[L]}, v^{[L]})\|_2
\end{equation}
\end{lemma}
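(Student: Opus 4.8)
The plan is to prove the perturbation bound \eqref{eq:pperturb0} directly by constructing, from any feasible point of the perturbed problem \eqref{eq:rv_cmu_perturb}, a nearby feasible point of the original relaxed problem \eqref{eq:rv_cmu}, and controlling the change in the objective. More precisely, I would fix an arbitrary feasible tuple $(\x{[L+1]}, \z{[L]})$ for \eqref{eq:rv_cmu_perturb} with perturbations $(\underline{u}^{[L]}, \overline{u}^{[L]}, v^{[L]})$, and then inductively modify it, layer by layer, into a feasible tuple $(\tilde{\x{}}^{[L+1]}, \tilde{\z{}}^{[L]})$ for the unperturbed problem. The idea is that the equality constraints $\z{(l)} = \W{(l)}\x{(l)} + \bias{(l)}$ and the inclusion $\x{(0)} \in \setS_{in}(x^{\text{nom}})$ are easy to restore by "pushing" $v^{(l)}$ into the preactivation and $\x{(0)}$ unchanged, while the box constraints $\underline{\sigma}^{(l)}(\z{(l)}) \le \x{(l+1)} \le \overline{\sigma}^{(l)}(\z{(l)})$ are restored by projecting $\x{(l+1)}$ back into the slab $[\underline{\sigma}^{(l)}(\tilde{\z{}}^{(l)}), \overline{\sigma}^{(l)}(\tilde{\z{}}^{(l)})]$. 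Here the finite Lipschitz constant of $\underline{\sigma}^{(l)}$ and $\overline{\sigma}^{(l)}$ is exactly what lets us bound how much this projection moves $\x{(l+1)}$ in terms of the accumulated error so far.

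The key steps, in order, would be: (i) set $\tilde{\x{}}^{(0)} = \x{(0)}$, which is feasible since $\setS_{in}(x^{\text{nom}})$ is unperturbed; (ii) inductively, given $\tilde{\x{}}^{(l)}$ with $\|\tilde{\x{}}^{(l)} - \x{(l)}\|$ bounded by a constant times the perturbation norm, define $\tilde{\z{}}^{(l)} = \W{(l)}\tilde{\x{}}^{(l)} + \bias{(l)}$; the difference $\tilde{\z{}}^{(l)} - \z{(l)} = \W{(l)}(\tilde{\x{}}^{(l)} - \x{(l)}) - v^{(l)}$ is then controlled by $\|\W{(l)}\|$ times the previous error plus $\|v^{(l)}\|$; (iii) define $\tilde{\x{}}^{(l+1)}$ as the projection of $\x{(l+1)}$ onto the interval $[\underline{\sigma}^{(l)}(\tilde{\z{}}^{(l)}), \overline{\sigma}^{(l)}(\tilde{\z{}}^{(l)})]$ coordinatewise; since $\x{(l+1)}$ already lies in the $\underline{u}^{(l)}, \overline{u}^{(l)}$-fattened version of the slab over $\z{(l)}$, and the endpoints of the true slab over $\tilde{\z{}}^{(l)}$ differ from those over $\z{(l)}$ by at most (Lipschitz constant) $\times \|\tilde{\z{}}^{(l)} - \z{(l)}\|$, the projection moves $\x{(l+1)}$ by at most a constant times the accumulated error plus $\|\underline{u}^{(l)}\| + \|\overline{u}^{(l)}\|$; (iv) unrolling the recursion, all errors $\|\tilde{\x{}}^{(l)} - \x{(l)}\|$ and $\|\tilde{\z{}}^{(l)} - \z{(l)}\|$ are bounded by a single constant $C$, depending only on the weight norms $\|\W{(l)}\|$ and the Lipschitz constants of $\underline{\sigma}^{(l)}, \overline{\sigma}^{(l)}$ (so independent of the perturbations and the chosen feasible point), times $\|(\underline{u}^{[L]}, \overline{u}^{[L]}, v^{[L]})\|_2$; (v) finally, the objective changes by $c^\top(\tilde{\x{}}^{(L)} - \x{(L)})$, which is at most $\|c\| \cdot C \cdot \|(\underline{u}^{[L]}, \overline{u}^{[L]}, v^{[L]})\|_2$, giving $c^\top \tilde{\x{}}^{(L)} + c_0 \le c^\top \x{(L)} + c_0 + C_{\CalC}\|(\underline{u}^{[L]}, \overline{u}^{[L]}, v^{[L]})\|_2$ with $C_{\CalC} = \|c\| C$; since $\tilde{\x{}}^{(L)}$ is feasible for \eqref{eq:rv_cmu}, its objective is at least $p^*_{\CalC}$, and taking the infimum over feasible points of the perturbed problem yields \eqref{eq:pperturb0}.

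I expect the main obstacle to be the bookkeeping in step (iii): verifying that the coordinatewise projection of $\x{(l+1)}$ onto $[\underline{\sigma}^{(l)}(\tilde{\z{}}^{(l)}), \overline{\sigma}^{(l)}(\tilde{\z{}}^{(l)})]$ is well-defined (i.e., that this interval is nonempty, which follows from $\underline{\sigma}^{(l)} \le \overline{\sigma}^{(l)}$ pointwise on $[\lwz{(l)}, \upz{(l)}]$, provided $\tilde{\z{}}^{(l)}$ stays in this box — a point that needs care, or alternatively one extends $\underline{\sigma}^{(l)}, \overline{\sigma}^{(l)}$ slightly beyond the box so the bound $\underline{\sigma}^{(l)} \le \overline{\sigma}^{(l)}$ persists) and giving the clean quantitative estimate that the projection distance is at most $\mathrm{dist}(\x{(l+1)}, [\underline{\sigma}^{(l)}(\z{(l)}) - \underline{u}^{(l)}, \overline{\sigma}^{(l)}(\z{(l)}) + \overline{u}^{(l)}]) + \|\underline{u}^{(l)}\| + \|\overline{u}^{(l)}\| + \mathrm{Lip} \cdot \|\tilde{\z{}}^{(l)} - \z{(l)}\|$, the first term being zero by feasibility. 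A minor subtlety is that one should work with a fixed norm throughout (say $\ell_2$ on each layer, stacked) and absorb dimension-dependent norm-equivalence constants into $C_{\CalC}$; since the number of layers and their widths are fixed, this is harmless. Everything else is a routine induction producing an explicit (if unwieldy) constant $C_{\CalC}$ of the form $\|c\|\sum_{l} \prod_{k>l}(\text{stuff involving } \|\W{(k)}\| \text{ and Lipschitz constants})$.
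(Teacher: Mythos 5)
Your overall strategy is the same perturbation analysis the paper carries out: take an arbitrary feasible point of the perturbed problem \eqref{eq:rv_cmu_perturb}, transform it layer by layer into a nearby feasible point of \eqref{eq:rv_cmu}, let the Lipschitz constants of $\underline{\sigma}^{(l)},\overline{\sigma}^{(l)}$ and the operator norms $\|\W{(l)}\|$ control the accumulated error, and conclude with a constant of the form $C_{\CalC}=\|c\|\,C$. The paper phrases this as an induction bounding $\dist(\tidx{(l)},\X{(l)})$ and $\dist(\tidz{(l)},\Z{(l)})$ for the reachable sets of the two problems, with constants $C_x^{(l)},C_z^{(l)}$ accumulating exactly as in your step (iv), so the difference is mostly one of bookkeeping style.

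There is, however, a concrete gap in your construction: both \eqref{eq:rv_cmu} and \eqref{eq:rv_cmu_perturb} optimize over $\CalD$, which contains the explicit box constraints $\lwz{(l)}\le \z{(l)} \le \upz{(l)}$ in addition to $\x{(0)}\in\setS_{in}(x^{\text{nom}})$, and your point $\tilde{z}^{(l)}=\W{(l)}\tilde{x}^{(l)}+\bias{(l)}$ need not satisfy them. Already at $l=0$ you get $\tilde{z}^{(0)}=\z{(0)}-v^{(0)}$, which can leave $[\lwz{(0)},\upz{(0)}]$ even though $\z{(0)}$ lies inside it. The remedy you sketch---extending $\underline{\sigma}^{(l)},\overline{\sigma}^{(l)}$ slightly beyond the box---only makes the coordinatewise projection well defined; it does not restore membership in $\CalD$, so the tuple you build can be infeasible for \eqref{eq:rv_cmu}, and then the decisive inequality $c^{\top}\tilde{x}^{(L)}+c_0\ge p^*_{\CalC}$ in your step (v) does not follow. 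This cannot be waved away as a corner case: the lemma is stated for arbitrary prespecified bounds, and it is invoked precisely in degenerate regimes where these box constraints are active (see Appendix~\ref{app:slatervsours}). To close the gap you must adjust the activations themselves rather than only the preactivations, i.e.\ choose the comparison point as a nearest element of the reachable set of the unperturbed constraints; this is what the paper's definitions $\Z{(l)}=\{\W{(l)}\x{(l)}+\bias{(l)}:\x{(l)}\in\X{(l)}\}\cap[\lwz{(l)},\upz{(l)}]$ and its bounds on $\dist(\tidz{(l)},\Z{(l)})$, $\dist(\tidx{(l)},\X{(l)})$ accomplish, so that feasibility (box constraints included) holds by construction while the same Lipschitz bookkeeping you describe controls the distance and yields $C_{\CalC}=C_x^{(L)}\|c\|$.
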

Lemma~\ref{lem:perturb0} shows that the optimal value of the perturbed problem, i.e., $\tilde{p}^*_{\CalC}$, ``smoothly'' changes  with the perturbations. We delay the proof of Lemma~\ref{lem:perturb0} in Section~\ref{app:pperturb}. Combined with convexity, this ensures the strong duality for problem \eqref{eq:rv_cmu}.

\begin{figure}[t]
  \centering
  \includegraphics[width=0.48\textwidth]{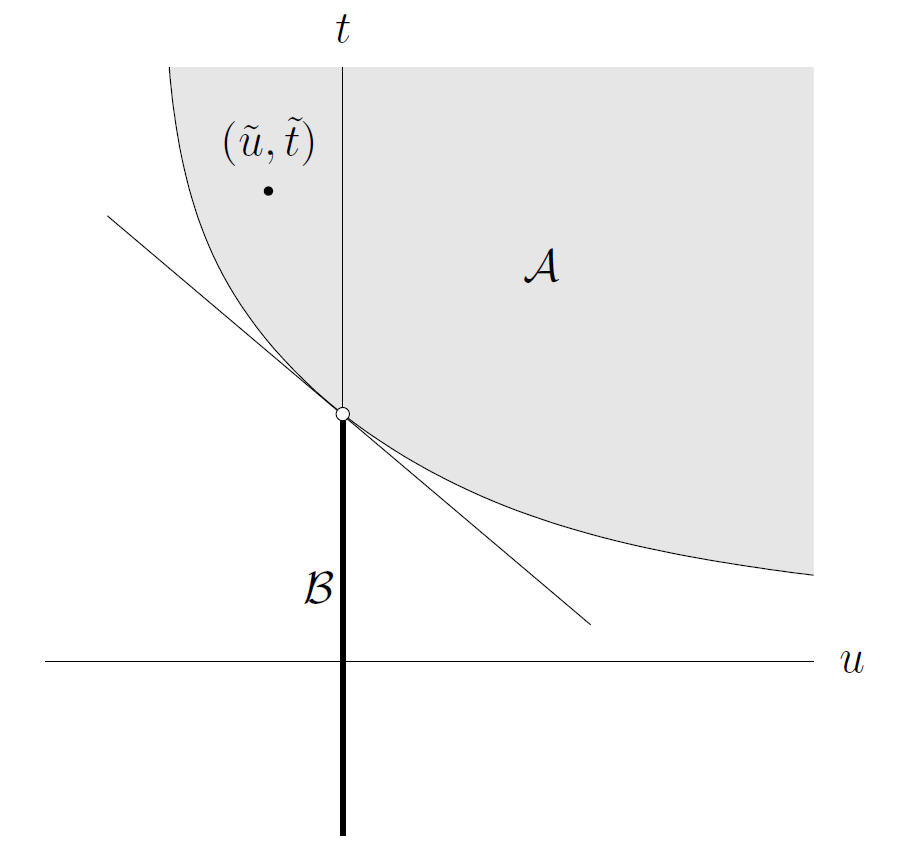}
  \includegraphics[width=0.48\textwidth]{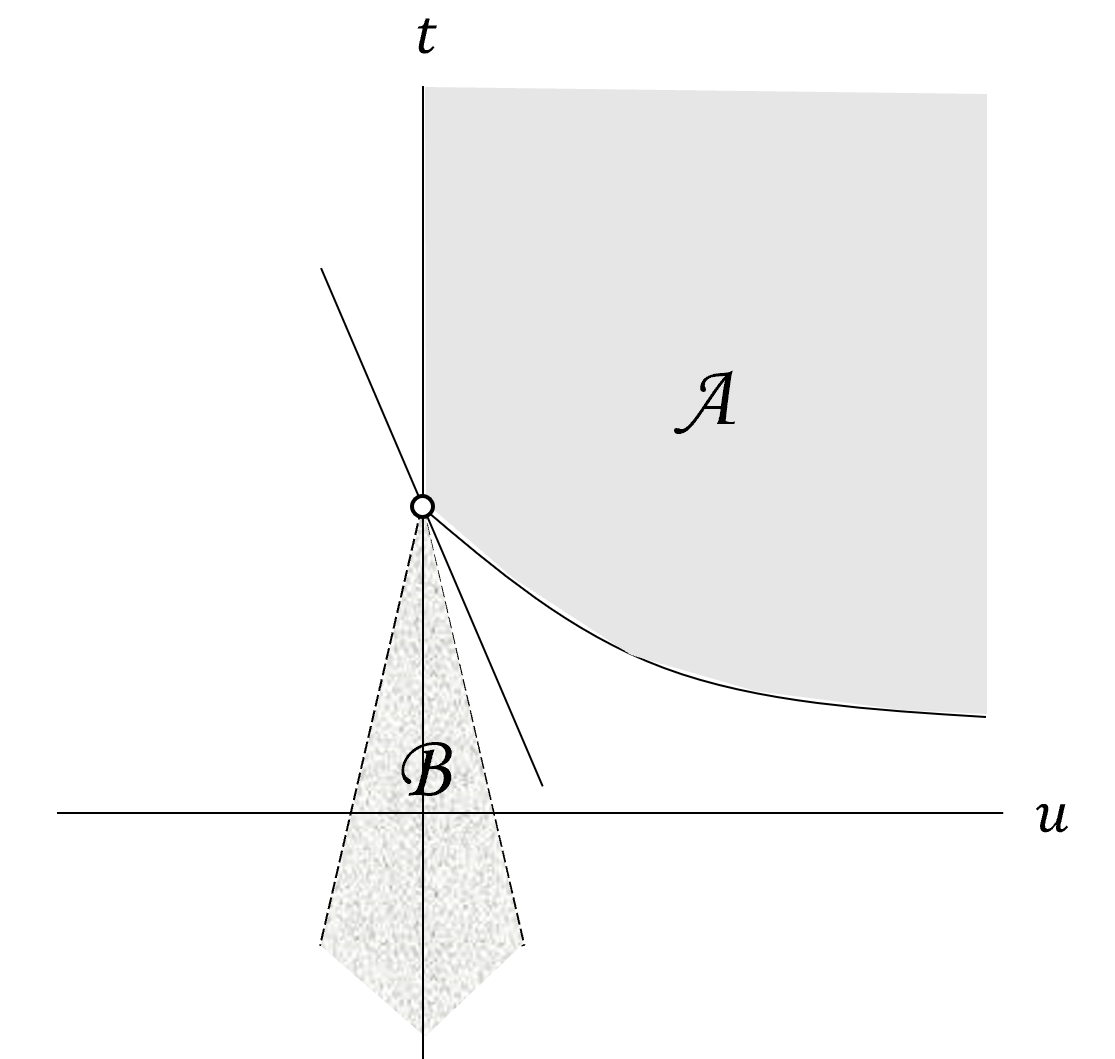}
  \caption{Illustration of strong duality proof for a convex problem. Left: proof under Slater's condition (picture from \cite{boyd2004convex} Section 5.3.2). Right: our proof. In both settings, the set $\CalA$ and $\CalB$ are convex and do not intersect, so they can be separated by a hyperplane. Slater's condition (Left) assumes existence of a point that strictly satisfies the inequality constraints, i.e., $(\tilde{u},\tilde{t})$ in Figure 5(left), and thus any separating hyperplane must be nonvertical. In our setting (Right), we take $B$ to be a much larger set (thanks to Lemma~\ref{lem:perturb0}), and thus any separating hyperplane must be nonvertical. Therefore, we can get strong duality without the Slater's condition.}
\label{fig:strongduality}
\end{figure}

\begin{proof}[Proof of Theorem~\ref{thm:strongdual}]
The structure of the proof follows the proof of strong duality given the Slater's condition in \cite{boyd2004convex} (Section 5.3.2). However, we do not assume the Slater's condition in our result here. Let's define
\begin{equation*}
\begin{aligned}
\CalA = \big\{(&\underline{u}^{[L]}, \overline{u}^{[L]}, v^{[L]}, t): \exists (\x{[L+1]}, \z{[L]}) \in \CalD, \underline{\sigma}^{(l)}(\z{(l)}) - \underline{u}^{(l)} \le \x{(l+1)} \le \overline{\sigma}^{(l)}(\z{(l)}) + \overline{u}^{(l)}, \\
& \z{(l)} = \W{(l)} \x{(l)} + \bias{(l)} + v^{(l)}, \forall l \in [L],  c^{\top}\x{(L)} + c_0 \le t \big\},
\end{aligned}
\end{equation*}
and 
\begin{equation*}
\begin{aligned}
\CalB = &\big\{(\underline{u}^{[L]}, \overline{u}^{[L]}, v^{[L]}, t): & t < {p}^*_{\CalC} - C_{\CalC} \|(\underline{u}^{[L]}, \overline{u}^{[L]}, v^{[L]})\|_2 \big\} .
\end{aligned}
\end{equation*}
$\CalA$ is convex because the problem \eqref{eq:rv_cmu_perturb} is convex. $\CalB$ is convex by definition. The sets $\CalA$ and $\CalB$ do not intersect, as illustrated in Figure~\ref{fig:strongduality}. To see this, suppose $(\underline{u}^{[L]}, \overline{u}^{[L]}, v^{[L]}, t) \in \CalA \cap \CalB$. Since $(\underline{u}^{[L]}, \overline{u}^{[L]}, v^{[L]}, t) \in \CalB$, we have $t < {p}^*_{\CalC} - C_{\CalC} \|(\underline{u}^{[L]}, \overline{u}^{[L]}, v^{[L]})\|_2$. Since $(\underline{u}^{[L]}, \overline{u}^{[L]}, v^{[L]}, t) \in \CalA$, there exists $(\x{[L+1]}, \z{[L]}) \in \CalD$ such that it satisfies the constraints in problem \eqref{eq:rv_cmu_perturb}, and $t \ge c^{\top}\x{(L)} + c_0 \ge \tilde{p}^*_{\CalC} \ge {p}^*_{\CalC} - C_{\CalC} \|(\underline{u}^{[L]}, \overline{u}^{[L]}, v^{[L]})\|_2$, where the last inequality comes from \eqref{eq:pperturb0}. This is a contradiction!  

By the separating hyperplane theorem, there exists $(\underline{\lambda}^{[L]}, \overline{\lambda}^{[L]}, \mu^{[L]}, \nu) \neq 0 $ and $\alpha$ such that
\begin{equation}\label{eq:sepA}
\begin{aligned}
    &(\underline{u}^{[L]}, \overline{u}^{[L]}, v^{[L]}, t) \in \CalA \Rightarrow \underline{\lambda}^{[L]{\top}} \underline{u}^{[L]} + \underline{\lambda}^{[L]{\top}}\overline{u}^{[L]} + \mu^{[L]{\top}} v^{[L]} + \nu t \ge \alpha,
\end{aligned}
\end{equation}
and 
\begin{equation}\label{eq:sepB}
\begin{aligned}
    &(\underline{u}^{[L]}, \overline{u}^{[L]}, v^{[L]}, t) \in \CalB \Rightarrow \underline{\lambda}^{[L]{\top}} \underline{u}^{[L]} + \underline{\lambda}^{[L]{\top}}\overline{u}^{[L]} + \mu^{[L]{\top}} v^{[L]} + \nu t \le \alpha,
\end{aligned}
\end{equation}
From \eqref{eq:sepA}, we conclude that $\underline{\lambda}^{[L]} \ge 0$, $\overline{\lambda}^{[L]} \ge 0$ and $\nu \ge 0$. Otherwise, $\underline{\lambda}^{[L]{\top}} \underline{u}^{[L]} + \underline{\lambda}^{[L]{\top}}\overline{u}^{[L]} + \nu t$ is unbounded from below over $\CalA$, contradicting \eqref{eq:sepA}. Since $(0,0,0,t) \in \CalB$ for any $t < {p}^*_{\CalC}$, we have $\nu t \le \alpha$ for any $t < {p}^*_{\CalC}$ thanks to \eqref{eq:sepB}, and thus $\nu {p}^*_{\CalC} \le \alpha$. Together with \eqref{eq:sepA}, we conclude that for any $(\x{[L+1]}, \z{[L]}) \in \CalD$, 
\begin{equation}\label{eq:rv_cmu_dual_sep}
\begin{aligned}
     \nu (c^{\top} \x{(L)} + c_0)  &+ \sum_{l=0}^{L-1} \mu^{(l){\top}} ( \z{(l)} - \W{(l)} \x{(l)} - \bias{(l)} ) + \sum_{l=0}^{L-1} \underline{\lambda}^{(l){\top}} (\underline{\sigma}^{(l)}(\z{(l)}) - \x{(l+1)}) \\
    &+ \sum_{l=0}^{L-1} \overline{\lambda}^{(l){\top}} ( \x{(l+1)}-\overline{\sigma}^{(l)}(\z{(l)}) ) \ge \alpha \ge \nu {p}^*_{\CalC}.
\end{aligned}
\end{equation}
Assume that $\nu > 0$. In that case, we can divide \eqref{eq:rv_cmu_dual_sep} by $\nu$ to obtain
\begin{equation*}
    L(\x{[L+1]}, \z{[L]}, \underline{\lambda}^{[L]}/\nu, \overline{\lambda}^{[L]}/\nu, \mu^{[L]}/\nu) \ge {p}^*_{\CalC}
\end{equation*}
for all $(\x{[L+1]}, \z{[L]}) \in \CalD$, where $L(\cdot)$, defined in \eqref{eq:rv_cmu_dual}, is the Lagrangian of \eqref{eq:rv_cmu}. Minimizing over $(\x{[L+1]}, \z{[L]}) \in \CalD$, we obtain $g_{\CalC}(\mu^{[L]}/\nu, \underline{\lambda}^{[L]}/\nu, \overline{\lambda}^{[L]}/\nu) \ge {p}^*_{\CalC}$. By weak duality, we have $g_{\CalC}(\mu^{[L]}/\nu, \underline{\lambda}^{[L]}/\nu, \overline{\lambda}^{[L]}/\nu) \le {p}^*_{\CalC}$, so in fact $g_{\CalC}(\mu^{[L]}/\nu, \underline{\lambda}^{[L]}/\nu, \overline{\lambda}^{[L]}/\nu) = {p}^*_{\CalC}$. This shows that strong duality holds, and that the dual optimum is attained, at least in the case when $\nu > 0$. 

Now we consider the case $\nu = 0$. From \eqref{eq:rv_cmu_dual_sep}, we conclude that for any $(\x{[L+1]}, \z{[L]}) \in \CalD$, 
\begin{equation}\label{eq:rv_cmu_dual_sep2}
\begin{aligned}
    & \sum_{l=0}^{L-1} \mu^{(l){\top}} ( \z{(l)} - \W{(l)} \x{(l)} - \bias{(l)} ) + \sum_{l=0}^{L-1} \underline{\lambda}^{(l){\top}} ( \underline{\sigma}^{(l)}(\z{(l)}) - \x{(l+1)} ) \\
    &+ \sum_{l=0}^{L-1} \overline{\lambda}^{(l){\top}} ( \x{(l+1)} - \overline{\sigma}^{(l)}(\z{(l)}) ) \ge \alpha \ge 0.
\end{aligned}
\end{equation}
Taking any feasible point of problem \eqref{eq:rv_cmu}, i.e., $(\x{[L+1]}, \z{[L]}) \in \setS_{\CalC}$ and combining with $\underline{\lambda}^{[L]} \ge 0, \overline{\lambda}^{[L]} \ge 0$, we know that the left-hand-side of \eqref{eq:rv_cmu_dual_sep2} is non-positive, and thus $\alpha = 0$. Then from \eqref{eq:sepB}, we conclude that for any $t \in \R$
\begin{equation*}
\begin{aligned}
    & \|(\underline{u}^{[L]}, \overline{u}^{[L]}, v^{[L]})\|_2 < \frac{{p}^*_{\CalC} - t}{C_{\CalC}} \Rightarrow \underline{\lambda}^{[L]{\top}} \underline{u}^{[L]} + \underline{\lambda}^{[L]{\top}}\overline{u}^{[L]} + \mu^{[L]{\top}} v^{[L]} \le 0,
\end{aligned}
\end{equation*}
which can only be possible when $(\underline{\lambda}^{[L]}, \overline{\lambda}^{[L]}, \mu^{[L]}) = 0 $. Combined with $\nu=0$, this contradicts with $(\underline{\lambda}^{[L]}, \overline{\lambda}^{[L]}, \mu^{[L]}, \nu) \neq 0 $, and thus $\nu$ cannot be 0.
\end{proof}

\subsection{Cases where the Slater's condition fails but strong duality holds true by Theorem~\ref{thm:strongdual}}
\label{app:slatervsours}
We emphasize that Theorem~\ref{thm:strongdual} guarantees the strong duality for any pre-specified activation bounds $[\underline{z}^{(l)}, \overline{z}^{(l)}]$ that can be either loose or tight, and for any $\underline{\sigma}^{(l)}$ and $\overline{\sigma}^{(l)}$ that have a finite Lipschitz constant in the domain $[\lwz{(l)}, \upz{(l)}]$. There are several important cases when Slater’s condition does not hold but strong duality holds true by Theorem~\ref{thm:strongdual}.

The first typical scenario is when the pre-specified activation bounds $[\underline{z}^{(l)}, \overline{z}^{(l)}]$ is loose and all the feasible activations $z^{(l)}$ are on the boundary. Let’s consider a simple one-layer neural network: 
\begin{equation*}
\begin{aligned}
&x^{(0)} \in \setS_{in}(x^{\text{nom}}), \\
&z^{(0)} = \W{(0)} x^{(0)} + b^{(0)}, \quad z^{(0)} \in [ \underline{z}^{(0)}, \overline{z}^{(0)} ], \\
&\underline{ReLU}(z^{(0)}) \le x^{(1)} \le  \overline{ReLU}(z^{(0)}).
\end{aligned}
\end{equation*}
Suppose that $\W{(0)} = 0$, $b^{(0)} = -1$, $\underline{z}^{(0)} = -1$ and $\overline{z}^{(0)} = 1$. Then $z^{(0)}$ can only be -1, and $\underline{ReLU}(z^0) = \overline{ReLU}(z^0) = 0$, and thus there does not exist $x^{(1)}$ such that $\underline{ReLU}(z^{(0)}) < x^{(1)} <  \overline{ReLU}(z^{(0)})$. In general, orthogonality between $x^{(l)}$ and $\text{span}(\W{(l)})$ easily leads to degeneracy of $z^{(l)}$, which can result in the failure of the Slater’s condition.

The second typical scenario is when the pre-specified activation bounds in later layers, e.g., $z^{(1)} \in [ \underline{z}^{(1)}, \overline{z}^{(1)} ]$, forces all feasible points in previous layers, e.g., $x^{(1)}$, to be on the boundary. This degenerate case may occur when one takes the branch-and-bound strategy to split unstable neurons. Let’s consider a simple two-layer neural network: 
\begin{equation*}
\begin{aligned}
&x^{(0)} \in \setS_{in}(x^{\text{nom}}):=[-1,1], \\
&z^{(0)} = x^{(0)}, \quad z^{(0)} \in [ -1, 1 ], \\
&\underline{ReLU}(z^{(0)}) \le x^{(1)} \le  \overline{ReLU}(z^{(0)}), \\
&z^{(1)} = x^{(1)} - 1, \quad z^{(1)} \in [ 0, 1 ], \\
&\underline{ReLU}(z^{(1)}) \le x^{(2)} \le  \overline{ReLU}(z^{(1)}).
\end{aligned}
\end{equation*}
Due to the pre-specified bound $z^{(1)} \in [0,1]$, $x^{(1)}$ can only take value 1, which is on the boundary of the nonlinear constraint $\underline{ReLU}(z^{(0)}) \le x^{(1)} \le  \overline{ReLU}(z^{(0)})$. This leads to failure of the Slater’s condition.

After all, there are many edge cases that the Slater’s condition does not cover to prove Theorem~\ref{thm:strongdual}. Therefore, we would like to take a novel approach, utilizing the Lipschitz continuity of problem~\eqref{eq:rv_cmu}, to prove the strong duality without the Slater's condition.

\subsection{Proof of Lemma~\ref{lem:perturb0}}
\label{app:pperturb}
Although the proof seems to be long, it is an elementary perturbation analysis for problem~\eqref{eq:rv_cmu}. We write down every detail so that one can easily check its correctness. 

\begin{proof}[Proof of Lemma~\ref{lem:perturb0}]
When problem~\eqref{eq:rv_cmu_perturb} is infeasible, i.e., $\tidX{(L)} = \emptyset$, $\tilde{p}^*_{\CalC} = + \infty$ and \eqref{eq:pperturb0} naturally holds true. In the following, we prove \eqref{eq:pperturb0} when problem~\eqref{eq:rv_cmu_perturb} is feasible. 

In this case, we define $\X{(0)} = \tidX{(0)} = \setS_{in}(x^{\text{nom}})$, $\Z{(l)}$, $\tidZ{(l)}$, $\X{(l)}$ and $\tidX{(l)}$ recursively as follows:
\begin{align*}
    &\Z{(l)} = \{ \W{(l)} \x{(l)} + \bias{(l)} : \x{(l)} \in \X{(l)} \} \cap [\lwz{(l)}, \upz{(l)}], \\
    &\tidZ{(l)} = \{ \W{(l)} \tidx{(l)} + \bias{(l)} + v^{(l)} : \tidx{(l)} \in \tidX{(l)} \} \cap [\lwz{(l)}, \upz{(l)}], \\
    &\X{(l+1)} = \{\x{(l+1)}: \underline{\sigma}^{(l)}(\z{(l)}) \le \x{(l+1)} \le \overline{\sigma}^{(l)}(\z{(l)}),  \z{(l)}\in \Z{(l)} \}, \\
    &\tidX{(l+1)} = \{\tidx{(l+1)}: \underline{\sigma}^{(l)}(\tidz{(l)}) - \underline{u}^{(l)} \le \tidx{(l+1)} \le \overline{\sigma}^{(l)}(\tidz{(l)}) + \overline{u}^{(l)}, \tidz{(l)}\in \tidZ{(l)} \}.
\end{align*}
Intuitively, $\Z{(l)}$, $\tidZ{(l)}$, $\X{(l)}$ and $\tidX{(l)}$ are the set of activations that are achievable by the original problem~\eqref{eq:rv_cmu} and the perturbed problem~\eqref{eq:rv_cmu_perturb} given $\x{(0)} \in \setS_{in}(x^{\text{nom}})$ and $\z{(l)} \in [\lwz{(l)}, \upz{(l)}]$. Since both problems are feasible, all the sets above are non-empty.

In the first step, we prove that for every $l \in [L+1]$, there exist positive constants $C_x^{(l)}$ and $C_z^{(l)}$ such that
\begin{align}
    \sup_{\tidx{(l)} \in \tidX{(l)}} \dist(\tidx{(l)}, \X{(l)}) &\le C_x^{(l)} \|(\underline{u}^{[L]}, \overline{u}^{[L]}, v^{[l]})\|_2, \label{eq:xperturb} \\
    \sup_{\tidz{(l)} \in \tidZ{(l)}} \dist(\tidz{(l)}, \Z{(l)}) &\le C_z^{(l)} \|(\underline{u}^{[L]}, \overline{u}^{[L]}, v^{[l+1]})\|_2, \label{eq:zperturb}
\end{align}
where $\dist(s, \setS) \coloneqq \inf_{s' \in \setS} \|s - s'\|_2$. This means that the perturbation in the achievable activations are ``smooth".

Since $\X{(l)} = \tidX{(l)} = \setS_{in}(x^{\text{nom}})$, we have that \eqref{eq:xperturb} holds true for $l=0$ with $C_x^{(0)} = 0$. In the following, we use mathematical induction to prove \eqref{eq:zperturb} for $0\le l \le L-1$ and \eqref{eq:xperturb} for $1\le l \le L$.

First, suppose $\dist(\tidx{(l)}, \X{(l)}) \le C_x^{(l)} \|(\underline{u}^{[L]}, \overline{u}^{[L]}, v^{[l]})\|_2$ holds true for any $\tidx{(l)} \in \tidX{(l)}$. Then for any $\tidz{(l)} = \W{(l)} \tidx{(l)} + \bias{(l)} + v^{(l)} \in \tidZ{(l)}$, we have
\begin{equation*}
\begin{aligned}
    & \dist(\tidz{(l)}, \Z{(l)}) := \inf_{ \z{(l)} \in \Z{(l)} } \| \tidz{(l)} - \z{(l)} \|  \le \inf_{\x{(l)} \in \X{(l)}} \|\W{(l)} (\tidx{(l)} - \x{(l)}) + v^{(l)}\| \\
    & \le \inf_{\x{(l)} \in \X{(l)}} \|\W{(l)}\| \|\tidx{(l)} - \x{(l)}\| + \|v^{(l)}\|  = \|\W{(l)}\| \dist(\tidx{(l)}, \X{(l)}) + \|v^{(l)}\| \\
    & \le \|\W{(l)}\| C_x^{(l)} \|(\underline{u}^{[L]}, \overline{u}^{[L]}, v^{[l]})\| + \|v^{(l)}\| \le \left( (C_x^{(l)})^2 \|\W{(l)}\|^2 + 1 \right)^2 \|(\underline{u}^{[L]}, \overline{u}^{[L]}, v^{[l+1]})\|.
\end{aligned}
\end{equation*}
Therefore, \eqref{eq:zperturb} holds true with $C_z^{(l)} = \left( (C_x^{(l)})^2 \|\W{(l)}\|^2 + 1 \right)^2$. 

Then by definition, for any $\tidx{(l+1)} \in \tidX{(l+1)}$, there exists $\tidz{(l)} \in \tidZ{(l)}$ such that 
\begin{equation*}
\begin{aligned}
    \underline{\sigma}^{(l)}(\tidz{(l)}) - \underline{u}^{(l)} \le \tidx{(l+1)} \le \overline{\sigma}^{(l)}(\tidz{(l)}) + \overline{u}^{(l)}.
\end{aligned}
\end{equation*} 
By the induction assumption, there exists $\z{(l)} \in \Z{(l)}$ such that 
$$\dist(\tidz{(l)}, \z{(l)}) \le C_z^{(l)} \|(\underline{u}^{[L]}, \overline{u}^{[L]}, v^{[l+1]})\|_2.$$
Thus, we have
\begin{equation*}
\begin{aligned}
    \dist(\tidx{(l+1)}, \X{(l+1)}) &= \inf_{ \x{(l+1)} \in \X{(l+1)} } \| \tidx{(l+1)} - \x{(l+1)} \| \\
    & \le \inf \{\| \tidx{(l+1)} - \x{(l+1)} \| :  \underline{\sigma}^{(l)}(\z{(l)}) \le \x{(l+1)} \le \overline{\sigma}^{(l)}(\z{(l)}) \}.
\end{aligned}
\end{equation*}
We re-parametrize $\tidx{(l+1)}$ and $\x{(l+1)}$ as
\begin{equation*}
\begin{aligned}
    & \tidx{(l+1)} = \underline{\sigma}^{(l)}(\tidz{(l)}) + \tilde{t}, \quad \x{(l+1)} = \underline{\sigma}^{(l)}(\z{(l)}) + t,
\end{aligned}
\end{equation*}
where
\begin{equation*}
\begin{aligned}
    - \underline{u}^{(l)} \le \tilde{t} \le \Delta \sigma^{(l)}(\tidz{(l)}) + \overline{u}^{(l)}, \quad 0 \le t \le \Delta \sigma^{(l)}(\z{(l)}),\\
    \Delta \sigma^{(l)}(\tidz{(l)}) = \overline{\sigma}^{(l)}(\z{(l)}) - \underline{\sigma}^{(l)}(\z{(l)}).
\end{aligned}
\end{equation*}
It is easy to prove that if $\underline{\sigma}^{(l)}$ and $\overline{\sigma}^{(l)}$ have Lipschitz constant $\lwL{(l)}$ and $\upL{(l)}$ respectively, $\Delta \sigma^{(l)}$ has a Lipschitz constant $\lwL{(l)}+\upL{(l)}$. Then we have
\begin{equation*}
\begin{aligned}
    \dist(\tidx{(l+1)}, \X{(l+1)})  & \le \|\underline{\sigma}^{(l)}(\tidz{(l)}) - \underline{\sigma}^{(l)}(\z{(l)})\| + \inf_{t\in [0, \Delta \sigma^{(l)}(\z{(l)})]} \|\tilde{t} - t\| \\
    & \le \lwL{(l)} \|\tidz{(l)} - \z{(l)}\| + \left(\sum_k \inf_{t_k\in [0, \Delta \sigma_k^{(l)}(\z{(l)})]} |\tilde{t}_k - t_k|^2\right)^{1/2}.
\end{aligned}
\end{equation*}
We have the entry-wise bound for $\tilde{t} - t$:
\begin{equation*}
\begin{aligned}
\inf_{t_k\in [0, \Delta \sigma_k^{(l)}(\z{(l)})]} |\tilde{t}_k - t_k|^2 &\le \max(|\underline{u}_k^{(l)}|^2, |\Delta \sigma_k^{(l)}(\tidz{(l)}) - \Delta \sigma_k^{(l)}(\z{(l)}) + \overline{u}^{(l)}|^2) \\
&\le 2 \left(|\Delta \sigma_k^{(l)}(\tidz{(l)}) - \Delta \sigma_k^{(l)}(\z{(l)})|^2 + |\underline{u}_k^{(l)}|^2 + |\overline{u}^{(l)}|^2\right)
\end{aligned}
\end{equation*}
Therefore, we get
\begin{equation*}
\begin{aligned}
    \inf_{t\in [0, \Delta \sigma^{(l)}(\z{(l)})]} \|\tilde{t} - t\| & \le \sqrt{2} \left( \|\Delta \sigma^{(l)}(\tidz{(l)}) - \Delta \sigma^{(l)}(\z{(l)})\|^2 +  \|\underline{u}_k^{(l)}\|^2 + \|\overline{u}^{(l)}\|^2 \right)^{1/2} \\
    & \le \sqrt{2} \left( (\lwL{(l)}+\upL{(l)})^2\|\tidz{(l)} - \z{(l)}\|^2 +  \|\underline{u}_k^{(l)}\|^2 + \|\overline{u}^{(l)}\|^2 \right)^{1/2} \\
    & \le \sqrt{ 2(\lwL{(l)}+\upL{(l)})^2 (C_z^{(l)})^2 + 2} ~ \|(\underline{u}^{[l+1]}, \overline{u}^{[l+1]}, v^{[l+1]})\|_2
\end{aligned}
\end{equation*}
Similarly, we have $\lwL{(l)} \|\tidz{(l)} - \z{(l)}\| \le \lwL{(l)}C_z^{(l)} \|(\underline{u}^{[l+1]}, \overline{u}^{[l+1]}, v^{[l+1]})\|_2$. Therefore, we obtain
\begin{equation*}
\begin{aligned}
    \dist(\tidx{(l+1)}, \X{(l+1)}) \le C_x^{(l+1)} \|(\underline{u}^{[l+1]}, \overline{u}^{[l+1]}, v^{[l+1]})\|_2,
\end{aligned}
\end{equation*}
where $C_x^{(l+1)} = \lwL{(l)}C_z^{(l)} + \sqrt{ 2(\lwL{(l)}+\upL{(l)})^2 (C_z^{(l)})^2 + 2}$.

Then by mathematical induction, we proved that \eqref{eq:zperturb} for $0\le l \le L-1$ and \eqref{eq:xperturb} for $1\le l \le L$.

In the second step, we prove \eqref{eq:pperturb0}. Thanks to \eqref{eq:xperturb} with $l=L$, we have for any $\tidx{(L)} \in \tidX{(L)}$, there exists $\x{(L)} \in \X{(L)}$ such that
\begin{equation*}
    \dist(\tidx{(L)}, \x{(L)}) \le C_x^{(L)} \|(\underline{u}^{[L]}, \overline{u}^{[L]}, v^{[L]})\|_2.
\end{equation*}
Then we obtain
\begin{equation*}
\begin{aligned}
     {p}^*_{\CalC} - (c^{\top} \tidx{(L)} + c_0) &\le c^{\top} (\x{(L)} - \tidx{(L)}) \le \|c\| \|\tidx{(L)}-\x{(L)}\| \\
    & \le C_x^{(L)} \|c\| \|(\underline{u}^{[L]}, \overline{u}^{[L]}, v^{[L]})\|_2.
\end{aligned}
\end{equation*}
Taking the infimum over $\tidx{(l)} \in \tidX{(l)}$, we have proved \eqref{eq:pperturb0} with $C_{\CalC} = C_x^{(L)} \|c\|$.
\end{proof}

\section{Equivalence of the optimal layer-wise dual relaxations: $d^*_{{\CalC}_{\text{opt}}} = d^*_{\myopt}$}\label{app:cmudm}
\begin{lemma}\label{lem:fenchel}
Suppose the activation function $\sigma:[\lwz{}, \upz{} ] \to \R$ is bounded from above and below and that $\underline{\sigma}(z) \le \sigma(z) \le \overline{\sigma}(z)$ for all $z \in [\lwz{}, \upz{} ]$. Define
\begin{align}
    f_{\myopt}(\mu, \lambda) &\coloneqq \inf_{ z\in [\lwz{}, \upz{}]} \mu z - \lambda\sigma(z), \label{def:fdm}\\
    f_{\CalC}(\mu, \underline{\lambda}, \overline{\lambda}) &\coloneqq \inf_{ z\in [\lwz{}, \upz{}]} \mu z + \underline{\lambda}\underline{\sigma}(z) - \overline{\lambda}\overline{\sigma}(z). \label{def:fcmu}
\end{align}
For any $\mu$, $\underline{\lambda} \ge 0$ and $\overline{\lambda} \ge 0$, we have
\begin{equation}\label{eq:dmlecmu_app}
    f_{\CalC}(\mu, \underline{\lambda}, \overline{\lambda}) \le f_{\CalC}(\mu, -\left(\overline{\lambda}-\underline{\lambda}\right)_{-}, \left(\overline{\lambda}-\underline{\lambda}\right)_{+}),
\end{equation}
where $\lambda_{+} = \max(\lambda,  0)$ and $\lambda_{-} = \min(\lambda,  0)$.

When $\underline{\sigma}_{\text{opt}}$ and $\overline{\sigma}_{\text{opt}}$ are the optimal convex relaxations defined in \eqref{def:underoversigma_app}, we write $f_{\CalC}$ as $f_{\CalC_{\text{opt}}}$. In this case, we have that for any $\mu$ and $\lambda$
\begin{equation}\label{eq:cmuledm_app}
    f_{\CalC_{\text{opt}}}(\mu, - \lambda_{-}, \lambda_{+}) = f_{\myopt}(\mu, \lambda).
\end{equation}
\end{lemma}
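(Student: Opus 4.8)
The two claims are handled separately, and \eqref{eq:dmlecmu_app} is a one-line monotonicity observation. Put $m \coloneqq \min(\underline{\lambda}, \overline{\lambda}) \ge 0$; a direct check of cases shows $-\left(\overline{\lambda}-\underline{\lambda}\right)_{-} = \underline{\lambda} - m$ and $\left(\overline{\lambda}-\underline{\lambda}\right)_{+} = \overline{\lambda} - m$, so the right-hand side of \eqref{eq:dmlecmu_app} equals $\inf_{z \in [\lwz{}, \upz{}]}\big[ \mu z + (\underline{\lambda}-m)\underline{\sigma}(z) - (\overline{\lambda}-m)\overline{\sigma}(z) \big]$. For every $z$, the expression minimized on the left of \eqref{eq:dmlecmu_app} minus the one on the right equals $m\big(\underline{\sigma}(z) - \overline{\sigma}(z)\big) \le 0$, since $m \ge 0$ and $\underline{\sigma} \le \overline{\sigma}$ on $[\lwz{}, \upz{}]$; taking infima over $z$ yields \eqref{eq:dmlecmu_app}.

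For \eqref{eq:cmuledm_app} the first step is a case reduction using that exactly one of $-\lambda_{-}$, $\lambda_{+}$ vanishes. If $\lambda \ge 0$ then $f_{\CalC_{\text{opt}}}(\mu, -\lambda_{-}, \lambda_{+}) = \inf_z\big[\mu z - \lambda\,\overline{\sigma}_{\text{opt}}(z)\big]$, and if $\lambda < 0$ then $f_{\CalC_{\text{opt}}}(\mu, -\lambda_{-}, \lambda_{+}) = \inf_z\big[\mu z + (-\lambda)\,\underline{\sigma}_{\text{opt}}(z)\big]$; in both cases the target value is $f_{\myopt}(\mu,\lambda) = \inf_z\big[\mu z - \lambda\,\sigma(z)\big]$. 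These two cases are symmetric: by Proposition~\ref{prop:hull} and \eqref{def:underoversigma_app}, $-\overline{\sigma}_{\text{opt}}$ is the greatest closed convex function majored by $-\sigma$ on $[\lwz{}, \upz{}]$, so writing $\mu z - \lambda\,\overline{\sigma}_{\text{opt}}(z) = \mu z + \lambda\big(-\overline{\sigma}_{\text{opt}}(z)\big)$ with $\lambda \ge 0$ reduces the first case to the second with $\sigma$ replaced by $-\sigma$. Hence it suffices to treat $c \coloneqq -\lambda > 0$ and prove $\inf_z\big[\mu z + c\,\underline{\sigma}_{\text{opt}}(z)\big] = \inf_z\big[\mu z + c\,\sigma(z)\big]$ (the degenerate $\lambda = 0$ subcase being immediate, as $f_{\myopt}(\mu,0) = \inf_z \mu z = f_{\CalC_{\text{opt}}}(\mu,0,0)$).

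Write $\phi(z) \coloneqq \mu z + c\,\sigma(z)$ on $[\lwz{}, \upz{}]$. The argument then rests on two elementary facts. First, the greatest closed convex minorant of $\phi$ — equivalently, by the Fenchel--Moreau theorem, its biconjugate $\phi^{**}$ — is exactly $z \mapsto \mu z + c\,\underline{\sigma}_{\text{opt}}(z)$: an affine map $\alpha z + \gamma$ minorizes $\phi$ on $[\lwz{}, \upz{}]$ iff $\tfrac{\alpha-\mu}{c}z + \tfrac{\gamma}{c}$ minorizes $\sigma$ there, so taking the supremum over affine minorants as in \eqref{def:underoversigma_app} transfers verbatim (using $c > 0$). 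Second, $\inf_z \phi(z) = \inf_z \phi^{**}(z)$: the inequality ``$\le$'' is immediate from $\phi^{**} \le \phi$, and ``$\ge$'' holds because the constant function equal to $\inf_z\phi(z)$ is itself a closed convex minorant of $\phi$, hence $\le \phi^{**}$ pointwise. Chaining these, $f_{\CalC_{\text{opt}}}(\mu, -\lambda_{-}, \lambda_{+}) = \inf_z\big[\mu z + c\,\underline{\sigma}_{\text{opt}}(z)\big] = \inf_z \phi^{**}(z) = \inf_z\phi(z) = f_{\myopt}(\mu,\lambda)$, which is \eqref{eq:cmuledm_app}.

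None of this is deep, and there is no genuine obstacle; the only place that needs care is verifying that the representation \eqref{def:underoversigma_app} really produces the convex (resp. concave) envelope of the shifted function $\phi$ (resp. of $-\sigma$), for which we use boundedness of $\sigma$ on the compact box — the standing hypothesis of the lemma, cf. Proposition~\ref{prop:hull} — so that $\phi^{**}$ is proper and closed and the biconjugate machinery applies. The main thing to keep straight is the positivity bookkeeping ($c > 0$, and $\underline{\lambda}, \overline{\lambda} \ge 0$ in the first claim), which is exactly what lets multiplication by scalars and addition of affine terms commute with the envelope operations.
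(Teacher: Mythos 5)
Your proof is correct. For the inequality \eqref{eq:dmlecmu_app} you are doing essentially what the paper does — dropping the nonnegative coupling term $\min(\underline{\lambda},\overline{\lambda})\,(\overline{\sigma}-\underline{\sigma})\ge 0$ pointwise before taking the infimum — except that your $m=\min(\underline{\lambda},\overline{\lambda})$ bookkeeping unifies the paper's two cases ($\underline{\lambda}\ge\overline{\lambda}$ and $\overline{\lambda}\ge\underline{\lambda}$) into one line. For the equality \eqref{eq:cmuledm_app} your route is genuinely different in mechanics: the paper divides the objective by $\lambda$ (hence treats $\lambda<0$ and $\lambda>0$ separately, with $\lambda=0$ trivial), recognizes the resulting suprema as convex conjugates, and invokes Fenchel--Moreau in the form $(\underline{\sigma}_{\text{opt}})^{*}=\sigma^{*}$, respectively $(-\overline{\sigma}_{\text{opt}})^{*}=(-\sigma)^{*}$; you instead absorb the linear term $\mu z$ into the objective, note that the sup-of-affine-minorants representation \eqref{def:underoversigma_app} commutes with adding an affine function and with multiplication by $c>0$, and finish with the elementary observation that a function and its greatest closed convex minorant share the same infimum (the constant $\inf\phi$ being itself an affine minorant). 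The two arguments rest on the same convex-duality fact — your inf-preservation step is exactly the conjugate identity evaluated at $0$, while the paper evaluates it at $\mu/\lambda$ — but your packaging is more self-contained: it never forms a conjugate, avoids the division by $\lambda$, and strictly speaking does not need Fenchel--Moreau at all (you cite it only to identify the envelope with $\phi^{**}$, which your argument can bypass by working with the affine-minorant definition directly), whereas the paper's version is shorter once the conjugate machinery is taken as given and makes explicit why Fenchel--Moreau is credited as the key ingredient. Your reduction of the $\lambda>0$ case to the $\lambda<0$ case via $\sigma\mapsto-\sigma$, and the check that $-(\overline{\lambda}-\underline{\lambda})_{-}=\underline{\lambda}-m$, $(\overline{\lambda}-\underline{\lambda})_{+}=\overline{\lambda}-m$, are both correct.
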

\begin{proof}
First let's prove \eqref{eq:dmlecmu_app}. For $\underline{\lambda} \ge \overline{\lambda} \ge 0$, we have
$$f_{\CalC}(\mu, -\left(\overline{\lambda}-\underline{\lambda}\right)_{-}, \left(\overline{\lambda}-\underline{\lambda}\right)_{+}) = f_{\CalC}(\mu, \underline{\lambda}-\overline{\lambda}, 0)$$
and 
\begin{equation*}
\begin{aligned}
    f_{\CalC}(\mu, \underline{\lambda}, \overline{\lambda}) &= \inf_{ z\in [\lwz{}, \upz{}]} \mu z + \underline{\lambda}\underline{\sigma}(z) - \overline{\lambda}\overline{\sigma}(z) = \inf_{ z\in [\lwz{}, \upz{}]} \mu z + (\underline{\lambda}-\overline{\lambda})\underline{\sigma}(z) - \overline{\lambda}(\overline{\sigma}(z)-\underline{\sigma}(z)) \\
    & \stackrel{(i)}{\le} \sup_{ z\in [\lwz{}, \upz{}]} \mu z - (\underline{\lambda}-\overline{\lambda})\underline{\sigma}(z) = f_{\CalC}(\mu, \underline{\lambda}-\overline{\lambda}, 0),
\end{aligned}
\end{equation*}
where we use $\overline{\lambda}(\overline{\sigma}(z)-\underline{\sigma}(z)) \ge 0$ in (i). Similarly for $\overline{\lambda} \ge \underline{\lambda} \ge 0$, we have
$$f_{\CalC}(\mu, -\left(\overline{\lambda}-\underline{\lambda}\right)_{-}, \left(\overline{\lambda}-\underline{\lambda}\right)_{+}) = f_{\CalC}(\mu, 0, \overline{\lambda}-\underline{\lambda})$$
and 
\begin{equation*}
\begin{aligned}
    f_{\CalC}(\mu, \underline{\lambda}, \overline{\lambda}) &= \inf_{ z\in [\lwz{}, \upz{}]} \mu z + \underline{\lambda}\underline{\sigma}(z) - \overline{\lambda}\overline{\sigma}(z) = \sup_{ z\in [\lwz{}, \upz{}]} \mu z - (\overline{\lambda}-\underline{\lambda})\overline{\sigma}(z) - \underline{\lambda}(\overline{\sigma}(z)-\underline{\sigma}(z)) \\
    & \stackrel{(i)}{\le} \sup_{ z\in [\lwz{}, \upz{}]} \mu z - (\overline{\lambda}-\underline{\lambda})\overline{\sigma}(z) = f_{\CalC}(\mu, 0, \overline{\lambda}-\underline{\lambda}),
\end{aligned}
\end{equation*}
where we use $\underline{\lambda}(\overline{\sigma}(z)-\underline{\sigma}(z)) \ge 0$ in (i).

Then let's prove \eqref{eq:cmuledm_app}. For $\lambda < 0$ ($\lambda_{+} = 0$ and $\lambda_{-} = \lambda$), by definition we have
\begin{equation*}
\begin{aligned}
    &f_{\CalC_{\text{opt}}}(\mu, -\lambda, 0) = \inf_{ z\in [\lwz{}, \upz{}]} \mu z - \lambda \underline{\sigma}_{\text{opt}}(z) = \lambda \sup_{ z\in [\lwz{}, \upz{}]} \frac{\mu}{\lambda} z - \underline{\sigma}_{\text{opt}}(z) \\
    & \stackrel{(i)}{=:} \lambda \left(\underline{\sigma}_{\text{opt}}\right)^*(\mu/\lambda) \stackrel{(ii)}{=} \lambda \left(\sigma\right)^*(\mu/\lambda)  \stackrel{(iii)}{:=} \inf_{ z\in [\lwz{}, \upz{}]} \mu z - \lambda\sigma(z) = f_{\myopt}(\mu, \lambda),
\end{aligned}
\end{equation*}
where we use the definition of convex conjugate in (i) and (iii) and the Fenchel-Moreau theorem (Theorem 12.2 in \citet{rockafellar2015convex}) in (ii). For $\lambda = 0$, it is obvious. Similarly, for $\lambda > 0$ ($\lambda_{+} = \lambda$ and $\lambda_{-} = 0$), by definition we have
\begin{equation*}
\begin{aligned}
    &f_{\CalC_{\text{opt}}}(\mu, 0, \lambda) = \inf_{ z\in [\lwz{}, \upz{}]} \mu z - \lambda \overline{\sigma}_{\text{opt}}(z) = -\lambda \sup_{ z\in [\lwz{}, \upz{}]} -\frac{\mu}{\lambda} z - (-\overline{\sigma}_{\text{opt}})(z) \\
    &\stackrel{(i)}{=:} -\lambda \left(-\overline{\sigma}_{\text{opt}}\right)^*(-\mu/\lambda) \stackrel{(ii)}{=} -\lambda \left(-\sigma\right)^*(-\mu/\lambda) \stackrel{(iii)}{:=} \inf_{ z\in [\lwz{}, \upz{}]} \mu z - \lambda\sigma(z) = f_{\myopt}(\mu, \lambda),
\end{aligned}
\end{equation*}
where we use the definition of convex conjugate in (i) and (iii) and the Fenchel-Moreau theorem in (ii), again. 
\end{proof}

\begin{proof}[Proof of Theorem~\ref{thm:cmudm}]
In the first step, we simplify the form of $g_{\CalC}(\mu^{[L]},\underline{\lambda}^{[L]},\overline{\lambda}^{[L]})$. By definition \eqref{eq:rv_cmu_dual}, we have
\begin{equation}\label{eq:rv_cmu_dual3}
\begin{aligned}
    g_{\CalC}(\mu^{[L]}, \underline{\lambda}^{[L]}, \overline{\lambda}^{[L]}) &= g^{(0)}(\mu^{(0)}) + \sum_{l=1}^{L-1} g^{(l)}(\mu^{(l)}, \overline{\lambda}^{(l-1)}-\underline{\lambda}^{(l-1)}) + g^{(L)}(c, \overline{\lambda}^{(l-1)}-\underline{\lambda}^{(l-1)}) \\
    & + \sum_{l=0}^{L-1} \left( \tilde{g}_{\CalC}^{(l)}(\mu^{(l)}, \underline{\lambda}^{(l)}, \overline{\lambda}^{(l)}) - \bias{(l){\top}} \mu^{(l)} \right),
\end{aligned}
\end{equation}
where
\begin{equation}\label{eq:rv_g0}
\begin{aligned}
    g^{(0)}(\mu^{(0)}) = \inf_{\x{(0)} \in \setS_{in}(x^{\text{nom}})} \left( -\W{(0){\top}} \mu^{(0)} \right)^{\top} \x{(0)} 
\end{aligned}
\end{equation}
\begin{equation}\label{eq:rv_gl}
\begin{aligned}
    g^{(l)}(\mu^{(l)}, \lambda^{(l-1)}) &=\inf_{\x{(l)}} \left( \lambda^{(l-1)} - \W{(l){\top}} \mu^{(l)} \right)^{\top} \x{(l)} = 1_{\lambda^{(l-1)} = \W{(l){\top}} \mu^{(l)}},
\end{aligned}
\end{equation}
\begin{equation}\label{eq:rv_gL}
\begin{aligned}
    g^{(L)}(c, \lambda^{(L-1)}) &=\inf_{\x{(L)}} \left( \lambda^{(L-1)} + c \right)^{\top} \x{(L)} + c_0 = 1_{\lambda^{(L-1)}= -c} + c_0,
\end{aligned}
\end{equation}
and 
\begin{equation}\label{eq:rv_cmu_tildegl_org}
\begin{aligned}
    &\tilde{g}_{\CalC}^{(l)}(\mu^{(l)}, \underline{\lambda}^{(l)}, \overline{\lambda}^{(l)}) = \inf_{\lwz{(l)} \le \z{(l)} \le \upz{(l)}} \bigg\{ \mu^{(l){\top}} \z{(l)} +\underline{\lambda}^{(l){\top}} \underline{\sigma}^{(l)}(\z{(l)}) - \overline{\lambda}^{(l){\top}} \overline{\sigma}^{(l)}(\z{(l)}) \bigg\}.
\end{aligned}
\end{equation}

In the second step, for any $\mu^{[L]}$, $\underline{\lambda}^{[L]} \ge 0$ and $\overline{\lambda}^{[L]} \ge 0$, we apply \eqref{eq:dmlecmu_app} in Lemma~\ref{lem:fenchel} entry-wisely on \eqref{eq:rv_cmu_tildegl_org}, and obtain
\begin{equation*}
     \tilde{g}_{\CalC}^{(l)}(\mu^{(l)}, \underline{\lambda}^{(l)}, \overline{\lambda}^{(l)}) \le \tilde{g}_{\CalC}^{(l)}(\mu^{(l)}, - \lambda^{(l)}_{-}, \lambda^{(l)}_{+}),
\end{equation*}
in which $\lambda^{(l)} := \underline{\lambda}^{(l)} -\overline{\lambda}^{(l)}$. After Plugging $\lambda^{[L]} = \lambda^{[L]}_{+} + \lambda^{[L]}_{-}$ and $\lambda^{[L]} := \underline{\lambda}^{[L]} -\overline{\lambda}^{[L]}$ into equation~\eqref{eq:rv_cmu_dual3}, we obtain that
\begin{equation}\label{eq:dmlecmu}
    g_{\CalC}(\mu^{[L]}, \underline{\lambda}^{[L]}, \overline{\lambda}^{[L]}) \le g_{\CalC}(\mu^{[L]}, -\lambda_{-}^{[L]}, \lambda_{+}^{[L]}).
\end{equation}
Therefore, the dual problem~\eqref{eq:rv_cmu_dual_weak} can be rewritten as an unconstrained optimization problem as
\begin{equation}\label{eq:rv_cmu_dual_simp0}
d^*_{\CalC} = \max_{\mu^{[L]},\lambda^{[L]}} g_{\CalC}(\mu^{[L]}, -\lambda_{-}^{[L]}, \lambda_{+}^{[L]}).
\end{equation}

In the third step, we simplify $g_{{\myopt}}(\mu^{[L]}, \lambda^{[L]})$ based on its definition \eqref{eq:rv_dm} and obtain 
\begin{equation}\label{eq:rv_dm_dual2}
    \begin{aligned}
    g_{{\myopt}}(\mu^{[L]}, \lambda^{[L]}) &= g^{(0)}(\mu^{(0)}) + \sum_{l=1}^{L-1} g^{(l)}(\mu^{(l)}, \lambda^{(l-1))}) + g^{(L)}(c, \lambda^{(L-1)}) \\
    & + \sum_{l=0}^{L-1} \left( \tilde{g}_{{\myopt}}^{(l)}(\mu^{(l)}, \lambda^{(l)}) - \bias{(l)}{}^\top \mu^{(l)} \right)
\end{aligned}
\end{equation}
in which 
\begin{equation}\label{eq:rv_dm_tildegl}
\begin{aligned}
    &\tilde{g}_{{\myopt}}^{(l)}(\mu^{(l)}, \lambda^{(l)}) = \inf_{\lwz{(l)} \le \z{(l)} \le \upz{(l)}}  \mu^{(l)}{}^\top \z{(l)} - \lambda^{(l)}{}^\top \sigma^{(l)}(\z{(l)}).
\end{aligned}
\end{equation}

In the forth step, for any $\mu^{[L]}$ and $\lambda^{[L]}$, since all the nonlinear layers are non-interactive, we apply \eqref{eq:cmuledm_app} in Lemma~\ref{lem:fenchel} entry-wisely on \eqref{eq:rv_cmu_tildegl_org} and \eqref{eq:rv_dm_tildegl} and obtain
\begin{equation*}
    \tilde{g}_{\CalC_{\text{opt}}}^{(l)}(\mu^{(l)}, -\lambda^{(l)}_{-}, \lambda^{(l)}_{+}) = \tilde{g}_{\myopt}^{(l)}(\mu^{(l)}, \lambda^{(l)}).
\end{equation*}
After plugging $\lambda^{[L]} = \lambda^{[L]}_{+} + \lambda^{[L]}_{-}$ into \eqref{eq:rv_cmu_dual3}, we see that the other three terms in $g_{\CalC_{\text{opt}}}(\mu^{[L]}, - \lambda^{[L]}_{-}, \lambda^{[L]}_{+})$ and $g_{\myopt}(\mu^{[L]}, \lambda^{[L]})$ are the same. Therefore, we have proved that for any $\mu^{[L]}$ and $\lambda^{[L]}$, we have 
\begin{equation}\label{eq:cmuledm}
    g_{\CalC_{\text{opt}}}(\mu^{[L]}, - \lambda^{[L]}_{-}, \lambda^{[L]}_{+}) = g_{\myopt}(\mu^{[L]}, \lambda^{[L]})
\end{equation}
Finally, combining \eqref{eq:rv_dm_dual_weak}, \eqref{eq:rv_cmu_dual_simp0} and \eqref{eq:cmuledm}, we obtain $d^*_{{\CalC}_{\text{opt}}} = d^*_{\myopt}$. 
\end{proof}

\section{A greedy algorithm to solve the dual problems}
\label{app:greedydual}
\subsection{Some useful results to simplify the dual problems}
We provide the following useful results when solving \eqref{eq:rv_cmu_dual_weak} and \eqref{eq:rv_dm_dual_weak}. First, the dual problem~\eqref{eq:rv_cmu_dual_weak} can be rewritten as an unconstrained optimization problem inspired by \eqref{eq:rv_cmu_dual_simp0}. We define a two-argument function, reusing the name $g_{\mathcal{C}}$, as 
$$g_{\mathcal{C}}(\mu^{[L]},\lambda^{[L]}) := g_{\mathcal{C}}(\mu^{[L]},-\lambda_{-}^{[L]}),\lambda_{+}^{[L]}).$$
Then we have the following useful results.
\begin{proposition}\label{prop:dual}
Denote $\lambda_{+} = \max(\lambda,  0)$ and $\lambda_{-} = \min(\lambda,  0)$. 
\begin{enumerate}
    \item For dual of the convex relaxed problem~\eqref{eq:rv_cmu} defined in \eqref{eq:rv_cmu_dual_weak}, we have
    \begin{equation}\label{eq:rv_cmu_dual_simp}
    \begin{aligned}
    d^*_{\mathcal{C}} = \max_{\mu^{[L]},\lambda^{[L]}} \bigg\{ g_{\mathcal{C}}(\mu^{[L]}, \lambda^{[L]}) \coloneqq c_0 + g^{(0)}(\mu^{(0)}) 
     + \sum_{l=0}^{L-1} \left( \tilde{g}_{\mathcal{C}}^{(l)}(\mu^{(l)}, \lambda^{(l)}) - \bias{(l)}{}^\top \mu^{(l)} \right) \bigg\},
\end{aligned}
\end{equation}
where
\begin{equation}\label{eq:lambdarecur}
\begin{aligned}
    \lambda^{(L-1)} &= -c,\quad \lambda^{(l)} &=  \W{(l+1)}{}^\top \mu^{(l+1)} \quad \forall l\in [L-1],
\end{aligned}
\end{equation}
\begin{equation}\label{eq:g0}
\begin{aligned}
    g^{(0)}(\mu^{(0)}) = \inf_{\x{(0)} \in \setS_{in}(x^{\text{nom}})} \left( -\W{(0)}{}^\top \mu^{(0)} \right)^{\top} \x{(0)},
\end{aligned}
\end{equation}
and 
\begin{equation*}
\begin{aligned}
    & \tilde{g}_{\mathcal{C}}^{(l)}(\mu^{(l)}, \lambda^{(l)}) = \inf_{\lwz{(l)} \le \z{(l)} \le \upz{(l)}} \bigg\{ \mu^{(l)}{}^\top \z{(l)}  - \lambda_{-}^{(l)}{}^\top \underline{\sigma}^{(l)}(\z{(l)}) - \lambda_{+}^{(l)}{}^\top \overline{\sigma}^{(l)}(\z{(l)}) \bigg\}.
\end{aligned}
\end{equation*}

\item For the dual of the original nonconvex problem~\eqref{eq:rvbnds} defined in \eqref{eq:rv_dm_dual_weak}, we have
\begin{equation}\label{eq:rv_dm_dual_simp}
    \begin{aligned}
    d^*_{{\myopt}} \coloneqq \max_{\mu^{[L]},\lambda^{[L]}} \bigg\{ g_{{\myopt}}(\mu^{[L]}, \lambda^{[L]}) = c_0 + g^{(0)}(\mu^{(0)})
    + \sum_{l=0}^{L-1} \left( \tilde{g}_{{\myopt}}^{(l)}(\mu^{(l)}, \lambda^{(l)}) - \bias{(l)}{}^\top \mu^{(l)}\right) \bigg\},
\end{aligned}
\end{equation}
where \eqref{eq:lambdarecur} still holds true and
\begin{equation*}
\begin{aligned}
    &\tilde{g}_{{\myopt}}^{(l)}(\mu^{(l)}, \lambda^{(l)}) = \inf_{\lwz{(l)} \le \z{(l)} \le \upz{(l)}}  \mu^{(l)}{}^\top \z{(l)} - \lambda^{(l)}{}^\top \sigma^{(l)}(\z{(l)}).
\end{aligned}
\end{equation*}
\item Suppose that a nonlinear neuron $x_j^{(l+1)} = \sigma^{(l)}(z_{I_j}^{(l)})$ is effectively linear within the input domain $\setS_{in}(x^{\text{nom}})$, i.e., there exists a linear relation $x_j^{(l+1)} = V_j^{(l)} z_{I_j}^{(l)} + d_j^{(l)}$ for all $\x{(0)} \in \setS_{in}(x^{\text{nom}})$, then we can simplify the convex relaxed problem~\eqref{eq:rv_cmu} by setting
\begin{equation*}
    \underline{\sigma}^{(l)}_i(\z{(l)}) = \overline{\sigma}^{(l)}_i(\z{(l)}) = V_j^{(l)} z_{I_j}^{(l)} + d_j^{(l)},
\end{equation*}
or simplify the original nonconvex problem~\eqref{eq:rvbnds} by setting
\begin{equation*}
    \sigma^{(l)}_i(z^{(l)}) = V_j^{(l)} z_{I_j}^{(l)} + d_j^{(l)}.
\end{equation*}
If this neuron does not interact with other neurons in the same layer, i.e., $z_{I_j}^{(l)}$ is not the input of $x_{k}^{(l+1)}$ for any $k \neq j$.
Then for any optimal point for both dual problems, we have
\begin{equation}\label{eq:stableneuron}
    \mu_{I_j}^{(l)} = V_j^{(l)}{}^\top \lambda_j^{(l)}.
\end{equation}
\end{enumerate}
\end{proposition}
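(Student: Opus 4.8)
The plan is to establish all three parts by expanding the Lagrangians \eqref{eq:rv_cmu_dual} and \eqref{eq:rv_dm} and exploiting the fact that $\CalD$ only constrains $\x{(0)}\in\setS_{in}(x^{\text{nom}})$ and the preactivations $\z{[L]}$, leaving $\x{(1)},\dots,\x{(L)}$ free; this makes the infimum over $\CalD$ separate over variable blocks. For part 2, I would expand $g_{\myopt}(\mu^{[L]},\lambda^{[L]})$: the $\x{(0)}$-block yields $g^{(0)}(\mu^{(0)})$ of \eqref{eq:g0}; for $1\le l\le L-1$ the coefficient of $\x{(l)}$ is $\lambda^{(l-1)}-\W{(l)\top}\mu^{(l)}$, whose infimum over $\R^{n^{(l)}}$ is $-\infty$ unless $\lambda^{(l-1)}=\W{(l)\top}\mu^{(l)}$, and likewise the $\x{(L)}$-block forces $\lambda^{(L-1)}=-c$; since we maximize, these $-\infty$ branches are irrelevant, so we may impose \eqref{eq:lambdarecur} and drop the associated indicator terms. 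Each $\z{(l)}$-block, ranging over $[\lwz{(l)},\upz{(l)}]$, contributes exactly $\tilde{g}_{\myopt}^{(l)}(\mu^{(l)},\lambda^{(l)})-\bias{(l)\top}\mu^{(l)}$, and $c_0$ is untouched, giving \eqref{eq:rv_dm_dual_simp}; this is the bookkeeping already done implicitly as \eqref{eq:rv_dm_dual2} in the proof of Theorem~\ref{thm:cmudm}.

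Part 1 is the same computation for $g_{\CalC}$. The $\x{(l)}$-block coefficients now force $\overline\lambda^{(l-1)}-\underline\lambda^{(l-1)}=\W{(l)\top}\mu^{(l)}$ and $\overline\lambda^{(L-1)}-\underline\lambda^{(L-1)}=-c$, which is \eqref{eq:lambdarecur} once $\lambda^{(l)}$ is identified with the signed combination of $\underline\lambda^{(l)}$ and $\overline\lambda^{(l)}$ used in the proof of Theorem~\ref{thm:cmudm}, and the $\z{(l)}$-block gives $\inf_{\z{(l)}\in[\lwz{(l)},\upz{(l)}]}\{\mu^{(l)\top}\z{(l)}+\underline\lambda^{(l)\top}\underline{\sigma}^{(l)}(\z{(l)})-\overline\lambda^{(l)\top}\overline{\sigma}^{(l)}(\z{(l)})\}$. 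Applying \eqref{eq:dmlecmu_app} of Lemma~\ref{lem:fenchel} entrywise, replacing the sign-constrained pair $(\underline\lambda^{(l)},\overline\lambda^{(l)})$ by $(-\lambda^{(l)}_-,\lambda^{(l)}_+)$ never decreases the objective, so the constrained maximum over $\mu^{[L]},\underline\lambda^{[L]}\ge0,\overline\lambda^{[L]}\ge0$ equals the unconstrained maximum over $\mu^{[L]},\lambda^{[L]}$ of the two-argument $g_{\CalC}$ with $\tilde{g}_{\CalC}^{(l)}$ as stated; this gives \eqref{eq:rv_cmu_dual_simp} and matches \eqref{eq:rv_cmu_dual3} with the indicator terms resolved.

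For part 3, replacing $\underline{\sigma}^{(l)}_j=\overline{\sigma}^{(l)}_j=V_j^{(l)}z_{I_j}^{(l)}+d_j^{(l)}$ (resp. $\sigma^{(l)}_j=V_j^{(l)}z_{I_j}^{(l)}+d_j^{(l)}$) is legitimate precisely because by hypothesis it leaves the feasible set unchanged. Using non-interactivity of the layer together with the extra assumption that $z_{I_j}^{(l)}$ feeds no $x_k^{(l+1)}$ with $k\ne j$, the subproblem $\tilde{g}^{(l)}(\mu^{(l)},\lambda^{(l)})$ splits over blocks and its $z_{I_j}^{(l)}$-block becomes $\inf_{\z{(l)}_{I_j}\in[\lwz{(l)}_{I_j},\upz{(l)}_{I_j}]}(\mu_{I_j}^{(l)}-V_j^{(l)\top}\lambda_j^{(l)})^\top \z{(l)}_{I_j}-\lambda_j^{(l)}d_j^{(l)}$. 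It remains to show the linear coefficient $s:=\mu_{I_j}^{(l)}-V_j^{(l)\top}\lambda_j^{(l)}$ vanishes at a dual optimum. The route I would take: $\mu_{I_j}^{(l)}$ affects the (concave) dual objective only through this block, through $-(\bias{(l)}_{I_j})^{\top}\mu_{I_j}^{(l)}$, and through $\lambda^{(l-1)}=\W{(l)\top}\mu^{(l)}$ (hence through $\tilde{g}^{(l-1)}$, or through $g^{(0)}$ when $l=0$). Writing $\mu_{I_j}^{(l)}=V_j^{(l)\top}\lambda_j^{(l)}+s$ and using strong duality (Theorem~\ref{thm:strongdual}) to get a primal optimum and complementary slackness, one evaluates the two infima at the primal-optimal $\z{(l)}_{I_j}$ and $\x{(l)}$: since the former is feasible, $\inf_{z_{I_j}}s^\top z_{I_j}\le s^\top\z{(l)}_{I_j}$, and perturbing $\lambda^{(l-1)}$ decreases $\tilde{g}^{(l-1)}$ by at most $s^\top\W{(l)}_{I_j,:}\x{(l)}$, and these two contributions cancel against $-(\bias{(l)}_{I_j})^{\top}s$, so the $s$-dependent part of the objective is $\le$ its value at $s=0$. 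Hence $\mu_{I_j}^{(l)}=V_j^{(l)\top}\lambda_j^{(l)}$ at optimality---the dual analogue of the forward relation $\lambda^{(l-1)}=\W{(l)\top}\mu^{(l)}$, expressing that a neuron which is effectively affine has merged with its adjacent affine layer.

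The step in part 3 is where I expect the difficulty to lie. Because $z_{I_j}^{(l)}$ is not a free variable---it is box-constrained and tied to $\x{(l)}$ through $\z{(l)}=\W{(l)}\x{(l)}+\bias{(l)}$---the textbook ``coefficient of an unconstrained variable is zero'' reasoning does not apply directly, and the vanishing of $s$ must be extracted from the interplay between the box infimum, the coupling $\lambda^{(l-1)}=\W{(l)\top}\mu^{(l)}$, and the primal optimum via complementary slackness. One also has to check that the primal-optimal $\z{(l)}_{I_j}$ genuinely lies in $[\lwz{(l)}_{I_j},\upz{(l)}_{I_j}]$ by primal feasibility, so that it is an admissible point in the box infimum, and to argue that the upper bound just sketched is attained only at $s=0$, as needed for the ``for any optimal point'' form of the statement.
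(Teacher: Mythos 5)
Your parts 1 and 2 are correct and follow the paper's own route: the proposition there is proved as a ``straightforward rewriting'' of the block decompositions \eqref{eq:rv_cmu_dual3} and \eqref{eq:rv_dm_dual2} from the proof of Theorem~\ref{thm:cmudm}, with the free variables $\x{(1)},\dots,\x{(L)}$ producing the indicator terms that force \eqref{eq:lambdarecur}, and with the reduction from the pair $(\underline\lambda^{[L]},\overline\lambda^{[L]})\ge 0$ to a single $\lambda^{[L]}$ done exactly as you do, via \eqref{eq:dmlecmu_app} of Lemma~\ref{lem:fenchel}.

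The gap is in part 3, and you have located the difficulty correctly but not resolved it. Because $\z{(l)}_{I_j}$ is box-constrained, the ``coefficient of a free variable must vanish'' device indeed does not apply, and your perturbation sketch does not close: writing $\mu^{(l)}_{I_j}=V_j^{(l)\top}\lambda^{(l)}_j+s$, evaluating the infima at a primal optimum $\hat z$ only yields upper bounds of the form $\tilde g(\cdot)\le(\text{value at }\hat z)$; to convert these into ``objective at $s$ $\le$ objective at $s=0$'' you additionally need $\hat z$ to \emph{attain} the infimum defining $\tilde g^{(l-1)}$ (or $g^{(0)}$) at the point you compare against. Complementary slackness gives attainment only at a dual-optimal point, so invoking it at the $s=0$ point is circular (its optimality is the conclusion), while invoking it at the assumed-optimal $s\neq 0$ point bounds the shifted $\tilde g^{(l-1)}$ term from the wrong side (you get a lower, not an upper, bound on the shifted infimum). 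Moreover, the strictness you flag at the end--- needed for the literal ``any optimal point'' phrasing---cannot hold in general: for a one-layer network with $\setS_{in}=[-1,1]$, $\W{(0)}=1$, $\bias{(0)}=0$, $\sigma=\mathrm{id}$, $[\lwz{(0)},\upz{(0)}]=[-1,1]$, $c=1$, $c_0=0$, the dual objective is $-|\mu^{(0)}|-|\mu^{(0)}+1|$, maximized on the whole segment $\mu^{(0)}\in[-1,0]$, while \eqref{eq:stableneuron} singles out $\mu^{(0)}=-1$. What is provable, and all that the greedy dual algorithm in Appendix~\ref{app:dualgreedyalg} uses, is the existence form: imposing \eqref{eq:stableneuron} loses no dual value. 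The cleaner route to that is not a pointwise perturbation but to merge the effectively linear neuron into the adjacent affine layer, note the merged and original primal problems coincide, and compare their duals (via Theorem~\ref{thm:strongdual} for $\CalC$, and via \eqref{eq:rv_dm_dual_simp} for $\myopt$), the merged dual corresponding exactly to original dual points obeying \eqref{eq:stableneuron}. In fairness, the paper's own proof of part 3 is only the one-line assertion ``same treatment of linear layers,'' so the paper is thin here too---but your sketch as written does not establish the claim.
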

Similar results have been obtained in several previous works \cite{wong2018provable,dvijotham2018dual,wong2018scaling,qin2018verification}. 
\begin{proof}
$ $
\begin{enumerate}
\item \eqref{eq:rv_cmu_dual_simp} is a straightforward rewriting of \eqref{eq:rv_cmu_dual3} with \eqref{eq:rv_gl}, \eqref{eq:rv_gL}, \eqref{eq:rv_g0} and \eqref{eq:rv_cmu_tildegl_org}.
\item \eqref{eq:rv_dm_dual_simp} is a straightforward rewriting of \eqref{eq:rv_dm_dual2} with \eqref{eq:rv_gl}, \eqref{eq:rv_gL}, \eqref{eq:rv_g0} and \eqref{eq:rv_dm_tildegl}.
\item This can be proved with the same treatment of linear layers in the two items above.
\end{enumerate}
\end{proof}

\subsection{Greedily solving the dual with linear bounds}
\label{app:dualgreedyalg}
Suppose the relaxed bounds $\underline{\sigma}$ and $\overline{\sigma}$ are linear, i.e.,
\begin{align*}
    \underline{\sigma}^{(l)}(\z{(l)}) \coloneqq \underline{a}^{(l)} \z{(l)} + \underline{b}^{(l)}, \quad \overline{\sigma}^{(l)}(\z{(l)})  \coloneqq \overline{a}^{(l)} \z{(l)} + \overline{b}^{(l)}.
\end{align*}
In this case, in the dual problem~\eqref{eq:rv_cmu_dual_simp} we have
\begin{equation*}
\begin{aligned}
    d^*_{\mathcal{C}} = \max_{\mu^{[L]},\lambda^{[L]}} \bigg\{ g_{\mathcal{C}}(\mu^{[L]}, \lambda^{[L]}) \coloneqq c_0 + g^{(0)}(\mu^{(0)}) 
     + \sum_{l=0}^{L-1} \left( \tilde{g}_{\mathcal{C}}^{(l)}(\mu^{(l)}, \lambda^{(l)}) - \bias{(l)}{}^\top \mu^{(l)} \right) \bigg\},
\end{aligned}
\end{equation*}
where
\begin{equation*}
\begin{aligned}
    & \tilde{g}_{\mathcal{C}}^{(l)}(\mu^{(l)}, \lambda^{(l)}) = \inf_{\lwz{(l)} \le \z{(l)} \le \upz{(l)}}  \bigg\{ \left(\mu^{(l)}{} - \lambda^{(l)}_{+} \overline{a}^{(l)} - \lambda^{(l)}_{-} \underline{a}^{(l)} \right) \z{(l)} + \left(\lambda^{(l)}_{+} \overline{b}^{(l)} - \lambda^{(l)}_{-} \underline{b}^{(l)} \right) \bigg\}.
\end{aligned}
\end{equation*}

In the following, we propose a dual greedy algorithm to {\it greedily} ({approximately}) solve the dual problem~\eqref{eq:rv_cmu_dual_weak} and/or its simplified version~\eqref{eq:rv_cmu_dual_simp}. Let $\lambda^{[L]}$ be determined by \eqref{eq:lambdarecur} and $\mu^{[L]}$, for stable neurons, be determined by \eqref{eq:stableneuron}. Both of these are optimal. For {\it unstable} neurons ($\underline{z}_i^{(l)} \le 0 \le \overline{z}_i^{(l)}$), a suboptimal $\mu^{[L]}$ can be obtained by 
\begin{equation*}
    \mu^{(l)} = \argmax_{\mu^{(l)}} \tilde{g}_{\mathcal{C}}^{(l)}(\mu^{(l)}, \lambda^{(l)}),
\end{equation*}
which has a closed form solution 
\begin{equation*}
    \mu_i^{(l)} = \overline{a}_i^{(l)} \left(\lambda_{i}^{(l)}\right)_{+} + \underline{a}_i^{(l)} \left(\lambda_{i}^{(l)}\right)_{-}.
\end{equation*}
Notice that the above suboptimal solution for unstable neurons and the optimal solution \eqref{eq:stableneuron} for stable neurons ($\underline{a}^{(l)}=\overline{a}^{(l)}$ and $\underline{b}^{(l)}=\overline{b}^{(l)}$) can be unified in a single formulae. 

Finally, we summarize our algorithm to greedily solve the dual problem as 
\begin{equation}
\label{eq:dual_suboptimal_mu}
    \lambda^{(L-1)} = -c, \quad \mu^{(l)} = \overline{a}^{(l)} \left(\lambda^{(l)}\right)_{+} + \underline{a}^{(l)} \left(\lambda^{(l)}\right)_{-} \quad \lambda^{(l)} =  \W{(l+1)}{}^\top \mu^{(l+1)} \quad \forall l\in [L-1],
\end{equation}
and the corresponding lower bound is
\begin{equation}
\label{eq:dual_suboptimal_val}
\begin{aligned}
    & g_{\mathcal{C}}(\mu^{[L]}, \lambda^{[L]}) = c_0 + g^{(0)}(\mu^{(0)}) +  \sum_{l=0}^{L-1} \left( \overline{b}^{(l)\top} \left(\lambda^{(l)}\right)_{+} - \underline{b}^{(l)\top} \left(\lambda^{(l)}\right)_{-} - \bias{(l)}{}^\top \mu^{(l)} \right) . 
\end{aligned}
\end{equation}
We point out that the algorithm above can exactly recover what was proposed in Theorem 1 in \citet{wong2018provable}. Their $\nu$ is our $\mu$ and their $\hat{\nu}$ is our $\lambda$. 

\section{Which problem to solve in practice?}
\label{sec:whichProblem}
Thanks to the strong duality, the same lower bound can be achieved from both the primal and the dual problems, and thus we have the freedom to choose the problem to solve. When the relaxed upper and lower bounds, i.e., $\underline{\sigma}^{(l)}$ and $\overline{\sigma}^{(l)}$, are piece-wise linear (e.g. \eqref{eq:relu_ehlers} for ReLU networks), both the primal and dual problems are linear programs and can be efficiently solved  by existing LP solvers (which is what we use in the coming sections). In other cases, we recommend to solve the dual problem~\eqref{eq:rv_dm_dual_weak} for two reasons. First, the primal relaxed problem~\eqref{eq:rv_cmu} is a constrained optimization problem, and its constraints may not have a simple analytic form when $\underline{\sigma}^{(l)}$ and $\overline{\sigma}^{(l)}$ are not piecewise linear; see examples in Fig.~\ref{fig:relax}. On the contrary, the dual problem~\eqref{eq:rv_dm_dual_weak} can be framed as an unconstrained optimization problem and its objective function has a simple analytic form for some common activation functions~\cite{dvijotham2018dual}.
Second, the optimization process of \eqref{eq:rv_dm_dual_weak} can be stopped anytime to give a lower bound of $p_{\myopt}^*$, thanks to weak duality, but this is not true of the primal view.
Of course, $\underline{\sigma}^{(l)}$ and $\overline{\sigma}^{(l)}$ must be in the form of \eqref{def:underoversigma} to achieve the optimal value.
\clearpage

\section{Additional Experimental Details}

\subsection{Neural Networks Used}
\label{sec:appendix-architectures}
Here is a list of the network architectures that we use in this paper along with their references if applicable.
\paragraph{MNIST robust error experiment}
\begin{itemize}
    \item \textsc{MLP-A}: a multilayer perceptron consisting of  1 hidden layer with 500 neurons \cite{tjeng2018evaluating}.
    \item \textsc{MLP-B}: a multilayer perceptron consisting of 2 hidden layers with 100 neurons each.
\end{itemize}

\paragraph{MNIST $\epsilon$-search experiment}
\begin{itemize}
    \item \textsc{CNN-small}: ConvNet architecture with two convolutional layers with 16 and 32 filters respectively (size (size $4 \times 4$ and stride of 2 in both), followed by two fully-connected layers with 100 and 10 units respectively \cite{wong2018scaling}. 
    \item \textsc{CNN-wide-k}: ConvNet architecture with two convolutional layers of $4 \times k$ and $8 \times k$ filters (size $4 \times 4$ and stride of 2 in both) followed by a $128 \times k$ fully connected layer followed by two fully-connected layers of sizes $128 \times k$ and 10 respectively. The parameter k is used to control the width of the network \cite{wong2018scaling}.
    \item \textsc{CNN-deep-k}: ConvNet architecture with  $k$ convolutional layers with 8 filters followed by $k$ convolutional filters with 16 filters followed by two fully-connected layers of sizes $100 \times k$ and 10 respectively. The parameter $k$ is used to control the depth of the network \cite{wong2018scaling}.
    
    \item \textsc{MLP-[9]-500}: a multilayer perceptron consisting of 9 hidden layer with 500 neurons each. 
    
    \item \textsc{MLP-[9]-100}: a multilayer perceptron consisting of 9 hidden layer with 100 neurons each.
    
    \item \textsc{MLP-[2]-100}: a multilayer perceptron consisting of 2 hidden layer with 100 neurons each.
\end{itemize}

\paragraph{CIFAR-10 $\epsilon$-search experiment}
\begin{itemize}
    \item \textsc{CNN-small}: ConvNet architecture with two convolutional layers with 16 and 32 filters respectively (size (size $4 \times 4$ and stride of 2 in both), followed by two fully-connected layers with 100 and 10 units respectively. 
    \item \textsc{CNN-wide-k}: ConvNet architecture with two convolutional layers of $4 \times k$ and $8 \times k$ filters (size $4 \times 4$ and stride of 2 in both) followed by a $128 \times k$ fully connected layer followed by two fully-connected layers of sizes $128 \times k$ and 10 respectively. The parameter $k$ is used to control the width of the network.
    \end{itemize}

\subsection{Training Modes}
\label{sec:appendix-training-modes}
In this paper, we use only one pre-trained network from the literature, and we train the rest from scratch.
\paragraph{Pre-trained Networks}
\begin{itemize}
    \item \textsc{Adv-MLP-A}: this is a multilayer perceptron with 1 hidden layer having 500 units. It is trained using PGD with $l_\infty$ perturbation of $\epsilon=0.1$,  and is used in \citet{tjeng2018evaluating} and \citet{raghunathan2018certified}. It can be found at \url{https://github.com/vtjeng/MIPVerify_data/tree/master/weights/mnist/RSL18a.}
\end{itemize}

\paragraph{Networks Trained from Scratch.} We train all models in parallel on a GPU-cluster with P100 GPUs.
\begin{itemize}
    \item All networks in the paper that have the prefix or training mode \textsc{Adv} are trained with PGD using the code available at \url{https://github.com/locuslab/convex_adversarial/blob/master/examples/mnist.py}.
    \item All networks in the paper that have the prefix or training mode \textsc{LPd} are trained with the robust training method of \citet{wong2018scaling} using the code available at \url{https://github.com/locuslab/convex_adversarial/blob/master/examples/mnist.py}.
    \item All networks in the paper that have the prefix \textsc{Nor} or training mode \textsc{Normal} are trained the regular cross-entropy loss using the code available at \url{https://github.com/locuslab/convex_adversarial/blob/master/examples/mnist.py}.

    \item All the CIFAR-10 networks in the paper have the same naming convention as above, but are trained using the code available at \url{https://github.com/locuslab/convex_adversarial/blob/master/examples/cifar.py}.
\end{itemize}

\section{Parallel Computation Details}
\label{sec:cluster}

\paragraph{Why do we need parallel computing to solve \lpo{}?}
The nature of our \lpo{} algorithm requires solving a number of LP that scales with the number of neurons in the network we are verifying. For example, if we want to verify a network with 10k neurons on ten samples the MNIST dataset. We need to solve roughly $10\text{k}~\text{LPs/sample} \times 10~\text{samples} = 100$k  LPs. 

The average time for solving an LP varies with the size of the network (see Fig.~\ref{fig:durations_mnist} and \ref{fig:durations_cifar}). It also varies depending on which layer in the network the neuron, for which we are solving the LP, is in (see Fig.~\ref{fig:durations_cifar_per_layer}). Let us say on average the duration for solving one LP is $10$ sec on the CPUs we use, which is reasonable for networks that we consider in this paper. \textit{Therefore, for verifying one network, we need around 1 million sec which is roughly 11 days}. 

Doing this for all the models in the paper and for more samples would take years. This is why parallelizing the computation was crucial. Therefore we conduct all the experiments on a cluster with \textbf{1000 CPU-nodes}. Another key point here was to make sure that the scheduling pipeline on the cluster has \textbf{very low latency}, because we need to solve around 100 million jobs in total in the paper, each of which is on the order of seconds. So any latency in the pipeline can cause significant overhead. The details of the scheduling pipeline are beyond the scope of this paper.

\paragraph{CPU specifications.}
Each CPU-node we used has 2 virtual CPUs with a 2.4 GHz Intel(R) Xeon(R) E5-2673 v3 (Haswell) processor and 7GB of RAM.

\paragraph{Linear programming (LP) solver used.}
We construct all the LP models in python using CVXPY \cite{cvxpy}, and the models are solved using an open-source solver, ECOS \cite{bib:Domahidi2013ecos}. We found this solver to be the fastest among other open-source solvers for our application.

\section{Computational Time for Solving \lpo{}}
\label{sec:computation-time}
The solve time of the LP in \eqref{eq:rv_cmu} depends mainly on the size and the training method of a neural network. It also depends on the input-space dimension. 

\paragraph{Dependence on architecture and training mode.}
Fig.~\ref{fig:durations_mnist} and \ref{fig:durations_cifar} shows the average solve time of the LP in \eqref{eq:rv_cmu} for various networks and training methods that are used in the paper on MNIST and CIFAR-10 datasets, respectively. This averaging is over all the neurons in each network, and over ten samples of each dataset. Note how the solve time increases as the network becomes wider or deeper. This is because the number of decision variables and constraints in the LP increases as the network becomes wider or deeper. Another observation is that, in contrast to MILP \cite{tjeng2018evaluating}, the solve time for robustly trained networks seems to be larger than those which are trained using the regular cross-entropy loss or those which are randomly initialized. This is possibly due to the fact that we are not exploiting the stability of neurons in our implementation of the LP as opposed to what is done in the MILP implementation of \citet{tjeng2018evaluating}.
\paragraph{Dependence on which layer we are solving for.}
Fig.~\ref{fig:durations_cifar_per_layer} shows the average solve time per neuron per layer of the LP in \eqref{eq:rv_cmu} for each of the networks that are used in the paper on the CIFAR-10 dataset. Notice how the solve time of the LP increases as we go deeper into the network.

\begin{figure}[p]
    \centering
    \includegraphics[width=0.8\textwidth]{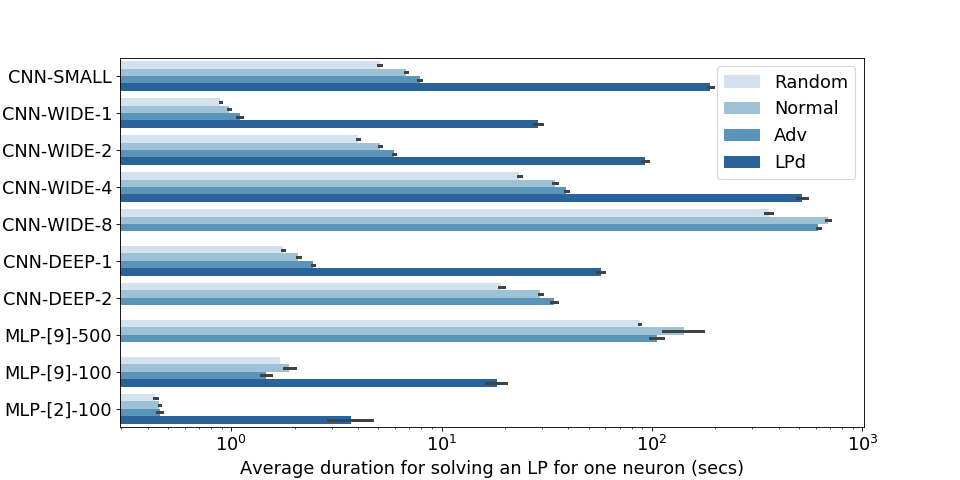}
    \caption{Average duration for solving the LPs for each model (averaged over the neurons in the model and over 10 samples of  the MNIST dataset.}
    \label{fig:durations_mnist}
\end{figure}

\begin{figure}[p]
    \centering
    \includegraphics[width=0.8\textwidth]{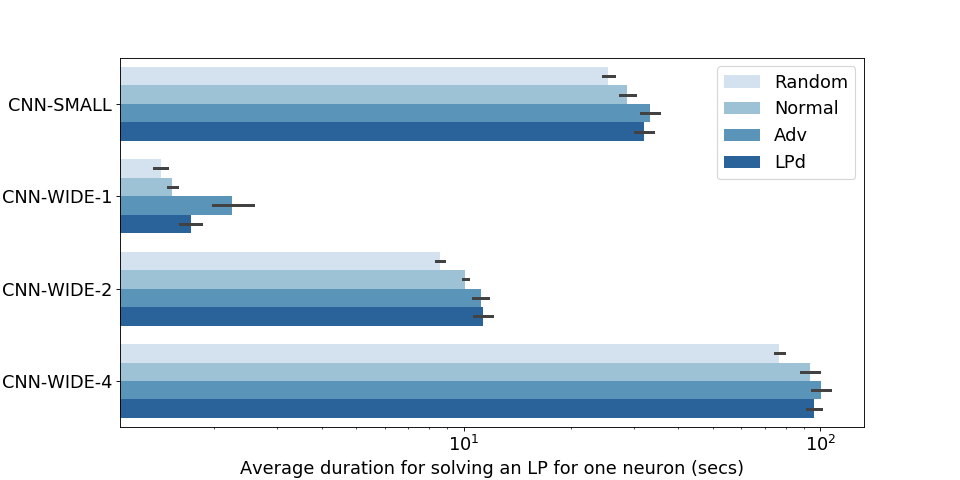}
    \caption{Average duration for solving the LPs for each model averaged over the neurons in the model and over 10 samples of  the CIFAR-10 dataset.}
    \label{fig:durations_cifar}
\end{figure}

\begin{figure}[p]
    \centering
    \includegraphics[width=0.8\textwidth]{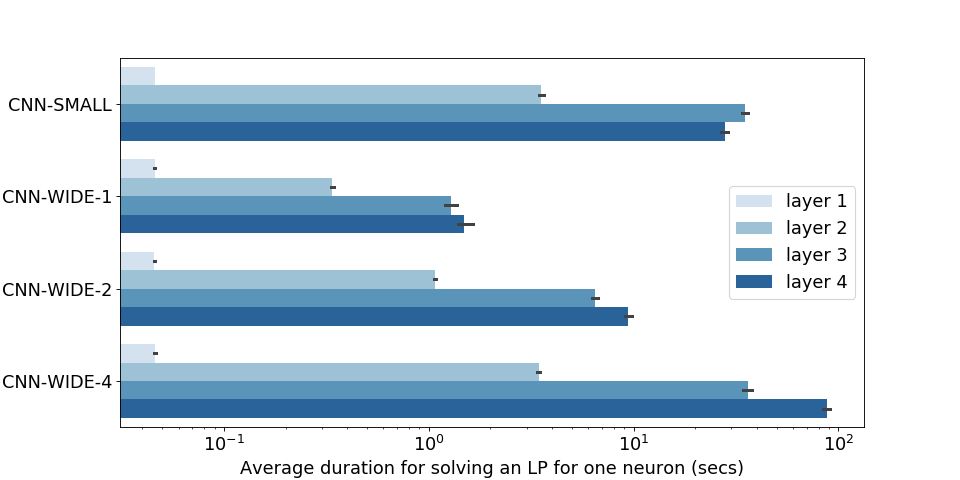}
    \caption{Average duration for solving the LPs per layer  for each model averaged over the neurons in the model and over 10 samples of  the CIFAR-10 dataset.}
    \label{fig:durations_cifar_per_layer}
\end{figure}

\section{Full Results of  Certified Bounds on the Minimum Adversarial Distortion Experiment}\label{sec:exp2_appendix}

\subsection{Implementation details}\label{sec:exp2_implementation}
In this experiment, we are interested in searching for the minimum adversarial distortion $\epsilon$, which is the $l_\infty$ radius of largest $l_\infty$ ball in which no adversarial examples can be crafted. 

An upper bound on $\epsilon$ can be calculated by using PGD in a binary search setting: given an initial guess of $\epsilon$, PGD can be used to find an adversarial example. 
If successful, divide $\epsilon$ by 2; else multiply $\epsilon$ by 2; and repeat until the change in $\epsilon$ is below some tolerance ($10^{-5}$ in our case).

Lower bounds on $\epsilon$ are calculated using \lpd{}, \lpp{} , or our \lpo{} algorithm in a binary search setting; given an initial guess of $\epsilon$, any of these algorithms can be used to check whether the network is robust within $\epsilon$-perturbation of the input. If robust, multiply $\epsilon$ by 2; else divide $\epsilon$ by 2; and repeat until the change in $\epsilon$ is below a tolerance. The tolerances used in the paper are:
\begin{itemize}
    \item  tol($\epsilon_\text{\lpd})=10^{-5}$ because \lpd{} is computationally very cheap.
    \item  tol($\epsilon_\text{\lpp}) =5\%\times\epsilon_\text{\lpd} $ because \lpp{} is computationally expensive.   
    \item  tol($\epsilon_\text{\lpo}) =5\%\times\epsilon_\text{\lpd} $ because \lpo{} is computationally expensive. 
\end{itemize}  

Since solving \lpo{} is really expensive, we find the $\epsilon$-bounds only for ten samples of the MNIST and CIFAR-10 datasets. In this experiment, both \textsc{Adv-} and \textsc{LPd-}networks are trained with an $l_\infty$ maximum allowed perturbation of 0.1 and $8/255$ on MNIST and CIFAR-10, respectively. The full results are reported in Tables~\ref{table:epsilon_bounds} and \ref{table:epsilon_bounds-cifar} respectively.

\subsection{Results}\label{sec:exp2results_appendix}
Tables~\ref{table:epsilon_bounds} and \ref{table:epsilon_bounds-cifar} both report, for ten samples of MNIST and CIFAR-10 respectively, for a wide range of networks :
\begin{enumerate}
    \item  The training mode, whether the network is trained using regular CE loss (Normal), using adversarial examples generated by PGD (\textsc{Adv}), or using the robust loss in \citet{wong2018provable} (\textsc{LPd}).
    \item Mean lower bounds on $\epsilon$ found by \lpd, \lpp, and \lpo. Note that naturally 
    $$\epsilon_\text{\lpd}\le\epsilon_\text{\lpp}\le\epsilon_\text{\lpo}$$
    \item A mean upper bound on $\epsilon$ found by PGD.
    \item The median percentage gap between PGD and the three LP-relaxed algorithms. The percentage gap is defined as
    \begin{equation}\nonumber
        \text{\%gap} = \frac{(\epsilon_{\textsc{PGD}} - \epsilon_{\textsc{LP-x}})}{\epsilon_{\textsc{PGD}}}\times 100.
    \end{equation}
    It is also easy to see that naturally, $$\%\text{gap}_{\text{\lpd}} \ge \%\text{gap}_{\text{\lpp}} \ge \%\text{gap}_{\text{\lpo}}$$
\end{enumerate}

The results of both tables show that for all networks, the certified lower bounds on $\epsilon$ using \lpd{}, \lpp{}, or \lpo{} are 1.5 to 5 times smaller than the upper bound found by PGD on MNIST, and 1.5 to 2 times smaller than the upper bound found by PGD on MNIST. This gap can also clearly be observed in Fig.~\ref{fig:gap_mnist} and Fig.~\ref{fig:gap_cifar} for MNIST and CIFAR-10, respectively.

Therefore, the improvement that we get using  \lpo{} and \lpp{} over \lpd{} is not significant and doesn't close the gap with the PGD upper bound.

\begin{table*}[htbp]
\centering
\vspace{-.5in}
\caption{Certified bounds on the minimum adversarial distortion $\epsilon$ for ten random samples from the test set of MNIST.}
\vskip 0.15in
\label{table:epsilon_bounds}
\begin{sc}
\begin{small}
\begin{adjustbox}{max width=1\textwidth}
\begin{tabular}{lcccccccc}
\toprule

\multicolumn{1}{c}{\multirow{2}[2]{*}{Network}}
&\multicolumn{1}{c}{\multirow{2}[2]{*}{\begin{tabular}[c]{@{}c@{}}Training\\ Mode\end{tabular}}}
& \multicolumn{3}{c}{\begin{tabular}{@{}c@{}}
Mean Lower Bound \\\tiny{ ($\times 10^{-3}$)}\end{tabular}}
& \multicolumn{1}{c}{\begin{tabular}{@{}c@{}}
Mean Upper Bound \\\tiny{ ($\times 10^{-3}$)}\end{tabular}}
& \multicolumn{3}{c}{\begin{tabular}[c]{@{}c@{}}Median \\Percentage Gap (\%)\end{tabular}}
\\\cmidrule(lr){3-5} \cmidrule(lr){6-6} \cmidrule(lr){7-9}

& & \multicolumn{1}{c}{\tiny\lpd{}} & \multicolumn{1}{c}{\tiny\lpp{}}& \multicolumn{1}{c}{\tiny\lpo{}} & \multicolumn{1}{c}{\tiny {PGD}}& \multicolumn{1}{c}{\tiny\lpd{}} & \multicolumn{1}{c}{\tiny\lpp{}} & \multicolumn{1}{c}{\tiny\lpo{}}\\

\midrule

CNN-small 
 & Normal  &  14.98 & 16.29  & 18.87  & 52.70  &  69.12 & 66.03  & 61.40 
\\
 & Adv  &  73.42 & 77.09  & 85.94  & 155.16  &  52.52 & 50.14  & 44.42 
\\
 & LPd  &  153.17 & 160.83  & 160.83  & 226.72  &  29.72 & 26.21  & 26.21 
\\
\midrule
CNN-Wide-1 
 & Normal  &  14.09 & 15.76  & 16.92   & 39.61  &  58.84 & 54.69  & 52.66 
\\
 & Adv  &  81.52 & 86.25  & 91.76  & 142.89  &  43.59 & 40.77  & 37.58 
\\
 & LPd  &  116.72 & 122.55  & 122.55  & 183.66  &  33.90 & 30.59  & 30.59 
\\
\midrule

CNN-Wide-2 
 & Normal  &  13.29 & 14.83  & 16.82  & 43.95  &  68.34 & 64.40  & 60.42 
\\
 & Adv  &  91.50 & 96.08  & 104.02  & 179.86  &  49.83 & 47.32  & 41.98 
\\
 & LPd  &  148.07 & 156.78  & 169.77  & 221.67  &  32.45 & 27.76  & 21.03 
\\
\midrule

CNN-Wide-4
 & Normal  &  12.84 & 14.37  & 16.45  & 47.23  &  72.23 & 68.06  & 63.06 
\\
 & Adv  &  67.64 & 72.34  & 79.90  & 178.01  &  62.72 & 59.37  & 55.17 
\\
 & LPd  &  142.30 & 149.41  & 155.23  & 217.64  &  34.92 & 31.67  & 29.34 
\\
\midrule

CNN-Wide-8 
 & Normal  &  10.82 & 11.72  & 13.35  & 47.75  &  75.49 & 71.85  & 69.36 
\\
 & Adv  &  62.57 & 67.42  & 77.66  & 181.09  &  64.45 & 62.17  & 55.57 
\\
 & LPd  &  N.A    & N.A  &   N.A  &  N.A    & N.A  &   N.A  &   N.A   
\\
\midrule

CNN-Deep-1 
 & Normal  &  15.21 & 16.78  & 19.58  & 44.79  &  66.50 & 62.04  & 55.44 
\\
 & Adv  &  94.68 & 99.41  & 100.20  & 166.38  &  39.81 & 36.80  & 35.93 
\\
 & LPd  &  136.09 & 142.89  & 142.89  & 184.23  &  22.10 & 18.20  & 18.20 
\\
\midrule

CNN-Deep-2 
 & Normal  &  6.12 & 6.42  & 8.76  & 43.32  &  84.47 & 83.69  & 78.65 
\\
 & Adv  &  102.47 & 107.60  & 112.82  & 185.70  &  39.35 & 36.32  & 36.32 
\\
 & LPd  &  N.A    & N.A  &   N.A  &  N.A    & N.A  &   N.A  &   N.A   
\\
\midrule

MLP-[9]-500 
 & Normal  &  12.64 & 13.27  & 16.84  & 45.84  &  74.57 & 73.30  & 63.14 
\\
 & Adv  &  20.77 & 21.99  & 28.50  & 129.45  &  84.60 & 83.83  & 79.05 
\\
 & LPd  &  N.A    & N.A  &   N.A  &  N.A    & N.A  &   N.A  &   N.A   
\\
\midrule

MLP-[9]-100 
 & Normal  &  11.35 & 11.92  & 14.23  & 31.37  &  64.13 & 62.34  & 57.03 
\\
 & Adv  &  19.41 & 21.12  & 25.41  & 94.57  &  75.15 & 71.42  & 63.96 
\\
 & LPd  &  68.25 & 71.51  & 73.96  & 103.87  &  29.79 & 26.28  & 26.28 
\\
\midrule

MLP-[2]-100 
 & Normal  &  14.19 & 15.11  & 15.83  & 28.14  &  52.66 & 47.82  & 45.56 
\\
 & Adv  &  41.68 & 43.76  & 43.76  & 81.22  &  36.23 & 33.04  & 33.04 
\\
 & LPd  &  81.50 & 85.33  & 85.33  & 118.10  &  25.01 & 21.26  & 21.26 
\\
\bottomrule
\end{tabular}
\end{adjustbox}
\end{small}
\end{sc}
\vskip -1in
\end{table*}

\begin{table*}[htbp]
\centering
\caption{Certified bounds on the minimum adversarial distortion $\epsilon$ for ten random samples from the test set of CIFAR-10.}
\vskip 0.15in
\label{table:epsilon_bounds-cifar}
\begin{sc}
\begin{small}
\begin{adjustbox}{max width=1\textwidth}
\begin{tabular}{lcccccccc}
\toprule

\multicolumn{1}{c}{\multirow{2}[2]{*}{Network}}
&\multicolumn{1}{c}{\multirow{2}[2]{*}{\begin{tabular}[c]{@{}c@{}}Training\\ Mode\end{tabular}}}
& \multicolumn{3}{c}{\begin{tabular}{@{}c@{}}
Mean Lower Bound \\\tiny{ ($\times 10^{-3}$)}\end{tabular}}
& \multicolumn{1}{c}{\begin{tabular}{@{}c@{}}
Mean Upper Bound \\\tiny{ ($\times 10^{-3}$)}\end{tabular}}
& \multicolumn{3}{c}{\begin{tabular}[c]{@{}c@{}}Median \\Percentage Gap (\%)\end{tabular}}
\\\cmidrule(lr){3-5} \cmidrule(lr){6-6} \cmidrule(lr){7-9}

& & \multicolumn{1}{c}{\tiny\lpd{}} & \multicolumn{1}{c}{\tiny\lpp{}}& \multicolumn{1}{c}{\tiny\lpo{}} & \multicolumn{1}{c}{\tiny {PGD}}& \multicolumn{1}{c}{\tiny\lpd{}} & \multicolumn{1}{c}{\tiny\lpp{}} & \multicolumn{1}{c}{\tiny\lpo{}}\\

\midrule
CNN-small 
 & Normal  &  7.48 & 7.86  & 8.46   & 20.13 &  49.40 & 46.87  & 44.23  
\\
 & Adv  &  24.33 & 26.53  & 27.59   & 37.90 &  34.50 & 24.67  & 24.67  
\\
 & LPd  &  67.34 & 72.27  & 77.84   & 157.01 &  52.94 & 48.13  & 43.13  
\\
\midrule
CNN-Wide-1
 & Normal  &  6.97 & 7.32  & 7.56   & 14.57 & 43.01 & 40.16  & 39.39  
\\
 & Adv  &  58.52 & 63.26  & 67.84   & 115.47 &  49.83 & 46.63  & 42.15  
\\
 & LPd  &  57.03 & 62.51  & 65.83   & 122.00 &  41.22 & 38.29  & 32.40  
\\
\midrule

CNN-Wide-2
 & Normal  &  8.27 & 8.86  &  9.46   & 22.16 & 58.66 & 54.53  & 52.46  
\\
 & Adv  &  42.05 & 45.99  & 49.09   & 74.13 &  35.10 & 29.85  & 25.54  
\\
 & LPd  &  73.19 & 81.75  & 87.38   & 157.03 &  47.64 & 39.78  & 39.78  
\\
\midrule

CNN-Wide-4 
 & Normal  &  4.14 & 4.35  & 4.63   & 10.97 &  40.27 & 37.28  & 33.03  
\\
 & Adv  &  29.11 & 32.84  & 35.45   & 71.57 &  50.59 & 44.21  & 43.18  
\\
 & LPd  &  41.62 & 47.17  & 48.51   & 104.49 &  45.19 & 39.67  & 39.67  
\\
\bottomrule

\end{tabular}
\end{adjustbox}
\end{small}
\end{sc}
\vskip -0.1in
\end{table*}

\begin{figure}[t]
    \centering
    \includegraphics[width=0.7\textwidth]{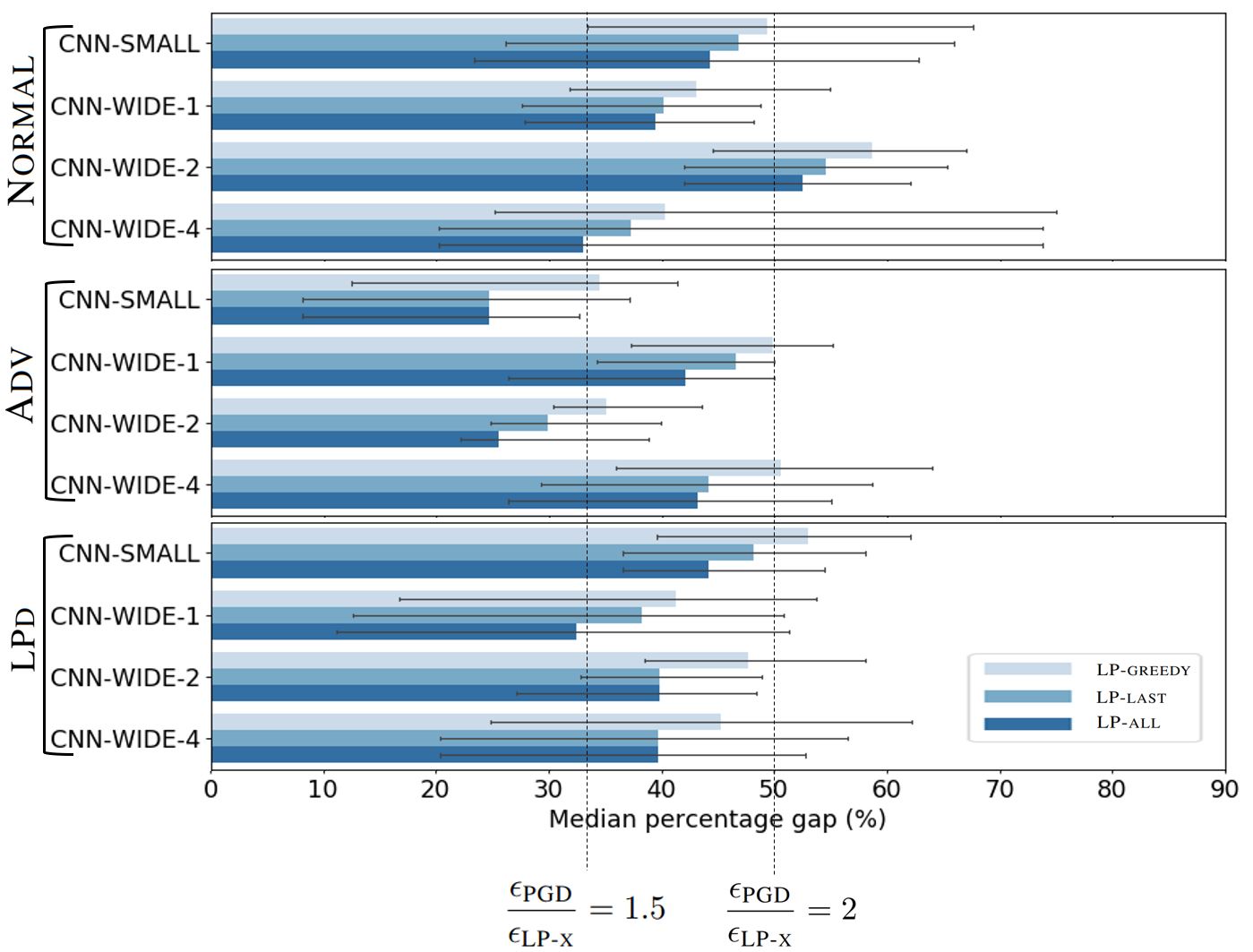}
    \caption{The median percentage gap of minimum adversarial distortion for CIFAR-10, in the same format as Fig.~\ref{fig:gap_mnist}. For more details, please refer to Table~\ref{table:epsilon_bounds-cifar} in Appendix~\ref{sec:exp2results_appendix}.}
    \label{fig:gap_cifar}
\end{figure}

\section{Results on Randomly Initialized Networks}
\label{sec:appendix-random-networks}
In this section, we report additional results for the $\epsilon$-search experiment because they might be of interest as a comparison.
The results are reported in Table~\ref{table:epsilon_bounds-random-initialized-nets}. The results are in accordance to what was discussed in Seciton~\ref{sec:exp2} i.e. for all networks and both datasets, the certified lower bounds on $\epsilon$ using \lpd{}, \lpp{}, or \lpo{} are 2 to 3 times smaller than the upper bound found by PGD. Furthermore, the improvement that we get using  \lpo{} and \lpp{} over \lpd{} is not significant and doesn't close the gap with the PGD upper bound.

\begin{table*}[ht!]
\centering
\caption{Certified bounds on the minimum adversarial distortion $\epsilon$  for ten random samples from the test set of MNIST and CIFAR-10 on randomly initialized networks (no training).}
\vskip 0.15in
\label{table:epsilon_bounds-random-initialized-nets}
\begin{sc}
\begin{small}
\begin{adjustbox}{max width=1\textwidth}
\begin{tabular}{lcccccccc}
\toprule

\multicolumn{1}{c}{\multirow{2}[2]{*}{Network}}
&\multicolumn{1}{c}{\multirow{2}[2]{*}{\begin{tabular}[c]{@{}c@{}}Training\\ Mode\end{tabular}}}
& \multicolumn{3}{c}{\begin{tabular}{@{}c@{}}
Mean Lower Bound \\\tiny{ ($\times 10^{-3}$)}\end{tabular}}
& \multicolumn{1}{c}{\begin{tabular}{@{}c@{}}
Mean Upper Bound \\\tiny{ ($\times 10^{-3}$)}\end{tabular}}
& \multicolumn{3}{c}{\begin{tabular}[c]{@{}c@{}}Median \\Percentage Gap (\%)\end{tabular}}
\\\cmidrule(lr){3-5} \cmidrule(lr){6-6} \cmidrule(lr){7-9}

& & \multicolumn{1}{c}{\tiny\lpd{}} & \multicolumn{1}{c}{\tiny\lpp{}}& \multicolumn{1}{c}{\tiny\lpo{}} & \multicolumn{1}{c}{\tiny {PGD}}& \multicolumn{1}{c}{\tiny\lpd{}} & \multicolumn{1}{c}{\tiny\lpp{}} & \multicolumn{1}{c}{\tiny\lpo{}}\\

\midrule

\textbf{MNIST}\\
CNN-small 
& Random  &  5.79 & 6.08  & 6.25  & 14.86 &  51.37 & 48.94  & 48.94  \\

CNN-Wide-1 
& Random  &  10.42 & 10.94  & 11.98   & 33.77  &  67.09 & 65.45  & 62.16  \\

CNN-Wide-2 
& Random  &  8.12 & 8.53  & 9.34   & 29.43  &  72.54 & 71.17  & 68.42  \\

CNN-Wide-4 
& Random  &  8.68 & 9.12  & 9.99   & 45.26  &  78.65 & 77.59  & 75.45  \\

CNN-Deep-1 
& Random  &  11.12 & 11.81  & 12.79   & 42.28  &  72.76 & 71.40  & 68.67  \\

MLP-[2]-100 
& Random  &  4.69 & 5.16  & 5.25   & 15.71  &  64.85 & 58.53  & 57.83  \\
 \midrule

 \textbf{CIFAR-10} \\
CNN-small
& Random  &  8.77 & 10.01  & 10.13   & 24.50 &  62.61 & 57.04  & 57.04  \\

CNN-Wide-1
& Random  &  5.61 & 5.89  & 6.09   & 11.33 &  45.27 & 42.53  & 41.46  \\

CNN-Wide-2
& Random  &  2.83 & 3.31  &  3.31  & 6.24 &  50.60 & 46.13  & 46.13  \\

CNN-Wide-4 
& Random  &  8.93 & 8.52  & 9.00   & 28.69 &  69.63 & 68.11  & 68.11  \\

\bottomrule
\end{tabular}
\end{adjustbox}
\end{small}
\end{sc}
\end{table*}

\end{document}